\newcommand{\G}{\mathcal{G}}
\newcommand{\N}{\mathcal{N}}
\newcommand{\R}{\mathbb{R}}
\newcommand{\E}{\mathbb{E}}
\newcommand{\vct}[1]{\bm{#1}}
\newcommand{\mtx}[1]{\bm{#1}}
\newcommand{\vx}{\vct{x}}
\newcommand{\vy}{\vct{y}}
\newcommand{\vz}{\vct{z}}
\newcommand{\vv}{\vct{v}}
\newcommand{\va}{\vct{a}}
\newcommand{\vw}{\vct{w}}
\newcommand{\vg}{\vct{g}}
\newcommand{\vphi}{\vct{\phi}}
\newcommand{\mPhi}{\mtx{\Phi}}
\newcommand{\mX}{\mtx{X}}
\newcommand{\T}{\mathcal{T}}
\newcommand{\W}{\mathcal{W}}
\renewcommand{\Pr}{\mathbb{P}}
\newcommand{\beq}{\begin{equation}}
\newcommand{\eeq}{\end{equation}}
\newcommand{\fll}{\frac{\lambda_1}{\sqrt{l}}}
\newcommand{\fla}{\frac{\lambda_1}{\sqrt{\alpha}}}
\begin{document}

\title{Classification with Sparse Overlapping Groups}
\author{\name Nikhil S. Rao \email nrao2@wisc.edu \\ 
\name Robert D. Nowak \email nowak@ece.wisc.edu \\
       \addr Department of Electrical and Computer Engineering\\
       University of Wisconsin-Madison\\
       \AND
       \name Christopher R. Cox \email crcox@wisc.edu \\
       \name Timothy T. Rogers \email ttrogers@wisc.edu \\
       \addr Department of Psychology\\
       University of Wisconsin-Madison}

\editor{}

\maketitle

\begin{abstract}
Classification with a sparsity constraint on the solution plays a central role in many high dimensional machine learning applications. In some cases, the features can be grouped together, so that entire subsets of features can be selected or not selected. In many applications, however, this can be too restrictive. In this paper, we are interested in a less restrictive form of structured sparse feature selection: we assume that while features can be grouped according to some notion of similarity, not all features in a group need be selected for the task at hand. When the groups are comprised of disjoint sets of features, this is sometimes referred to as the ``sparse group'' lasso, and it allows for working with a richer class of models than traditional group lasso methods.  Our framework generalizes conventional sparse group lasso further by allowing for overlapping groups, an additional flexiblity needed in many applications and one that presents further challenges. The main contribution of this paper is a new procedure called {\em Sparse Overlapping Group (SOG) lasso}, a convex optimization program that automatically selects similar features for classification in high dimensions. We establish model selection error bounds for SOGlasso  classification problems under a fairly general setting.  In particular, the error bounds are the first such results for classification using the sparse group lasso.  Furthermore, the general SOGlasso bound specializes to results for the lasso and the group lasso, some known and some new. The SOGlasso is motivated by multi-subject fMRI studies in which functional activity is classified using brain voxels as features, source localization problems in Magnetoencephalography (MEG), and analyzing  gene activation patterns in microarray data analysis.  Experiments with real and synthetic data demonstrate the advantages of SOGlasso compared to the lasso and group lasso.  
\end{abstract}


\section{Introduction}
\label{sec:intro}

Binary classification plays a major role in many machine learning and signal processing applications. In many modern applications where the number of features far exceeds the number of samples, one typically wishes to select only a few features, meaning only a few coefficients are non zero in the solution \footnote{a zero coefficient in the solution implies the corresponding feature is not selected}. This corresponds to the case of searching for sparse solutions. The notion of sparsity prevents over-fitting and leads to more interpretable solutions in high dimensional machine learning, and has been extensively studied in \citep{bachlogistic, plan1bit, Mest, buneahonest}, among others. 

In many applications, we wish to impose structure on the sparsity pattern of the coefficients recovered.  In particular, often it is known a priori that the optimal sparsity pattern will tend to involve clusters or groups of coefficients, corresponding to pre-existing groups of features.  The form of the groups is known, but the subset of groups that is relevant to the classification task at hand is unknown.  This prior knowledge reduces the space of possible sparse coefficients thereby potentially leading to better results than simple lasso methods. In such cases, the group lasso, with or without overlapping groups \citep{yuanlin} is used to recover the coefficients. The group lasso forces all the coefficients in a  group to be active at once: if a coefficient is selected for the task at hand, then all the coefficients in that group are selected. When the groups overlap, a modification of the penalty allows one to recover coefficients that can be expressed as a union of groups \citep{jacob, latent}.

While the group lasso has enjoyed tremendous success in high dimensional feature selection applications,  we are interested in a much less restrictive form of structured feature selection for classification. Suppose that the features can be arranged into (possibly) \emph{overlapping} groups based on some notion of similarity, depending on the application. 
The notion of similarity can be loosely defined, and it is used to reflect the prior knowledge that if a feature is relevant for the learning task at hand, then features similar to it may also be relevant.  It is known that while many features may be similar to each other, not all similar features are relevant for the specific learning problem. We propose a new procedure called Sparse Overlapping Group (SOG) lasso to reflect this form of structured sparsity.

As an example, consider the task of identifying relevant genes that play a role in predicting a disease. Genes are organized into pathways \citep{pathway}, but not every gene in a pathway might be relevant for prediction. At the same time, it is reasonable to assume that if a gene from a particular pathway is relevant, then other genes from the same pathway may also be relevant. In such applications, the group lasso may be too constraining while the lasso may be too under-constrained.


\subsection{Model and  Results}

We first present the main results of this paper at a glance. Uppercase and lowercase bold letters indicate matrices and vectors respectively. We assume a sparse learning framework, with a feature matrix $\mPhi \in \R^{n \times p}, ~\ n \ll p$. We assume each element of $\mPhi$ to be distributed as a standard Gaussian random variable. Assuming the data to arise from a Gaussian distribution simplifies analysis, and allows us to leverage tools from existing literature. Later in the paper, we will allow for correlations in the features as well, reflecting a more realistic setting.  In the results that follow, $C$ is a positive constant, the value of which can be different from one result to the other. 

We focus on binary  classification settings, and assume that each observation $\vy_i \in \{-1, +1 \}, ~\ i = 1,2,\ldots, n$ are randomly distributed according to the model \citep{plan1bit}
\begin{equation}
\label{ymodel}
\mathbb{E}[\vy_i | \vphi_i] = f(\langle \vphi_{i} , \vx^\star \rangle), 
\end{equation}
where $\vphi_i$ is the $i^{th}$ row of $\mPhi$ corresponding to the features of data $i$,  $\vx^\star$ is the true coefficient vector of interest, and $f$ is a function mapping from $\R$ to $[-1,+1]$.  The argument of $f$ is the Euclidean inner product: $\langle \vphi_{i} , \vx^\star \rangle  = \vphi_i^T \vx^\star$. The function $f$ need not be known precisely. We only assume that it satisfies for $g \sim \N(0,1)$
\begin{equation}
\label{defe}
\mathbb{E}[f(g)g] > 0 \ .
\end{equation}

Without loss of generality, we assume $\vx^\star$ to have unit Euclidean norm, since the normalization can be absorbed into the function $f$.
The value 
\begin{equation}
\label{deflam}
\sigma_f := 1/\mathbb{E}[f(g)g]
\end{equation}
 quantifies the strength of the correlation between the labels $y_i$ and inner products $\langle \vphi_i,\vx^\star\rangle$. It plays the role of the noise level and will appear in
our error bounds, but it need not be known to compute our proposed estimator of $\vx^\star$.

This set-up allows for the consideration of a very general setting for classification, and subsumes many interesting cases. For example, the logistic model
\begin{equation}
\label{ymodel_logit}
\Pr \left( \vy_{i} = 1 \right) = \frac{\exp(\beta\, \langle \vphi_{i} , \vx^\star \rangle)}{1 + \exp(\beta\, \langle \vphi_{i} , \vx^\star \rangle)} \ , 
\end{equation}
is equivalent to 
\[
f(\langle \vphi_i, \vx^\star \rangle) = \mbox{tanh}(\beta \langle \vphi_i, \vx^\star  \rangle),
\]
for any constant $\beta>0$, yielding a corresponding \footnote{ See Corollary 3.3 in \citep{plan1bit} for a derivation}
\[\sigma_f  = \frac{2}{\beta} \mathbb{E} [\mbox{sech}^2 (\beta \vg/2)]^{-1} \leq \frac{6}{\min\{\beta,1\}}.\]
The constant $\beta$ accounts  for the fact that we consider $\| \vx^\star \| = 1$. Indeed, for a general vector $\vz$, we can write $\langle \vphi_i, \vz \rangle  = \beta \langle \vphi_i, \frac{\vz}{\| \vz \|}   \rangle $ , where $\beta = \| \vz \|$. The second argument in the inner product is now a vector of unit norm, and it gives rise to the expression in (\ref{ymodel_logit}).

The framework also allows for the quantized 1-bit measurement model
\[
f(\langle \vphi_{i} , \vx^\star \rangle) = \mbox{sign}(\langle \vphi_{i} , \vx^\star \rangle) \ ,
\]
which can be seen as the limiting case of the logistic model as $\beta \rightarrow \infty$, and with $\sigma_f = \sqrt{\frac{\pi}{2}}$ 

We work with this general formulation since for classification problems of interest, the logistic (or any other) model may be chosen somewhat arbitrarily.  Existing theoretical results often apply only to the chosen model (for example, \citep{Mest}).  Our estimator only requires the observations are correlated with the features, in the sense of (\ref{deflam}). In any such case, the underlying form of $f$ need not be known to compute our estimator of $\vx^\star$ and will enter in the error bounds only through $\sigma_f$.

We are interested in the following form of structured sparsity.  Assume that the features can be organized into $K$ \emph{possibly overlapping} groups, each consisting of $L$ features, based on a user-defined measure of similarity, depending on the application.  Moreover, assume that if a certain feature is relevant for the learning task at hand, then features similar to it may also be relevant.  Note that we assume groups of equal size $L$ for convenience.  It is easy to relax this assumption.   These assumptions suggest a structured pattern of sparsity in the coefficients wherein a subset of the groups are relevant to the learning task, and within the relevant groups a subset of the features are selected.  In other words, $\vx^\star \in \R^{ p}$ has the following structure:
\begin{itemize}
\item its support is localized to a union of a subset of the groups, and
\item its support is localized to a sparse subset within each such group
\end{itemize}

Armed with these preliminaries, we state the theoretical sample complexity bounds proved later in the paper. 

\vspace{2mm}

\emph{If $\vx^\star \in \R^p$ has $k \leq K$ non-zero groups and $l \leq L$ coefficients non-zero within each non-zero group, then $\mathcal{O}(\sigma_f^2 \, k \left[ \log(\frac{K}{k} ) + l \log(\frac{L}{l}) + l \right] )$ independent Gaussian measurements of the form (\ref{ymodel}) are sufficient to accurately estimate $\vx^\star$ by solving a convex program.}
\vspace{2mm}

The statement above merits further explanation. We show in Lemma \ref{mwnonconvset}  (Section 4.1) via a combinatorial argument that to estimate any vector with parameters $k, K , l,  L$ as stated above, $\mathcal{O}(\sigma_f^2k \left[ \log(\frac{K}{k} ) + l \log(\frac{L}{l}) + l \right] ) $ samples would suffice. However, looking for vectors with these properties amounts to solving a non-convex program. When the groups do not overlap, we show that the solution to a \emph{convex} program also succeeds in accurately estimating $\vx^\star$ using the same number of measurements.  When the groups overlap, we show that the measurement bound holds with an additional factor of $R^2$, where $R\geq 1$ is the maximum number of groups that contain any one feature. In most applications of interest (e.g., the fMRI example discussed below), $R$ is small. Nonetheless, we also show that no matter how many groups a particular coefficient belongs to, the sample complexity of our proposed estimator is never greater than that of the standard lasso or the overlapping group lasso. These statements will be made more precise in the sequel.


\subsection{Motivation: The SOGlasso for Multitask Learning}
\label{sec:mtl}
The SOG lasso is motivated in part by multitask learning applications. The group lasso is a commonly used tool in multitask learning, and it encourages the same set of features to be selected across all tasks. We wish to focus on a less restrictive version of multitask learning, where the main idea is to encourage selection of features that are similar, but not identical, across tasks. This is accomplished by defining subsets of similar features and searching for solutions that select only a few subsets (common across tasks) and a sparse number of features within each subset (possibly different across tasks). Figure \ref{fig:spatterns} shows an example of the patterns that typically arise in sparse multitask learning applications, along with the one we are interested in. 

\begin{figure}[!h]
\centering
\subfigure[Sparse]{
\includegraphics[width = 20mm, height = 29mm]{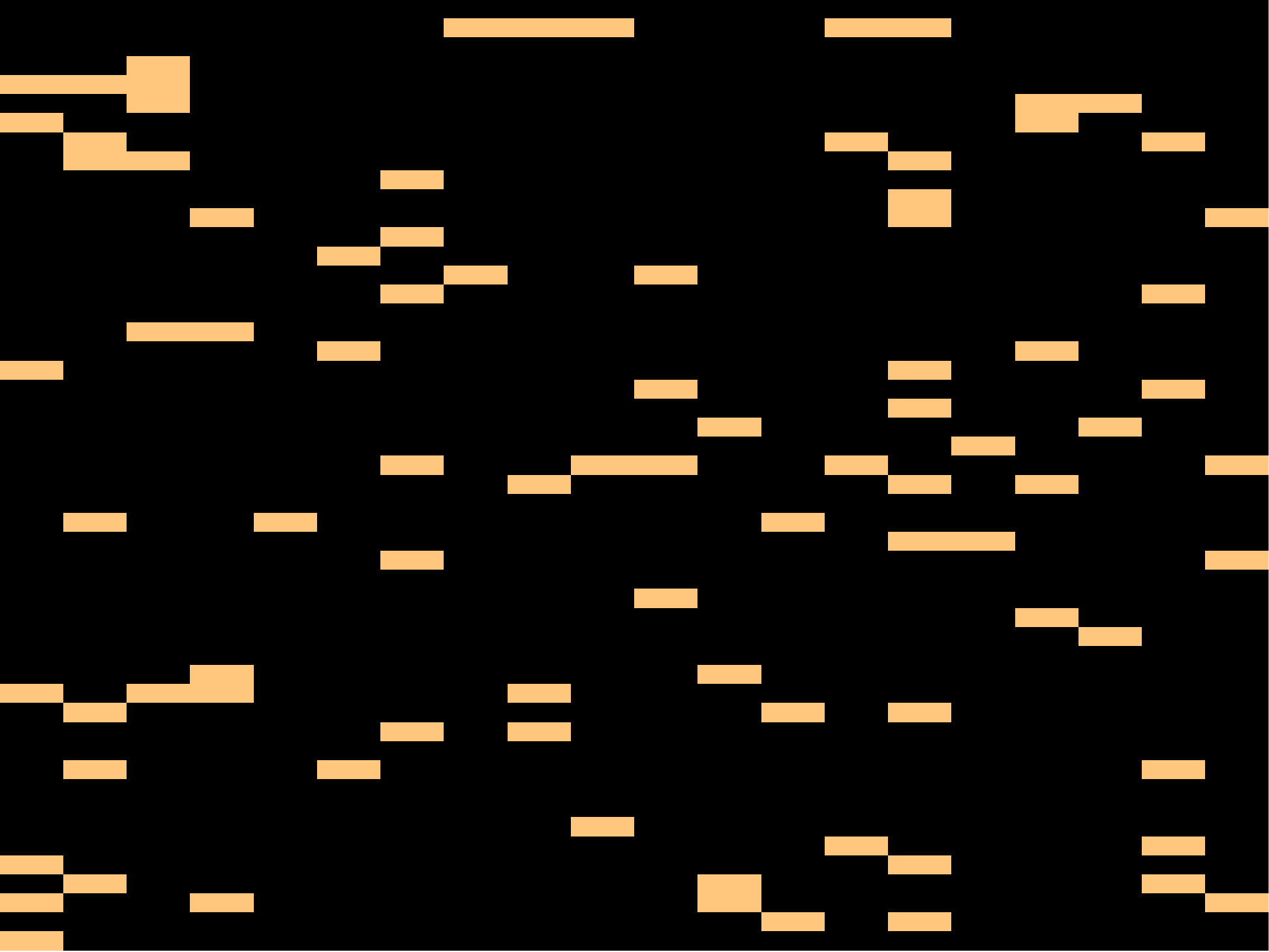}
\label{fig:lpattern}}
\qquad
\subfigure[Group sparse ]{
\includegraphics[width = 20mm, height = 29mm]{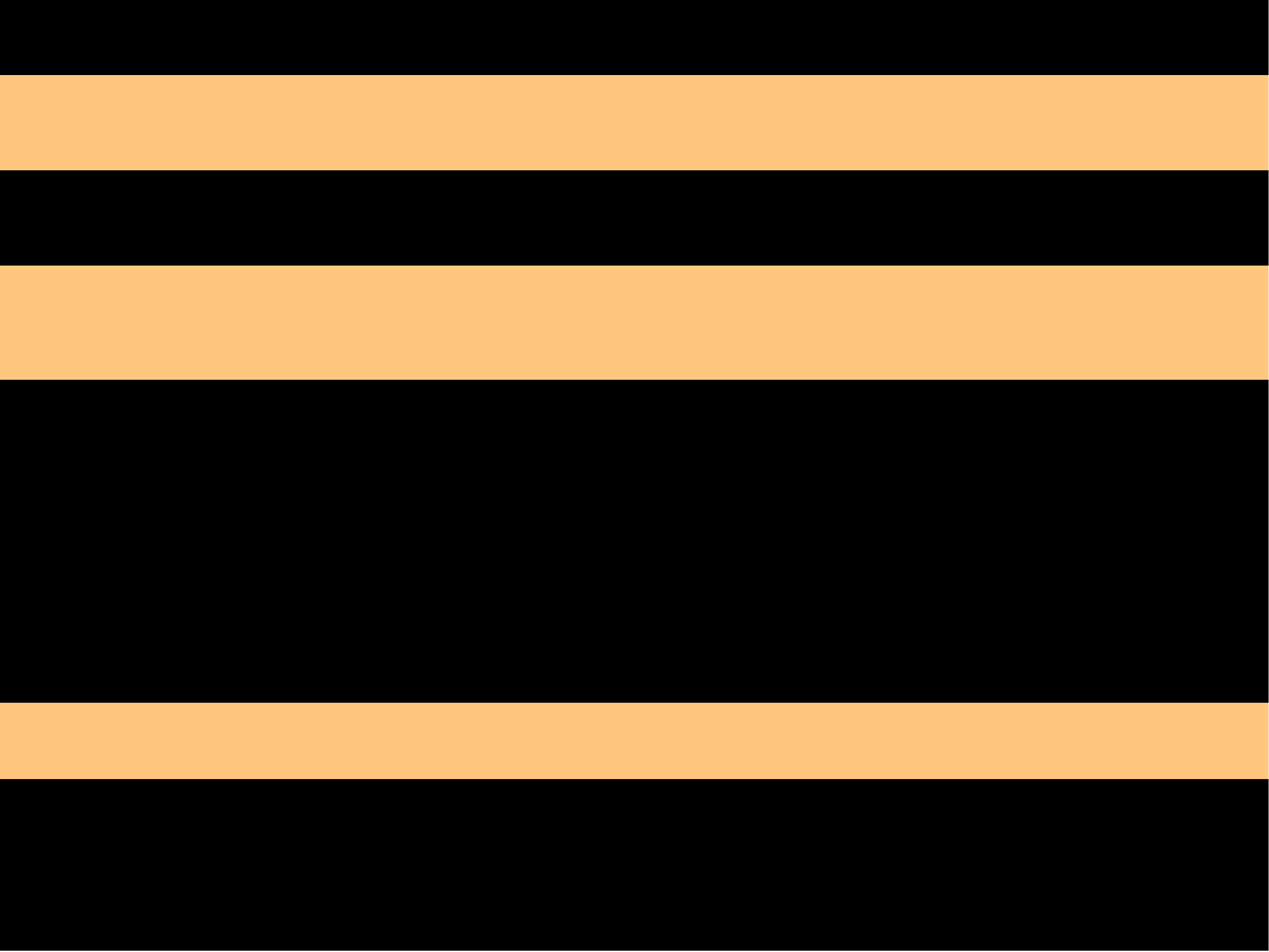}
\label{fig:glpattern}}
\qquad
\subfigure[Group sparse {\em plus} sparse ]{
\includegraphics[width = 20mm, height = 29mm]{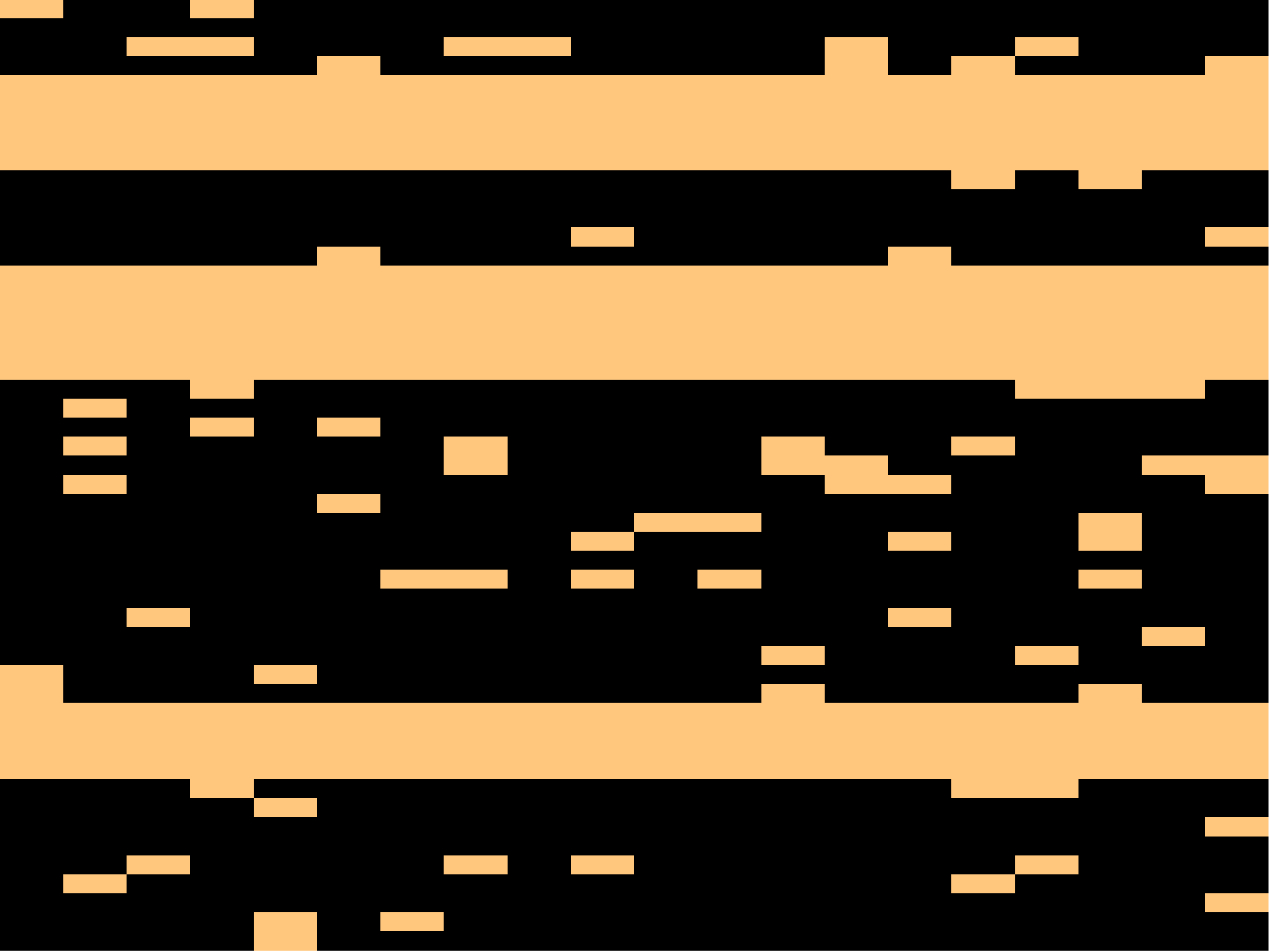}
\label{fig:dirtypattern}}
\qquad
\subfigure[Group sparse {\em and} sparse ]{
\includegraphics[width = 20mm, height = 29mm]{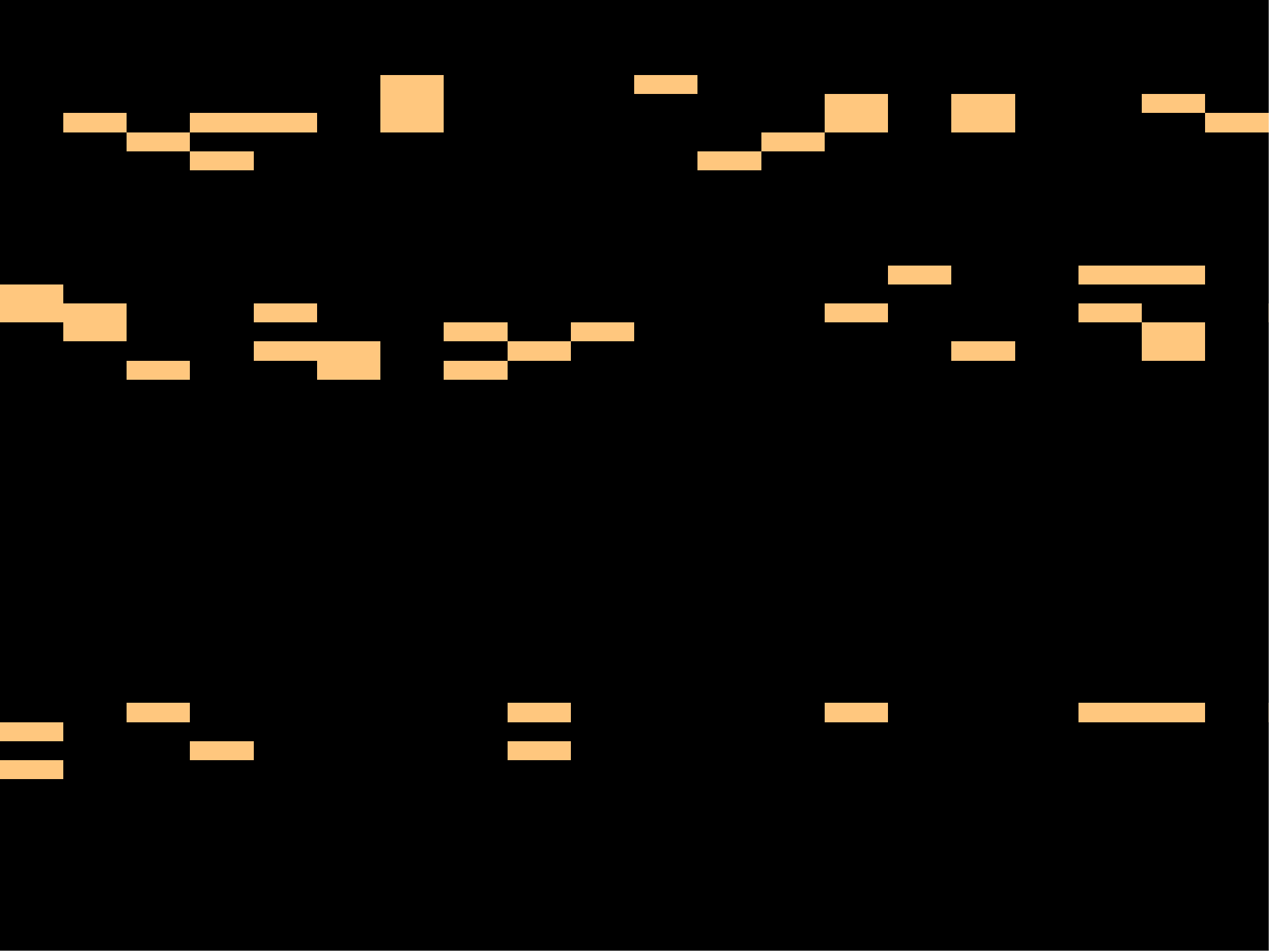}
\label{fig:sglpattern}}
\caption{ A comparison of different sparsity patterns in the multitask learning setting. Figure \subref{fig:lpattern} shows a standard sparsity pattern. An example of group sparse patterns promoted by Glasso \citep{yuanlin} is shown in Figure \subref{fig:glpattern}. In Figure \subref{fig:dirtypattern}, we show the patterns considered in \citep{dirty}. Finally, in Figure \subref{fig:sglpattern}, we show the patterns we are interested in this paper. The groups are sets of rows of the matrix, and can overlap with each other}
\label{fig:spatterns}
\end{figure}

A major application that we are motivated by is the analysis of multi-subject fMRI data, where the goal is to predict a cognitive state from measured neural activity using voxels as features. Because brains vary in size and shape, neural structures can be aligned only crudely. Moreover, neural codes can vary somewhat across individuals \citep{feredoes}. Thus, neuroanatomy provides only an approximate guide as to where relevant information is located across individuals: a voxel useful for prediction in one participant suggests the general anatomical neighborhood where useful voxels may be found, but not the precise voxel. Past work in inferring sparsity patterns across subjects has involved the use of groupwise regularization \citep{heskesgroupwise}, using the logistic lasso to infer sparsity patterns without taking into account the relationships across different subjects \citep{logitbrain}, or using the elastic net penalty to account for groupings among coefficients \citep{rishsparse}. These methods do not exclusively take into account both the common macrostructure and the differences in microstructure across brains, and the SOGlasso allows one to model both the commonalities and the differences across brains. Figure \ref{soslassobrain} sheds light on the motivation, and the grouping of voxels across brains into overlapping groups

\begin{figure}
\centering
\includegraphics[width = 120mm, height = 40mm]{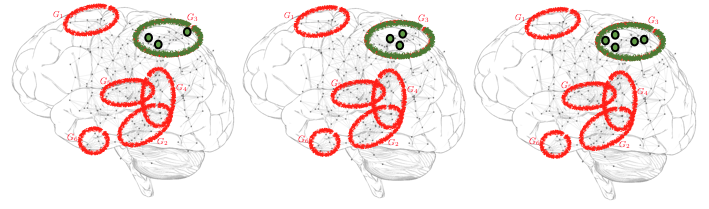}
\caption{SOGlasso for fMRI inference. The figure shows three brains, and voxels in a particular anatomical region are grouped together, across all individuals (red and green ellipses). For example, the green ellipse in the brains represents a single group. The groups denote anatomically similar regions in the brain that may be co-activated. However, within activated regions, the exact location and number of voxels may differ, as seen from the green spots. }
\label{soslassobrain}
\end{figure}


\subsection{Our Contributions}
In this paper, we consider binary classification with a constraint on the structure of the sparsity pattern of the coefficients. We assume that the coefficients can be arranged (according to a predefined notion of similarity) into (overlapping) groups.  Not only are only a few groups selected, but the selected groups themselves are also sparse. In this sense, our constraint can be seen as an extension of sparse group lasso \citep{sgl} for overlapping groups where the sparsity pattern lies in a union of groups. We are mainly interested in classification problems, but the method can also be applied to regression settings, by making an appropriate change in the loss function of course. We consider a union-of-groups formulation as in \citep{jacob}, but with an additional sparsity constraint on the selected groups. To this end, we analyze the Sparse Overlapping Sets (SOG) lasso, where the overlapping groups might correspond to coefficients of features arbitrarily grouped according to the notion of similarity.  We also consider a very general classification setting, and do not make restrictive assumptions on the observation model. 

We introduce a constraint that promotes sparsity patterns that can be expressed as a union of sparsely activated groups. We show that the constraint is a tight convex relaxation of the set of coefficients having the sparsity pattern we are interested in. The main contribution of this paper is a theoretical analysis of the model selection consistency of the SOGlasso estimator, under a very general binary classification setting. Based on certain parameter values, our method reduces to other known cases of penalization for sparse high dimensional recovery. Specifically, under the logistic regression model,  our method generalizes the group lasso \citep{meierglassologistic, jacob}, and also extends to handle groups that can arbitrarily overlap with each other. We also recover results for the lasso for logistic regression \citep{buneahonest, Mest, plan1bit, bachlogistic}.  In this sense, our work unifies the lasso, the group lasso as well as the sparse group lasso for to handle overlapping groups.  At the same time, our methods apply to settings beyond the logistic regression, to include a far richer class of models. To the best of our knowledge, this is the first paper that provides such a unified theory and sample complexity bounds for all these methods. 

In the case of linear regression and multitask learning, the authors in \citep{sprechmann, chilasso},  consider a similar situation with non overlapping subsets of features. 
We assume that the features can arbitrarily overlap. When the groups overlap, the methods mentioned above suffer from a drawback: entire groups are set to zero, in effect zeroing out many coefficients that might be relevant to the tasks at hand. This has undesirable effects in many applications of interest, and the authors in \citep{jacob} propose a version of the group lasso to circumvent this issue.

We also test our regularizer on both toy and real datasets. Our experiments reinforce our theoretical results, and demonstrate the advantages of the SOGlasso over standard lasso and group lasso methods, when the features can indeed be grouped according to some notion of similarity. We show that the SOGlasso is especially useful in multitask Functional Magnetic Resonance Imaging (fMRI) applications, and gene selection applications in computational biology. 

To summarize, the main contributions of this paper are the following:

\begin{enumerate}
\item {\bfseries New regularizers for structured sparsity:} We propose the Sparse Overlapping Group (SOG) lasso, a convex optimization problem that encourages the selection of coefficients that are both within-and across- group sparse. The groups can arbitrarily overlap, and the pattern obtained can be represented as a union of a small number of groups. This generalizes other known methods, and provides a common regularizer that can be used for any structured sparse problem with two levels of hierarchy \footnote{Further levels can also be added as in \citep{bachhierarchical}, but that is beyond the scope of this paper.}: groups at a higher level, and singletons at the lower level.
\item{\bfseries New theory for classification with structured sparsity:}  We provide a theoretical analysis for the model selection consistency of the SOGlasso estimator, for binary classification. The general results we obtain specialize to the lasso, the group lasso (with or without overlapping groups) and the sparse group lasso. We also make minimal assumptions on the measurement model, allowing for theory that is applicable in a wide range of classification settings. We obtain a bound on the sample complexity of the SOGlasso under both independent and correlated Gaussian measurement designs, and this in turn also translates to corresponding results for the lasso and the group lasso. In this sense, we obtain a unified theory for performing structured feature selection in high dimensions. 
\item{\bfseries Applications:}  A major motivating application for this work is the analysis of multi-subject fMRI data. We apply the SOGlasso to fMRI data, and show that the results we obtain not only yield lower errors on hold-out test sets compared to previous methods, but also lead to more interpretable results. To show it's applicability to other domains, we also apply the method to breast cancer data to detect genes that are relevant in the prediction of metastasis in breast cancer tumors. 
\end{enumerate}

In \citep{soslassonips}, the authors introduced the SOGlasso problem emphasizing the motivating fMRI application. The authors also derived theoretical consistency results under the \emph{linear regression} setting. This paper gives further embellishes the reasons for considering the SOGlasso penalty, and derives consistency results for \emph{classification}. We also present  some novel applications in computational biology where similar notions can be applied to achieve significant gains over existing methods. Our work here presents novel results for the group lasso with potentially overlapping groups as well as the sparse group lasso for classification settings, as special cases of the theory we develop.


\subsection{Past Work}
When the groups of features do not overlap, \citep{sgl} proposed the Sparse Group Lasso (SGL) to recover coefficients that are both within- and across- group sparse. SGL and its variants for multitask learning has found applications in character recognition \citep{chilasso, sprechmann}, climate and oceanology applications \citep{sglclimate}, and in gene selection in computational biology \citep{sgl}. In \citep{bachhierarchical}, the authors extended the method to handle tree structured sparsity patterns, and showed that the resulting optimization problem admits an efficient implementation in terms of proximal point operators. Along related lines, the exclusive lasso \citep{exlasso} can be used when it is explicitly known that features in certain groups are negatively correlated. When the groups overlap, \citep{jacob, latent} proposed a modification of the group lasso penalty so that the resulting coefficients can be expressed as a union of groups. They proposed a replication-based strategy for solving the problem, which has since found application in computational biology \citep{jacob} and image processing \citep{nricip11}, among others. The authors in \citep{glopridu} proposed a method to solve the same problem in a primal-dual framework, that does not require coefficient replication. Risk bounds for problems with structured sparsity inducing penalties (including the lasso and group lasso) were obtained by \citep{maurer} using Rademacher complexities.   Sample complexity bounds for model selection in linear regression using the group lasso (with possibly overlapping groups) also exist \citep{nraistats}. The results naturally hold for the standard group lasso \citep{yuanlin}, since non overlapping groups are a special case.  For the non overlapping case, \citep{stojnicblock} characterized the sample complexity of the group lasso, and also gave a semidefinite program to solve the group lasso under a block sparsity setting. 

For logistic regression, \citep{bachlogistic, buneahonest, Mest, plan1bit} and references therein have extensively characterized the sample complexity of identifying the correct model using $\ell_1$ regularized optimization. The authors in \citep{Mest} extended their results to include Generalized Linear Models as well (GLM's). In \citep{plan1bit},  the authors introduced a new framework to solve the classification problem: minimize \footnote{The authors in \citep{plan1bit} write the problem as a maximization. We minimize the negative of the same function} a linear cost function subject to a constraint on the $\ell_1$ norm of the solution.


\subsection{Organization}
The rest of the paper is organized as follows: in Section \ref{sec:logitsetup}, we formally state our structured sparse feature selection problem and the main results of this paper. Then in Section \ref{sec:sparsity_patterns}, we argue that the regularizer we propose does indeed help in recovering coefficient sparsity patterns that are both within-and across group sparse, even when the groups overlap. In Section \ref{sec:mwidth}, we leverage ideas from \citep{plan1bit} and derive measurement bounds and consistency results for the SOGlasso under a logistic regression setting. We also extend these results to handle data with correlations in their entries.  We perform experiments on real and toy data in Section \ref{sec:expts}, before concluding the paper and mentioning avenues for future research in Section \ref{sec:conc}. 


\section{Main Results: Classification with Structured Sparsity}
\label{sec:logitsetup}

We now return to the problem that we wish to solve in this paper, and state our main results in a more formal way. Recall that we are interested in recovering a coefficient vector $\vx^\star$, from (corrupted) linear observations of the form $\langle \vphi_i, \vx^\star \rangle$

The coefficient vector of interest is assumed to have a special structure. Specifically, we assume that $\vx^\star \in \mathcal{C} \subset B^p_2$, where $B^p_2$ is the unit euclidean ball in $\R^p$. This motivates the following optimization problem \citep{plan1bit}:
\begin{equation}
\label{optgen}
\widehat{\vx} = \arg \min_{\vx}  \sum_{i = 1}^n - \vy_{i} \left\langle \vphi_{i}, \vx \right\rangle ~\ \textbf{s.t. } ~\ \vx \in \mathcal{C}.
\end{equation}

The function to be optimized has a very natural interpretation: We assume without loss of generality  that the observations are positively correlated \footnote{If the correlation is negative, the signs of $\vy_i$ can be reversed} with the inner products between the features and the coefficient vector. Hence, a natural thing to do would be to maximize the number of ``sign agreements" between $\vy_i$ and $\langle \vphi_i, \vx \rangle$. The objective function in (\ref{optgen}) maximizes the product of the two terms, a linear relaxation of the quantity we wish to optimize.

The statistical accuracy of $\widehat{x}$ can be characterized in terms of the \emph{mean width} of $\mathcal{C}$, which is defined as follows
\begin{definition}
Let $\vg \in \N(0,\mtx{I})$. The mean width of a set $\mathcal{C}$ is defined as
\[
\omega(\mathcal{C}) = \E_{\vg} \left[ \sup_{\vx \in \mathcal{C}-\mathcal{C}} \langle \vx,\vg \rangle \right],
\]
where $\mathcal{C}-\mathcal{C}$ denotes the Minkowski set difference. 
\end{definition}

We now restate a result from \citep{plan1bit}
\begin{theorem}[Corollary 1.2 in \citep{plan1bit}]
\label{thmplan}
Let $\mPhi \in \R^{n \times p}$ be a matrix with i.i.d. standard Gaussian entries, and let $\mathcal{C} \subset B^p_2$. Fix $\vx^\star \in \mathcal{C}$, and assume the observations follow the model (\ref{ymodel}) above. Then, for $\epsilon > 0$, if 
\[
n \geq C \left(\frac{\sigma_f^2}{\epsilon^2}\right) \omega(\mathcal{C})^2,
\]
then with probability at least $1 - 8 \exp(-c(\frac{\epsilon}{\sigma_f})^2 n)$, the solution $\widehat{\vx}$ to the problem (\ref{optgen}) satisfies
\[
\| \widehat{\vx} - \vx^\star \|^2 \leq \epsilon
\]
with $\sigma_f$ defined in (\ref{deflam}).
\end{theorem}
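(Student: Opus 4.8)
The plan is to exploit that the objective in~(\ref{optgen}) is \emph{linear} in $\vx$. Writing $\vct{h}:=\frac1n\sum_{i=1}^n\vy_i\vphi_i$, the estimator $\widehat{\vx}$ is exactly the maximizer of the linear functional $\langle\vx,\vct{h}\rangle$ over $\mathcal{C}$, so by optimality $\langle\widehat{\vx},\vct{h}\rangle\ge\langle\vx^\star,\vct{h}\rangle$ and the whole analysis reduces to understanding the random vector $\vct{h}$. First I would compute its mean: conditioning on $g_i:=\langle\vphi_i,\vx^\star\rangle$ and using that $\E[\vphi_i\mid g_i]=g_i\,\vx^\star$ for a standard Gaussian row (recall $\|\vx^\star\|=1$), together with $\E[\vy_i\mid g_i]=f(g_i)$, gives $\E[\vct{h}]=\E[f(g)g]\,\vx^\star=\vx^\star/\sigma_f$ by~(\ref{deflam}). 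Thus in expectation the functional points along $\vx^\star$, whose maximizer over $\mathcal{C}\subset B_2^p$ is $\vx^\star$ itself; the content of the theorem is that the empirical fluctuations of $\vct{h}$ are small enough to pin $\widehat{\vx}$ near $\vx^\star$.

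The crucial step is a decomposition that converts the Gaussianity of the design into an exactly Gaussian noise term. Splitting each row as $\vphi_i=g_i\,\vx^\star+\mtx{P}^\perp\vphi_i$, where $\mtx{P}^\perp$ projects onto the orthogonal complement of $\vx^\star$, yields
\[
\vct{h}=\lambda\,\vx^\star+\vz,\qquad \lambda:=\frac1n\sum_{i=1}^n\vy_i g_i,\qquad \vz:=\frac1n\sum_{i=1}^n\vy_i\,\mtx{P}^\perp\vphi_i.
\]
Since $\mtx{P}^\perp\vphi_i$ is Gaussian and independent of $g_i$, it is independent of the label $\vy_i$ (a randomized function of $g_i$ alone); hence, conditioned on the $\vy_i$, the vector $\vz$ is centered Gaussian with covariance $\frac1n\mtx{P}^\perp$, the signs being irrelevant because $\vy_i^2=1$. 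This is what lets me bypass any generic sub-Gaussian chaining: the noise is genuinely Gaussian, so its interaction with $\mathcal{C}$ is governed \emph{directly} by the mean width. Substituting the decomposition into the optimality inequality and using $\|\widehat{\vx}-\vx^\star\|^2\le 2(1-\langle\widehat{\vx},\vx^\star\rangle)$ (valid since $\|\widehat{\vx}\|\le 1=\|\vx^\star\|$) gives, on the event $\lambda>0$,
\[
\|\widehat{\vx}-\vx^\star\|^2\;\le\;\frac{2}{\lambda}\,\sup_{\vu\in\mathcal{C}-\mathcal{C}}\langle\vu,\vz\rangle .
\]

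It remains to control the two quantities on the right. The denominator $\lambda$ is an average of i.i.d.\ variables $\vy_i g_i$ with mean $1/\sigma_f$ and $O(1)$ sub-Gaussian norm (in absolute value they are distributed as $|g_i|$), so a standard sub-Gaussian tail bound gives $\lambda\ge\frac{1}{2\sigma_f}$ outside an event of probability at most $2\exp(-cn/\sigma_f^2)$. For the numerator, condition on the signs so that $\vz$ is Gaussian: its conditional expectation is at most $\frac{1}{\sqrt n}\,\E_\vg\sup_{\vu\in\mathcal{C}-\mathcal{C}}\langle\vu,\mtx{P}^\perp\vg\rangle\le\frac{1}{\sqrt n}\,\omega(\mathcal{C})$, since an orthogonal projection cannot increase the Gaussian width (Sudakov--Fernique), and as a function of $\vg$ the supremum is Lipschitz with constant at most $2/\sqrt n$ (because $\mathrm{diam}(\mathcal{C}-\mathcal{C})\le 2$). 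Gaussian concentration then gives a deviation tail $\exp(-cnt^2)$ at level $t$. Combining the two, $\|\widehat{\vx}-\vx^\star\|^2\lesssim \sigma_f\,\omega(\mathcal{C})/\sqrt n$, which drops below $\epsilon$ as soon as $n\ge C(\sigma_f^2/\epsilon^2)\,\omega(\mathcal{C})^2$, and a union bound over the two bad events produces the stated failure probability of the form $8\exp(-c(\epsilon/\sigma_f)^2 n)$.

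The step I expect to require the most care is not any single estimate but the bookkeeping that fuses them into the single clean threshold and the exponent $c(\epsilon/\sigma_f)^2 n$. Both random quantities carry an implicit $\sigma_f$ (the weak-signal regime makes $\sigma_f$ large and $\lambda$ correspondingly small), so one must choose the deviation level in the Gaussian concentration step to be on the order of $\epsilon/\sigma_f$ relative to the scale $\mathrm{diam}(\mathcal{C}-\mathcal{C})/\sqrt n$, and then check that this matches the lower tail of $\lambda$ and the target accuracy simultaneously. Keeping the two sources of randomness decoupled --- conditioning on $\{g_i,\vy_i\}$ to treat $\vz$ as Gaussian, while handling $\lambda$ on the complementary event --- and tracking absolute constants through the inequality $\|\widehat{\vx}-\vx^\star\|^2\le(2/\lambda)\sup_\vu\langle\vu,\vz\rangle$ is where the real work lies; the geometric reduction itself is short once the exact-Gaussian decomposition is in hand.
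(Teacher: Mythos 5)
You should first note a structural point: the paper does not prove this statement at all --- Theorem \ref{thmplan} is imported verbatim as Corollary 1.2 of \citep{plan1bit}, so the only meaningful comparison is against the original Plan--Vershynin argument. Measured against that, your proof is correct and is essentially a reconstruction of their proof, organized slightly differently. The core ingredients coincide: the computation $\E[\vy_i\vphi_i]=\E[f(g)g]\,\vx^\star=\vx^\star/\sigma_f$ via the decomposition $\vphi_i=g_i\vx^\star+\mtx{P}^\perp\vphi_i$; the observation that $\vy_i$ is conditionally independent of $\mtx{P}^\perp\vphi_i$ given $g_i$ (which you correctly flag --- this is the step that makes the noise exactly Gaussian and is the crux of the argument); the optimality inequality combined with $\|\widehat{\vx}-\vx^\star\|^2\le 2(1-\langle\widehat{\vx},\vx^\star\rangle)$; and control of the supremum over $\mathcal{C}-\mathcal{C}$ by the mean width plus Borell--TIS concentration with Lipschitz constant $\mathrm{diam}(\mathcal{C}-\mathcal{C})/\sqrt{n}\le 2/\sqrt{n}$. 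The one genuine difference is bookkeeping: Plan--Vershynin compare the empirical linear loss to the \emph{population} loss $L(\vx)=-\langle\vx,\vx^\star\rangle/\sigma_f$ and bound the full deviation $\sup_{\vu\in\mathcal{C}-\mathcal{C}}|(\hat{L}_n-L)(\vu)|$ (which internally contains both your $\vz$ term and the fluctuation of the signal-direction coefficient), whereas you keep the \emph{empirical} correlation $\lambda=\frac1n\sum_i\vy_ig_i$ in the denominator and handle it by a separate sub-Gaussian lower-tail bound, $\lambda\ge 1/(2\sigma_f)$ off an event of probability $2\exp(-cn/\sigma_f^2)$. Both routes are valid and yield the same rate; your event-splitting is clean, and the union bound does produce the stated failure probability once one notes that $\epsilon$ may be taken $O(1)$ (otherwise the conclusion is vacuous, since $\mathrm{diam}(\mathcal{C})^2\le 4$). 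Two cosmetic caveats: your appeal to Sudakov--Fernique to drop $\mtx{P}^\perp$ is correct but could be replaced by the trivial remark that $\|\mtx{P}^\perp(\vu-\vu')\|\le\|\vu-\vu'\|$, and you should be explicit that the paper's $\omega(\mathcal{C})$ in (\ref{defmw}) differs from the width of $\mathcal{C}-\mathcal{C}$ only by a constant factor for the symmetric sets in play, so the constant $C$ absorbs the discrepancy.
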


We abuse notation and define the mean width as
\begin{equation}
\label{defmw}
\omega({\mathcal{C}}) = \E_{\vg} \left[ \sup_{\vx \in \mathcal{C}} \langle \vx,\vg \rangle \right].
\end{equation}
The quantity defined above is a constant multiple of that in the original definition for centrally symmetric sets $\mathcal{C}$ \citep{plan1bit}, which will be the case for the remainder of this paper. 

In this paper, we construct a new penalty that produces a \emph{convex} set $\mathcal{C}$ that encourages structured sparsity in the solution of (\ref{optgen}). We show that the resulting optimization can be efficiently solved. We bound the mean width of the set, which yields new bounds for classification with structured sparsity, via Theorem \ref{thmplan}. We state the main results in this section, and defer the proofs to Section \ref{sec:mwidth}

\subsection{A New Penalty for Structured Sparsity}

Assume that the features can be grouped according to similarity into $K$ (possibly overlapping) groups $\G = \{ G_1, G_2, \ldots, G_K \}$ with the largest group being of size $L$ and consider the following definition of structured sparsity.
\begin{definition}
\label{defka}
We say that a vector $\vx$ is $(k, l )$-group sparse if $\vx$ is supported on at most $k \leq K$ groups and at most  $l$ elements in each active group are non zero. 
\end{definition}
Note that $l = 0$ corresponds to $\vx = \vct{0}$.

To encourage such sparsity patterns we define the following penalty. 
Given a group $G \in \G$, we define the set
\[
\W_G = \left\{ \vw \in \R^p : ~\ \vw_i = 0 ~\ \textbf{if} ~\  i \not \in G \right\}.
\]
We can then define
\[
\W(\vx) = \left\{ \vw_{G_1} \in \W_{G_1}, ~\ \vw_{G_2} \in \W_{G_2}, \ldots, \vw_{G_M} \in \W_{G_M} : ~\ \sum_{G \in \G} \vw_G = \vx  \right\}.
\]
That is, each element of $\W(x)$ is a set of vectors, one from each $\W_G$, such that the vectors sum to $\vx$. As shorthand, in the sequel we write $\{ \vw_G \} \in \W(\vx)$ to mean a set of vectors that form an element in $\W(\vx)$

For any $\vx\in \R^p$, define
\begin{equation}
\label{eq:reggen}
h(\vx) \ := \ \inf_{\{ \vw_G \} \in \W(\vx)} \sum_{G \in \G} \left(\alpha_G  \|\vw_G\|_2 + \beta_G  \| \vw_G\|_1 \right) ,
\end{equation}

where the $\alpha_G, \beta_G>0$ are constants that tradeoff the contributions of the $\ell_2$ and the $\ell_1$ norm terms per group, respectively.  The  {\em SOGlasso} is the optimization in (\ref{optgen}) with $h(\vx)$ as defined in (\ref{eq:reggen}) determining the structure of the constraint set $\mathcal{C}$, and hence the form of the solution $\widehat{\vx}$. The $\ell_2$ penalty promotes the selection of only a subset of the groups, and the $\ell_1$ penalty promotes the selection of only a subset of the features within a group.

To keep the exposition simple, we will work with the following definition of $h(\vx)$ in the rest of the paper:
\begin{equation}
\label{eq:reg}
h(\vx) \ := \ \inf_{\{ \vw_G \} \in \W(\vx)} \sum_{G \in \G} \left( \|\vw_G\|_2 + \fll  \| \vw_G\|_1 \right) .
\end{equation}

Note that the value of $\lambda_1 \geq 0$ can be varied to both emphasize or de emphasize the $\ell_1$ penalty. In almost all applications of interest, the value of $l$ will obviously be unknown, and the quantity $\fll$ needs to be tuned via cross validation. However, for the sake of proving our theorems, we will assume that the quantity is known (our goal is to recover vectors that are $(k,l)$- group sparse). This is consistent with other results in the literature where it is assumed that the parameters are known.

\begin{definition}
\label{def:optrep}
We say the set of vectors $\{ \vw_G \} \in \W(\vx)$ is  an optimal representation of $\vx$ if they achieve the $\inf$ in (\ref{eq:reg}). 
\end{definition}
The objective function in (\ref{eq:reg}) is convex and coercive. Hence,  $ \forall \vx$, an optimal representation always exists. 


The function $h(\vx)$ yields a convex relaxation for $(k,l)$-group sparsity. Define the constraint set
\begin{equation}
\label{constraint_set}
\mathcal{C} = \{ \vx: h(\vx) \leq \sqrt{k} \left( 1+  \lambda_1 \right) , ~\ \| \vx \|_2 \leq 1 \} \ .
\end{equation}
We show that $\mathcal{C}$ is convex and contains all $(k,l)$-group sparse vectors.
We compute the mean width of $\mathcal{C}$ in (\ref{constraint_set}), and subsequently obtain the following result:
\begin{theorem}
\label{thmmain}
Suppose there exists a coefficient vector $\vx^\star$ that is $(k, l)$-group sparse. Suppose the data matrix $\mPhi \in \R^{n \times p}$ and observation model follow the setting in Theorem \ref{thmplan}. Suppose we solve (\ref{optgen}) for the constraint set given by (\ref{constraint_set}). For $\epsilon > 0$ and a constant $C$, if the number of measurements satisfies
\[
n \ \geq \ C \frac{\sigma_f^2}{\epsilon^2}k ~\min \left\{ (1 + \lambda_1)^2 (\log{(K)} + L) , \left( \frac{1+\lambda_1}{\lambda_1} \right)^2 l \log{(p)}  \right\},
\]
then with high probability, the solution of the  SOGlasso satisfies
\[
\left\| \widehat{\vx} - \vx^\star \right\|^2_2 \ \leq \epsilon.
\]
\end{theorem}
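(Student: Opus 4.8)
The plan is to invoke Theorem \ref{thmplan}, which reduces everything to two tasks: (i) verifying that $\vx^\star$ lies in the constraint set $\mathcal{C}$ of (\ref{constraint_set}), and (ii) bounding the mean width $\omega(\mathcal{C})$ defined in (\ref{defmw}). Once I establish $\omega(\mathcal{C})^2 \le C' k \min\{(1+\lambda_1)^2(\log K + L),\,((1+\lambda_1)/\lambda_1)^2 l\log p\}$, substituting into the condition $n \ge C(\sigma_f^2/\epsilon^2)\omega(\mathcal{C})^2$ immediately yields the stated sample complexity, and the high-probability error bound $\|\widehat{\vx}-\vx^\star\|_2^2\le\epsilon$ follows verbatim from Theorem \ref{thmplan}.

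For the containment step, I would show every $(k,l)$-group sparse unit vector satisfies $h(\vx)\le\sqrt{k}(1+\lambda_1)$, so that $\mathcal{C}$ contains $\vx^\star$; convexity of $\mathcal{C}$ is immediate since $h$ is a norm (an infimal convolution of norms supported on the $\W_G$) and $\mathcal{C}$ is the intersection of a sublevel set of $h$ with $B_2^p$. Concretely, decompose $\vx$ by assigning each nonzero coordinate to one active group containing it; this produces a feasible $\{\vw_G\}\in\W(\vx)$ supported on the $k$ active groups, with disjoint supports and at most $l$ nonzeros per group. Since $\|\vw_G\|_1\le\sqrt{l}\,\|\vw_G\|_2$ on an $l$-sparse vector, each summand of (\ref{eq:reg}) is at most $(1+\lambda_1)\|\vw_G\|_2$, and Cauchy--Schwarz over the $k$ groups together with $\sum_G\|\vw_G\|_2^2=\|\vx\|_2^2\le1$ gives $h(\vx)\le(1+\lambda_1)\sqrt{k}$. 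Note this argument is insensitive to group overlap.

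The heart of the proof is the two-sided mean-width estimate, obtained by bounding $\omega(\mathcal{C})$ through two simpler sets that each contain $\mathcal{C}$. For the group-lasso-type term, since $h$ is the latent-group norm built from the per-group norms $\rho_G(\cdot)=\|\cdot\|_2+\fll\|\cdot\|_1$, its dual satisfies $h^*(\vg)=\max_G\rho_G^*(\vg_G)$; dropping the $\ell_1$ part gives $\rho_G^*\le\|\cdot\|_2$, hence $h^*(\vg)\le\max_G\|\vg_G\|_2$. A Gaussian concentration and union bound over the $K$ groups of size at most $L$ yields $\E\max_G\|\vg_G\|_2\lesssim\sqrt{L}+\sqrt{\log K}$, so $\omega(\mathcal{C})\le\sqrt{k}(1+\lambda_1)\,\E[h^*(\vg)]\lesssim\sqrt{k}(1+\lambda_1)\sqrt{L+\log K}$. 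For the lasso-type term, I would use $h(\vx)\ge\fll\|\vx\|_1$ (discard the $\ell_2$ part and apply $\|\sum_G\vw_G\|_1\le\sum_G\|\vw_G\|_1$), so the $h$-constraint forces $\|\vx\|_1\le\sqrt{kl}\,(1+\lambda_1)/\lambda_1$; then $\sup_{\|\vx\|_1\le R}\langle\vx,\vg\rangle=R\|\vg\|_\infty$ and $\E\|\vg\|_\infty\lesssim\sqrt{\log p}$ deliver the $kl\,((1+\lambda_1)/\lambda_1)^2\log p$ bound. Squaring and taking the minimum gives the claimed bound on $\omega(\mathcal{C})^2$.

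I expect the main obstacle to be the group-lasso term: establishing the dual-norm identity $h^*(\vg)=\max_G\rho_G^*(\vg_G)$ for the infimal-convolution norm $h$, justifying that discarding the $\ell_1$ contribution via $\rho_G^*\le\|\cdot\|_2$ is the correct relaxation, and controlling the Gaussian tail of $\max_G\|\vg_G\|_2$ when the groups overlap, so that the $\vg_G$ are dependent and one must rely on a union bound rather than on independence. The containment step and the lasso-type bound are comparatively routine.
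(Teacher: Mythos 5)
Your proposal is correct and follows essentially the same route as the paper: the containment of $(k,l)$-group sparse vectors via a disjoint-support decomposition and Cauchy--Schwarz is exactly the paper's Lemma \ref{lemconvrelax}, and your two mean-width bounds (relaxing $h$ to the latent group $\ell_1/\ell_2$ ball for the $\log K + L$ term, and to the $\ell_1$ ball for the $l\log p$ term, then bounding expected maxima of $\chi^2$ variables) are precisely the paper's Lemma \ref{lem:mw}, with your dual-norm identity $h^*(\vg)=\max_G\rho_G^*(\vg_G)$ being just a repackaging of the paper's explicit constraint-set relaxation steps (i)--(ii). The obstacle you flag is not a real one: the identity is standard for infimal-convolution norms, and the paper's Lemma \ref{lem:chisq} (expected max of $K$ dependent $\chi^2$ variables) needs only a union bound, so overlap-induced dependence among the $\vg_G$ costs nothing.
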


\subsubsection*{Remarks}
The results of Theorem \ref{thmmain} yield new results for classification with structured sparsity under the general binary observation setting (\ref{ymodel}). Specifically, note that the SOGlasso interpolates between the standard $\ell_1$ regularized (lasso) and the group $\ell_1$ regularized (group lasso) classification techniques:  
\begin{itemize}
\item When $\lambda_1 = 0$, we obtain results for the group lasso. The result remains the same whether or not the groups overlap. The bound is given by
\[
n \geq C \delta^{-2} k (\log(K) + L).
\]
Note that this result is similar to that obtained for the linear regression case by the authors in \citep{nraistats}.
\item When all the groups are singletons, $(L=l=1)$ , the bound reduces to that for the standard lasso, with $KL = p$ being the ambient dimension. In this case, the signal sparsity $s := kl$ and the bound becomes:
\[
n \geq C \delta^{-2} kl \log(p).
\]
\end{itemize}


The SOGlasso generalizes the lasso and the group lasso, and allows one to recover signals that are sparse, group sparse, or a combination of the two structures.

Moreover, since the model we consider subsumes the logistic regression setting, we obtain results for logistic regression with a general structured sparsity constraint on the solution. To the best of our knowledge, these are the first known sample complexity bounds for the group lasso for logistic regression with overlapping groups, and the sparse group lasso, both of which arise as special cases of the SOGlasso. 

Problem (\ref{optgen}) admits an efficient solution. Specifically, we can use the feature replication strategy as in \citep{jacob} to reduce the problem to a sparse group lasso, and use proximal point methods to recover the coefficient vector. We elaborate this in more detail later in the paper. 

Theorem \ref{thmmain} bounds the number of measurements sufficient for accurate estimation using the overlapping group lasso and the lasso. A natural question to then ask is whether one can do better when it is known that the vectors we are interested in are further constrained by the number of non zero entries in active groups. When it is known that each coefficient belongs to at most $R$ groups \footnote{The value of R will always be known, since we assume that the groups $\G$ are known}, we obtain the following sample complexity bound

\begin{theorem}
\label{thmnewbound}
Suppose the coefficient vector is $(k,l)$ group sparse, with everything else the same as in Theorem \ref{thmmain}. Suppose that each coefficient belongs to at most $R$ groups. Then, with high probability, if the number of measurements satisfies
\[
n \geq C \frac{\sigma_f^2}{\epsilon^2}R^2 k \left[ \log{\left( \frac{K}{k} \right)} + l \log{\left( \frac{L}{l} \right)} + l +2 \right],
\]
we have
\[
\| \hat{\vx} - \vx^\star \|^2 \leq \epsilon.
\]
\end{theorem}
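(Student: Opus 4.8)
The plan is to apply Theorem~\ref{thmplan}, so that the entire problem reduces to producing a convex constraint set $\mathcal{C}$ that contains $\vx^\star$ and whose mean width satisfies $\omega(\mathcal{C})^2 \lesssim R^2 k\left[\log(K/k) + l\log(L/l) + l + 2\right]$; plugging this into the sample-complexity condition $n \gtrsim (\sigma_f^2/\epsilon^2)\,\omega(\mathcal{C})^2$ then yields the theorem. I would take $\mathcal{C}$ to be the set in (\ref{constraint_set}), and first check feasibility of the target: for a $(k,l)$-group sparse $\vx^\star$ the trivial latent decomposition that assigns each active group its own restriction of $\vx^\star$, followed by $\|\vw_G\|_1 \le \sqrt{l}\,\|\vw_G\|_2$ on each $l$-sparse block and Cauchy--Schwarz across the $k$ active groups, gives $h(\vx^\star)\le\sqrt{k}(1+\lambda_1)$, so that $\vx^\star\in\mathcal{C}$.

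The heart of the argument is a combinatorial mean-width bound for the non-convex model class, which I expect is exactly Lemma~\ref{mwnonconvset}. Let $\mathcal{S}$ denote the set of unit-norm $(k,l)$-group sparse vectors. The support of any element of $\mathcal{S}$ is determined by choosing $k$ of the $K$ groups and then $l$ of the $L$ coordinates inside each, so $\mathcal{S}$ lies in a union of at most $\binom{K}{k}\binom{L}{l}^{k}$ subspaces, each of dimension at most $kl$. Using the standard fact that the mean width of the unit ball of a single $d$-dimensional subspace is $O(\sqrt d)$ together with a union bound over the subspaces (Gaussian concentration of the per-subspace suprema), I get $\omega(\mathcal{S})^2 \lesssim kl + \log\!\left[\binom{K}{k}\binom{L}{l}^{k}\right]$. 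Bounding the binomials by $\log\binom{a}{b}\le b\log(ea/b)$ collapses this to $k\left[\log(K/k) + l\log(L/l) + l + 2\right]$, which is precisely the bracketed factor, the additive constant absorbing the logarithms of $e$.

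It remains to pass from the non-convex width to the width of the convex set $\mathcal{C}$, and this is where the overlap parameter $R$ enters. Since the mean width in (\ref{defmw}) is unchanged under taking convex hulls, it suffices to show an inclusion of the form $\mathcal{C}\subseteq R\cdot\mathrm{conv}(\mathcal{S})$, after which $\omega(\mathcal{C})\le R\,\omega(\mathrm{conv}(\mathcal{S})) = R\,\omega(\mathcal{S})$ and hence $\omega(\mathcal{C})^2\lesssim R^2\,k\left[\log(K/k)+l\log(L/l)+l+2\right]$, completing the proof. I would establish the inclusion by taking an optimal latent representation $\{\vw_G\}\in\W(\vx)$ of a point $\vx\in\mathcal{C}$, normalizing each nonzero block $\vw_G$ to an element of $\mathcal{S}$, and recombining; the point is that because each coordinate is summed over the at most $R$ groups containing it, reconstituting $\vx$ from the normalized blocks costs a multiplicative factor controlled by $R$. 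In the non-overlapping case $R=1$ and one recovers the overlap-free constant, consistent with the remark following Theorem~\ref{thmmain}.

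I expect this last step to be the main obstacle. In the overlapping regime the latent decomposition $\{\vw_G\}$ realizing $h(\vx)$ is non-unique and the blocks need not have disjoint supports, so relating the Euclidean geometry of $\vx$ to that of its normalized building blocks---and extracting exactly the factor $R$ rather than a cruder bound---requires carefully accounting for how mass on a shared coordinate is distributed across its (at most $R$) groups. Controlling this overlap-induced amplification, while keeping both the radius $\sqrt{k}(1+\lambda_1)$ and the unit-ball constraint in force, is the delicate part; the combinatorial width bound and the feasibility check are comparatively routine.
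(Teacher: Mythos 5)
Your overall architecture matches the paper's: feasibility of $\vx^\star$ in $\mathcal{C}$, a combinatorial bound on the mean width of the non-convex class (this is exactly Lemma~\ref{mwnonconvset}, and your counting/concentration argument for it is fine), and then a containment of $\mathcal{C}$ in an $O(R)$-scaled convex hull of that class, finished off by Theorem~\ref{thmplan}. The gap is in the step you yourself flag as the main obstacle, and the sketch you give for it would fail. If you take an optimal representation $\{\vw_G\}\in\W(\vx)$ of a point $\vx\in\mathcal{C}$ and normalize each nonzero block separately, two things go wrong. First, a normalized block $\vw_G/\|\vw_G\|_2$ need not lie in the non-convex set at all: the latent blocks of an optimal representation of a generic $\vx\in\mathcal{C}$ carry no within-group sparsity, so a single block can have up to $L$ nonzeros, while membership in $\mathcal{C}_{nc}$ requires at most $kl$ nonzeros in total (and $L>kl$ is possible, e.g.\ $k=1$, $l\ll L$). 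Second, and more fundamentally, writing $\vx=\sum_G \|\vw_G\|_2\,\bigl(\vw_G/\|\vw_G\|_2\bigr)$ makes the convex-combination weights sum to $\sum_G\|\vw_G\|_2$, which the constraint $h(\vx)\le\sqrt{k}(1+\lambda_1)$ only controls at level $\sqrt{k}(1+\lambda_1)$, not at level $O(R)$. So per-block normalization yields only $\mathcal{C}\subseteq \sqrt{k}(1+\lambda_1)\,\mathrm{conv}(\mathcal{C}_{nc})$, hence $\omega(\mathcal{C})^2\lesssim k^2(1+\lambda_1)^2\left[\log(K/k)+l\log(L/l)+l+2\right]$ --- an extra factor of $k$, which is not the theorem. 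The factor $R$ cannot be extracted this way, no matter how carefully you account for shared coordinates.

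The paper's device, which your argument is missing, is a double sorting/peeling decomposition into pieces large enough for the weights to telescope. Order the blocks by decreasing $\ell_2$ norm and aggregate them into batches of $k$ groups each, $\vx_i=\sum_{r=1}^k\vw_i^r$; inside each batch, sort coordinates by magnitude and chop into pieces $\vx_{ij}$ of $kl$ coordinates each, so that each normalized piece $\vx_{ij}/\|\vx_{ij}\|_2$ does lie in $\mathcal{C}_{nc}$. Three inequalities then combine: the shelling bound $\|\vx_{ij}\|_2\le\|\vx_{i,j-1}\|_1/\sqrt{kl}$ within batches; its analogue across batches; and the overlap bound $\|\vx_{i1}\|_2 = \bigl\|\sum_{r}\vw_i^r\bigr\|_2\le R\bigl(\sum_r\|\vw_i^r\|_2^2\bigr)^{1/2}$, which is precisely where $R$ enters (each coordinate receives contributions from at most $R$ blocks, so Cauchy--Schwarz applies coordinatewise over at most $R$ terms). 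Feeding the constraint $h(\vx)\le\sqrt{k}(1+\lambda_1)$ through these inequalities gives $\sum_{i,j}\|\vx_{ij}\|_2\le 1+R+\max(1,R\lambda_1)$, i.e.\ $O(R)$ for bounded $\lambda_1$, so that $\mathcal{C}\subseteq O(R)\cdot\mathrm{conv}(\mathcal{C}_{nc})$ and $\omega(\mathcal{C})^2\lesssim R^2\,\omega(\mathcal{C}_{nc})^2$. This is the inclusion your proof needs and cannot reach by normalizing individual latent blocks.
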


The above result yields a tight bound when the groups do not overlap. Indeed, when $R = 1$ we see that the sample complexity bound is a function of the logarithm of the number of groups, and the overall sparsity of the signal $kl$. This is a tighter result that the bound obtained for the group lasso, where the second term would be $\mathcal{O}(kL)$. We pay a price of $R^2$ for overlapping groups, but in most practical examples we are interested in, $R$ is typically small, and is a constant.  


\section{Analysis of the SOGlasso Penalty}
\label{sec:sparsity_patterns}

Recall the definition of $h(\vx)$, from (\ref{eq:reg}):
\begin{equation}
\label{eq:regsimple}
h(\vx) = \inf_{\{ \vw_G \} \in \W(\vx)} \sum_{G \in \G} \|\vw_G\|_2 + \mu  \| \vw_G\|_1  
\end{equation}

where we set $\mu = \fll$ 

\subsubsection*{Remarks : }
The SOGlasso penalty can be seen as a generalization of different penalty functions previously explored in the context of sparse linear regression and/or classification:
\begin{itemize}
\item If each group in $\G$ is a singleton, then the SOGlasso penalty reduces to the standard $\ell_1$ norm, and the problem reduces to the lasso  \citep{tibshirani, buneahonest}
\item if $\lambda_1 = 0$ in (\ref{eq:reg}), then we are left with the latent group lasso \citep{jacob, latent, nraistats}. This allows us to recover sparsity patterns that can be expressed as lying in a union of groups. If a group is selected, then all the coefficients in the group are selected.
\item If the groups $G \in \G$ are non overlapping, then (\ref{eq:reg}) reduces to the sparse group lasso \citep{sgl}. Of course, for non overlapping groups, if $\lambda_1 = 0$, then we get the standard group lasso \citep{yuanlin}.
\end{itemize}

Figure \ref{mueffect} shows the effect that the parameter $\mu$ has on the shape of the ``ball" $\| \vw_G \| + \mu \| \vw_G \|_1 \leq \delta$, for a single two dimensional group $G$. 

\begin{figure}[!h]
\centering
\subfigure[$\mu = 0$]{
\includegraphics[width = 25mm, height = 25mm]{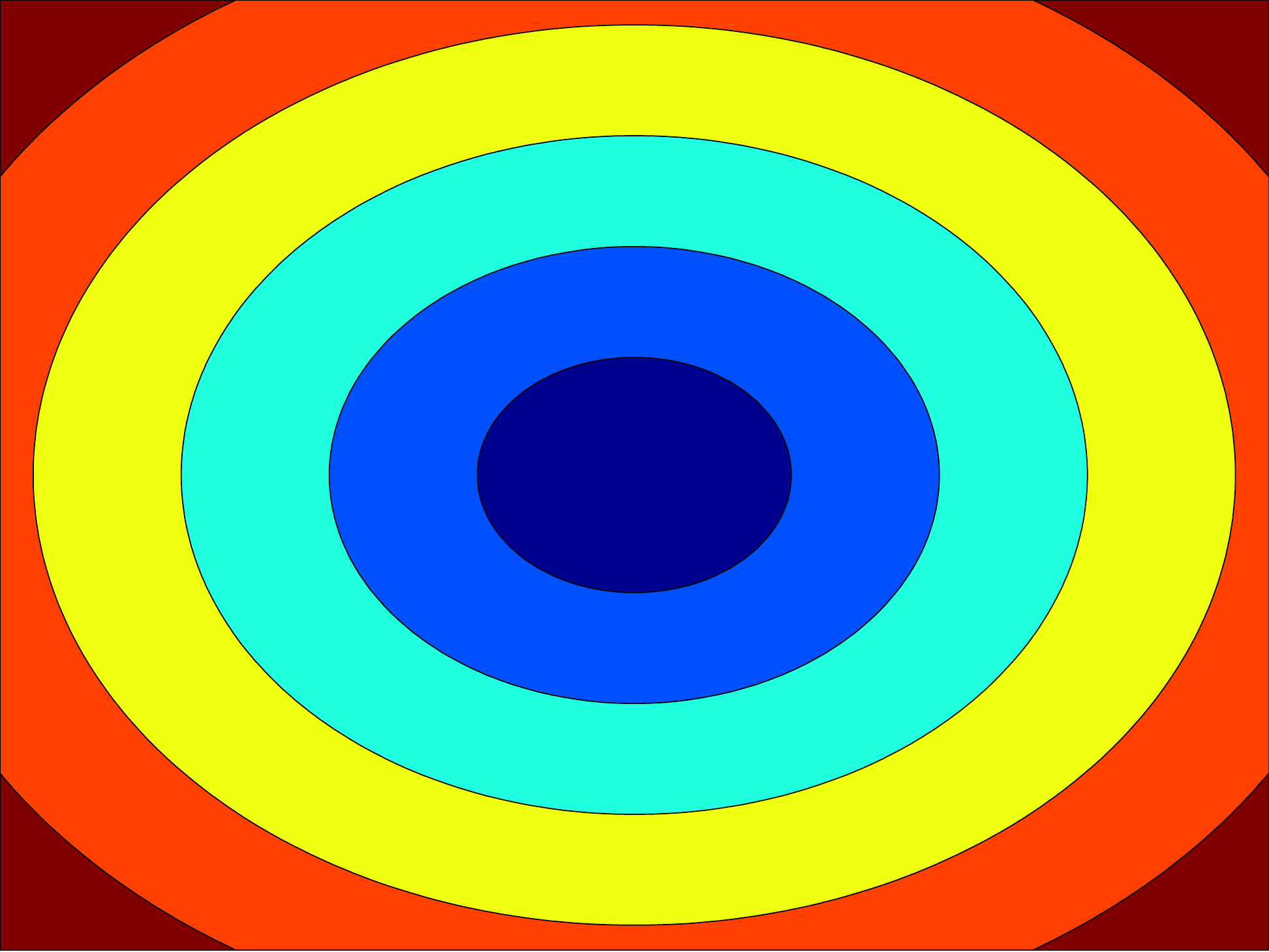}
\label{mu0}}
\qquad
\subfigure[$\mu = 0.2$]{
\includegraphics[width = 25mm, height = 25mm]{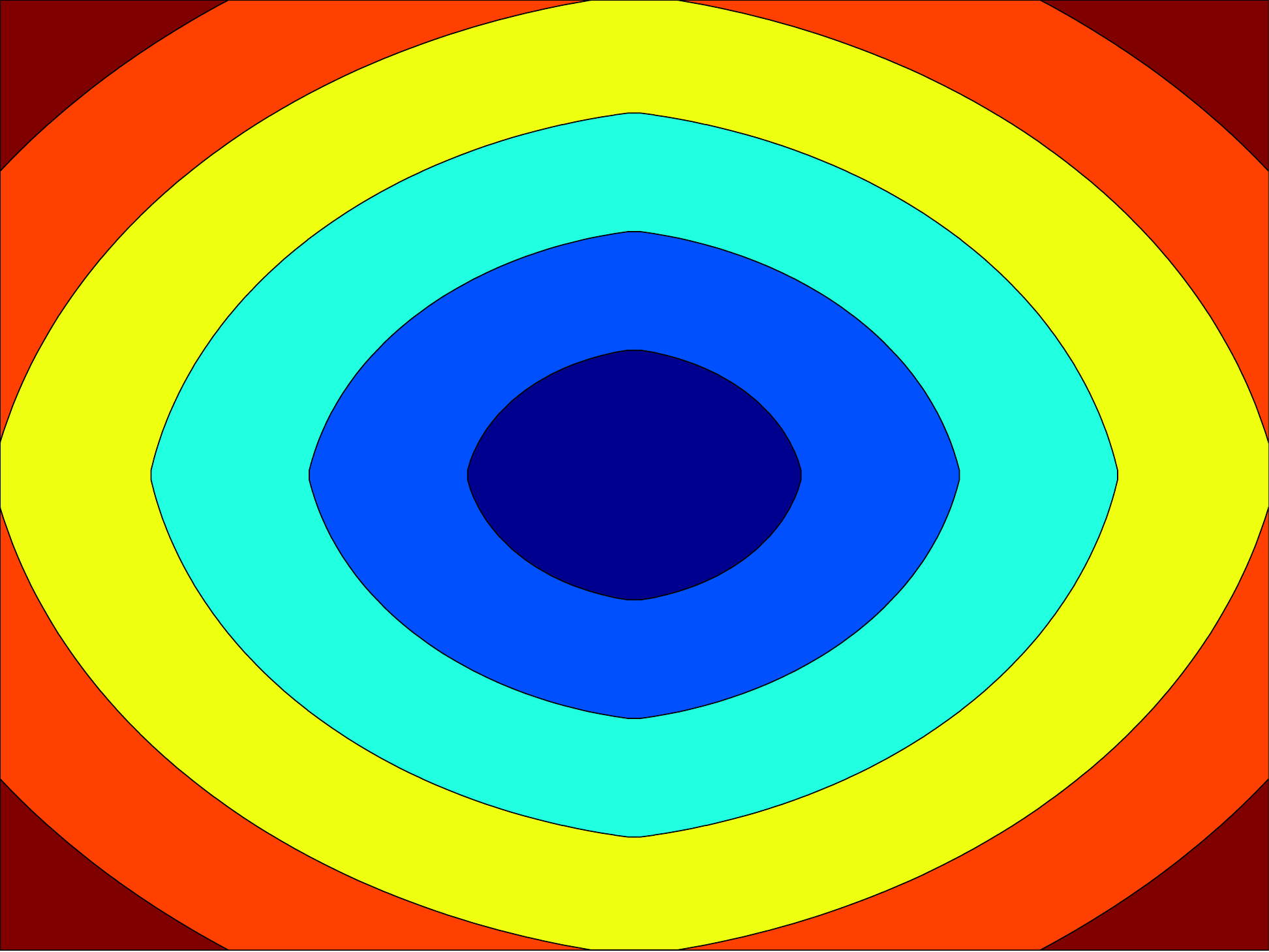}
\label{mu1}}
\qquad
\subfigure[$\mu = 1$ ]{
\includegraphics[width = 25mm, height = 25mm]{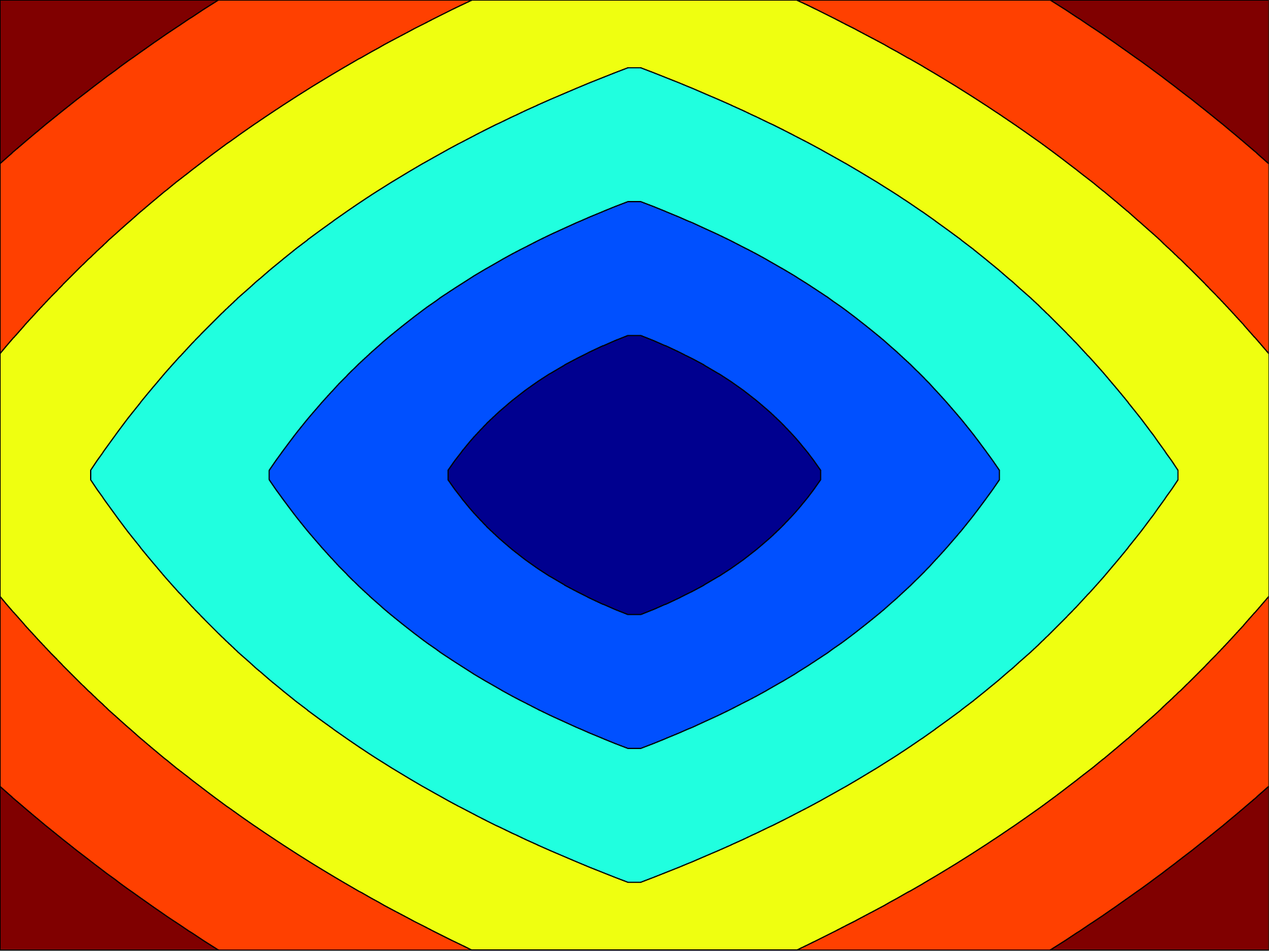}
\label{mu2}}
\qquad
\subfigure[$\mu = 10$ ]{
\includegraphics[width = 25mm, height = 25mm]{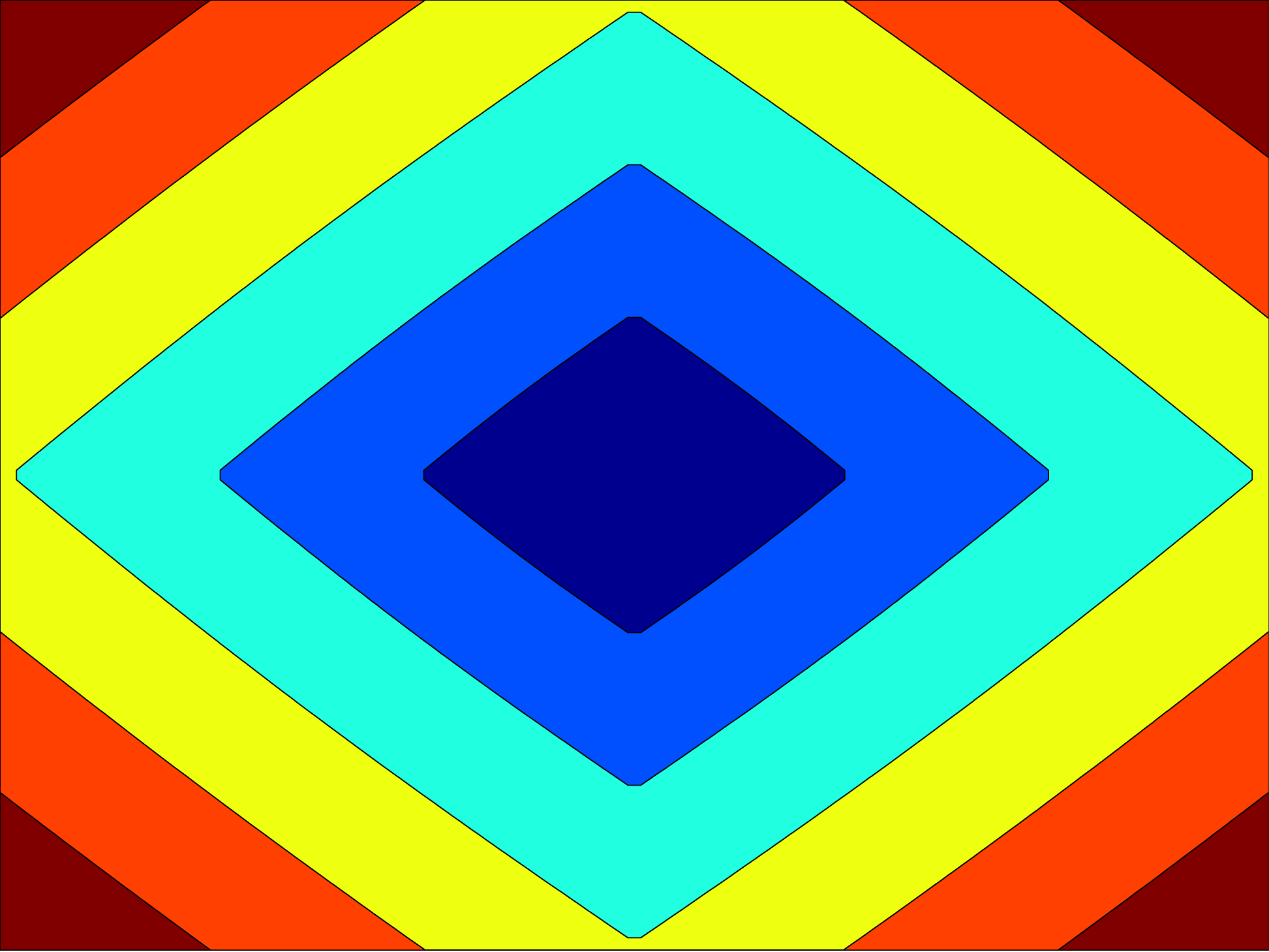}
\label{mu3}}
\caption{ Effect of $\mu$ on the shape of the set $\| \vw_G \| + \mu \| \vw_G \|_1 \leq \delta$, for a two dimensional group $G$. $\mu = 0$ \subref{mu0} yields level sets for the $\ell_2$ norm ball. As the value of $\mu$ in increased, the effect of the $\ell_1$ norm term increases \subref{mu1} \subref{mu2}. Finally as $\mu$ gets very large, the  level sets resemble that of the $\ell_1$ ball \subref{mu3}. }
\label{mueffect}
\end{figure}

\subsection{Properties of SOGlasso Penalty}
The example in Table \ref{tab:which} gives an insight into the kind of sparsity patterns preferred by the function $h(\vx)$. We will tend to prefer solutions that have a small value of $h(\cdot)$. Consider 3 instances of $\vx \in \R^{10}$, and the corresponding group lasso, $\ell_1$ norm, and $h(\vx)$ function values. The vector is assumed to be made up of two groups, $G_1 = \{1,2,3,4,5 \} \mbox{ and } G_2 = \{6,7,8,9,10 \}$. $h(\vx)$ is smallest when the support set is sparse within groups, and also when only one of the two groups is selected (column 5). The $\ell_1$ norm does not take into account sparsity across groups (column 4), while the group lasso norm does not take into account sparsity within groups (column 3). Since the groups do not overlap, the latent group lasso penalty reduces to the group lasso penalty and $h(\vx)$ reduces to the sparse group lasso penalty. 
\begin{table}[!h]
  \centering
\begin{tabular}{ || c | c | c | c | c || }
\hline
\textbf{Support} & \textbf{Values}  & $\sum_{G} \|\vx_G \|$ & $\| \vx \|_1$ & $\sum_{G} \left( \| \vx_G \| + \|\vx_G \|_1 \right)$  \\
  \hline                       
  $\{1,4,9\}$ & $\{3,4,7\}$  &  $12$ & $14$  & $26$ \\
  \hline
  $\{1,2,3,4,5\}$ & $\{2,5,2,4,5\}$  &  $8.602$ & $18$  & $26.602$ \\
  \hline  
  $\{1,3,4\}$ & $\{3,4,7\}$  &  $8.602$ & $14$  & $22.602$ \\
  \hline
\end{tabular}
  \caption{Different instances of a 10-d vector and their corresponding norms.}
 \label{tab:which}
  \end{table}

The next table shows that $h(\vx)$ indeed favors solutions that are not only group sparse, but also exhibit sparsity within groups when the groups overlap. Consider again a 10-dimensional vector $\vx$ with three overlapping groups $\{1,2,3,4 \}$, $\{ 3,4,5,6,7 \}$ and $\{ 7,8,9,10 \}$. Suppose the vector $\vx = [0 ~\ 0 ~\ 1 ~\ 0 ~\ 1 ~\ 0 ~\ 1 ~\ 0 ~\ 0 ~\ 0]^T$. From the form of the function in (\ref{eq:reg}), we see that the vector can be seen as a sum of three vectors $\vw_i, ~\ i = 1,2,3$, corresponding to the three groups listed above. Consider the following instances of the $\vw_i$ vectors, which are all feasible solutions for the optimization problem in (\ref{eq:regsimple}):
\begin{enumerate}
\item $\vw_1 = [0 ~\ 0 ~\ -1 ~\ 0 ~\ 0 ~\ 0 ~\ 0 ~\ 0 ~\ 0 ~\ 0]^T$, $\vw_2 = [0 ~\ 0 ~\ 1 ~\ 1 ~\ 1 ~\ 0 ~\ 1 ~\ 0 ~\ 0 ~\ 0]^T$, $\vw_3 = [0 ~\ 0 ~\ 0 ~\ 0 ~\ 0 ~\ 0 ~\ 0 ~\ 0 ~\ 0 ~\ 0]^T$
\item $\vw_1 = [0 ~\ 0 ~\ 1 ~\ 0 ~\ 0 ~\ 0 ~\ 0 ~\ 0 ~\ 0 ~\ 0]^T$, $\vw_2 = [0 ~\ 0 ~\ 0 ~\ 0 ~\ 1 ~\ 0 ~\ 0 ~\ 0 ~\ 0 ~\ 0]^T$, $\vw_3 = [0 ~\ 0 ~\ 0 ~\ 0 ~\ 0 ~\ 0 ~\ 1 ~\ 0 ~\ 0 ~\ 0]^T$
\item $\vw_1 = [0 ~\ 0 ~\ 0 ~\ 0 ~\ 0 ~\ 0 ~\ 0 ~\ 0 ~\ 0 ~\ 0]^T$, $\vw_2 = [0 ~\ 0 ~\ 1 ~\ 0 ~\ 1 ~\ 0 ~\ 0 ~\ 0 ~\ 0 ~\ 0]^T$, $\vw_3 = [0 ~\ 0 ~\ 0 ~\ 0 ~\ 0 ~\ 0 ~\ 1 ~\ 0 ~\ 0 ~\ 0]^T$
\item $\vw_1 = [0 ~\ 0 ~\ 0 ~\ 0 ~\ 0 ~\ 0 ~\ 0 ~\ 0 ~\ 0 ~\ 0]^T$, $\vw_2 = [0 ~\ 0 ~\ 1 ~\ 0 ~\ 1 ~\ 0 ~\ 1 ~\ 0 ~\ 0 ~\ 0]^T$, $\vw_3 = [0 ~\ 0 ~\ 0 ~\ 0 ~\ 0 ~\ 0 ~\ 0 ~\ 0 ~\ 0 ~\ 0]^T$
\end{enumerate}

In the above list, the first instance corresponds to the case where the support is localized to two groups, and one of these groups (group 2) has only one zero. The second case corresponds to the case where all 3 groups have non zeros in them. The third case has support localized to two groups, and both groups are sparse. Finally, the fourth case has only the second group having non zero coefficients, and this group is also sparse. Table \ref{tab:sparse} shows that the smallest value of the sum of the terms is achieved by the fourth decomposition, and hence that will correspond to the optimal representation.

\begin{table}[!h]
  \centering
\begin{tabular}{ || c | c | c | c  || }
\hline
$A = \| \vw_1 \| + \mu \| \vw_1 \|_1 $& $B = \| \vw_2 \| + \mu \| \vw_2 \|_1 $  & $C = \| \vw_3 \| + \mu \| \vw_3 \|_1 $ & $A+B+C$  \\
 \hline                       
   $1+\mu$ & $2+4 \mu$  &  $0$ &  $3 + 5 \mu$ \\
  \hline                       
   $1+\mu$ & $1+\mu$  &  $1+\mu$ &  $3 + 3 \mu$ \\
  \hline
  $0$ & $\sqrt{2}+2\mu$  &  $1+\mu$ &  $1 + \sqrt{2} + 3 \mu$ \\
  \hline  
 $0$ & $\sqrt{3}+3\mu$  &  $0$ &  $\sqrt{3} + 3 \mu$ \\
  \hline
\end{tabular}
  \caption{Values of the sum of the $\ell_1$ and $\ell_2$ norms corresponding to the decompositions listed above. Note that the optimal representation corresponds to the case $\vw_1 = \vw_3 = \vct{0}$, and $\vw_2$ being a sparse vector.}
 \label{tab:sparse}
  \end{table}

Lastly, we can show that $h(\vx)$ is a norm.  This will imply that $h(\vx)$ is convex, and hence the penalty we consider will be convex. This will then mean that the optimization we are interested in solving is a convex program.  

\begin{lemma}
\label{isnorm}
The function
\[
h(\vx) = \inf_{\{ \vw_G \} \in \W(\vx)} \sum_{G \in \G} \left( \|\vw_G\|_2 + \mu  \| \vw_G\|_1 \right)
\] 
is a norm
\end{lemma}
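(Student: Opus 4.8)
The plan is to verify the three defining axioms of a norm---absolute homogeneity, the triangle inequality, and positive definiteness---directly from the infimal-convolution structure of $h$. The key structural observation is that each $\W_G$ is a linear subspace of $\R^p$ (the coordinates outside $G$ are forced to zero), so sums and nonzero scalar multiples of admissible decompositions remain admissible. This single fact is what makes all three axioms fall out cleanly, so I would establish it first.

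For homogeneity, I would exploit the bijection $\{ \vw_G \} \mapsto \{ a\vw_G \}$ between elements of $\W(\vx)$ and $\W(a\vx)$ for any scalar $a \neq 0$. Since $\|a\vw_G\|_2 = |a| \, \|\vw_G\|_2$ and $\|a\vw_G\|_1 = |a| \, \|\vw_G\|_1$, the objective scales by exactly $|a|$, and taking the infimum gives $h(a\vx) = |a| \, h(\vx)$; the case $a = 0$ is immediate since the all-zero decomposition is feasible for $\vx = \vct{0}$.

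For the triangle inequality---the heart of the argument---I would take decompositions $\{ \vw_G \} \in \W(\vx)$ and $\{ \vv_G \} \in \W(\vy)$ and note that $\{ \vw_G + \vv_G \}$ is a feasible decomposition of $\vx + \vy$, since each $\W_G$ is a subspace and the group-wise vectors sum correctly. Applying the ordinary triangle inequalities for $\|\cdot\|_2$ and $\|\cdot\|_1$ term by term and then summing over $G \in \G$ bounds the objective at this particular decomposition by the sum of the two original objectives. Because $h(\vx + \vy)$ is an infimum over \emph{all} feasible decompositions, it is no larger than this one feasible value; optimizing the two decompositions independently then yields $h(\vx + \vy) \leq h(\vx) + h(\vy)$.

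Finally, for positive definiteness I would use the fact, already noted after (\ref{eq:reg}), that the objective is convex and coercive, so the infimum is attained. Non-negativity is clear since every summand is non-negative. If $h(\vx) = 0$, an optimal representation achieves a zero sum of non-negative terms, forcing every $\|\vw_G\|_2 = 0$, hence every $\vw_G = \vct{0}$ and therefore $\vx = \sum_{G \in \G} \vw_G = \vct{0}$; conversely the zero decomposition shows $h(\vct{0}) = 0$. I expect no step to be genuinely hard. The only points requiring care are the implicit assumption that the groups $\G$ cover every coordinate, so that $\W(\vx)$ is nonempty and $h$ is finite for all $\vx$, and the appeal to coercivity to guarantee the infimum is \emph{attained} (rather than merely approached) in the definiteness argument.
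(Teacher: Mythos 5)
Your proposal is correct and follows essentially the same route as the paper: feasibility of scaled and summed decompositions plus the term-wise norm properties of $\|\cdot\|_2$ and $\|\cdot\|_1$, exactly as in the paper's homogeneity and triangle-inequality arguments. The only differences are cosmetic refinements---you phrase homogeneity as a bijection between $\W(\vx)$ and $\W(a\vx)$ and take infima over arbitrary (rather than optimal) decompositions, and you spell out the positive-definiteness step the paper dismisses as trivial---so nothing of substance diverges.
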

\begin{proof}
It is trivial to show that $h(\vx) \geq 0$ with equality \emph{iff} $\vx = 0$. We now show positive homogeneity. 
Suppose $\{ \vw_G \} \in \W(\vx)$ is an optimal representation (Definition \ref{def:optrep}) of $\vx$, and let $\gamma \in \R \backslash \{\vct{0} \}$. Then, $\sum_{G \in \G} \vw_G = \vx ~\ \Rightarrow \sum_{G \in \G} \gamma \vw_G = \gamma \vx$. This leads to the following set of inequalities:
\begin{align}
h(\vx) &= \sum_{G \in \G} \left( \| \vw_G \|_2 + \mu \| \vw_G  \|_1 \right) 
= \frac{1}{|\gamma|} \sum_{G \in \G} \left( \| \gamma \vw_G \|_2 + \mu \| \gamma \vw_G  \|_1 \right) 
\label{oneside}
 \geq \frac{1}{|\gamma|} h(\gamma \vx)
\end{align}
Now, assuming $\{ \vv_G \} \in \W(\gamma \vx)$ is an optimal representation of $\gamma \vx$, we have that $\sum_{G \in \G}  \frac{\vv_G}{\gamma} = \vx$, and we get
\begin{align}
h(\gamma \vx) &= \sum_{G \in \G} \left( \| \vv_G\|_2 +\mu \| \vv_G \|_1  \right) 
= |\gamma| \sum_{G \in \G} \left( \left\| \frac{\vv_G}{\gamma} \right\|_2 + \mu \left\| \frac{\vv_G}{\gamma} \right\|_1  \right) 
\label{otherside}
\geq |\gamma| h(\vx)
\end{align}
Positive homogeneity follows from (\ref{oneside}) and (\ref{otherside}). The inequalities are a result of the possibility of the vectors not corresponding to the respective optimal representations. 

For the triangle inequality, again let $\{\vw_G \} \in \W(\vx),\{ \vv_G \} \in \W(\vy)$ correspond to the optimal representation  for $\vx , \vy$ respectively. Then by definition, 
\begin{align*}
h(\vx + \vy) &\leq \sum_{G \in \G} (\|\vw_G + \vv_G\|_2 + \mu \|\vw_G  +  \vv_G \|_1) \\
&\leq \sum_{G \in \G} (\|\vw_G \|_2 + \|\vv_G\|_2 + \mu \|\vw_G\|_1  +\mu  \| \vv_G \|_1) \\
&= h(\vx) + h(\vy)
\end{align*}
The first and second inequalities follow by definition and the triangle inequality respectively.
\end{proof}


\subsection{Solving the SOGlasso Problem}
 We solve the Lagrangian version of the SOGlasso problem:
\[
\hat{\vx} = \arg \min_{\vx} \left( \sum_{i = 1}^n - \vy_{i} \left\langle \vphi_{i}, \vx \right\rangle \right) + \eta_1 h(\vx) + \eta_2 \| \vx \|^2,
\] 
where $\eta_1 > 0$ controls the amount by which we regularize the coefficients to have a structured sparsity pattern, and $\eta_2 > 0$ prevents the coefficients from taking very large values. We use the ``covariate duplication" method of \citep{jacob} to first reduce the problem to the non overlapping sparse group lasso in an expanded space.  One can then use proximal methods to recover the coefficients. 

Proximal point methods progress by taking a step in the direction of the negative gradient, and applying a shrinkage/proximal point mapping to the iterate. This mapping can be computed efficiently for the non overlapping sparse group lasso, as it is a special case of general hierarchical structured penalties \citep{bachhierarchical}. The proximal point mapping can be seen as the composition of the standard soft thresholding and the group soft thresholding operators:
\begin{align*}
\tilde{\vw} &= \mbox{sign}(\vw_{\nabla}) \left[ |\vw_{\nabla} | - \eta_1 \mu \right]^+ \\
(\vw_{t+1})_G &= \frac{(\tilde{\vw})_G}{\| (\tilde{\vw})_G  \|} \left[ \| (\tilde{\vw}_G  \| - \eta_1 \right]^+ ~\ \textbf{if }  \| (\tilde{\vw})_G  \| \neq 0 \\
(\vw_{t+1})_G &= 0 ~\ \textbf{otherwise}
\end{align*}
where $\vw_{\nabla}$ corresponds to the iterate after a gradient step and $[\cdot]^+ = \max \left( 0, \cdot \right)$. 
Once the solution is obtained in the duplicated space, we then recombine the duplicates to obtain the solution in the original space. Finally, we perform a debiasing step to obtain the final solution. 


\section{Proof of Theorem~\ref{thmmain}, Theorem ~\ref{thmnewbound} and Extensions to Correlated Data}
\label{sec:mwidth}


In this section, we compute the mean width of the constraint set $\mathcal{C}$ in (\ref{constraint_set}), which will be used to prove  Theorems \ref{thmmain} and \ref{thmnewbound}. First we define the following function, analogous to the $\ell_0$ pseudo-norm:
\begin{definition}
\label{defgzero}
Given a set of $K$ groups $\G$, for any vector $\vx$ and its optimal representation $\{ \vw_G \} \in \W(\vx)$, noting that $x = \sum_{G \in \G} \vw_G$, define 
\[
\| \vx\|_{\G,0} = \sum_{G \in \G} \mathds{1}_{\{ \| \vw_{G} \| \neq 0 \}} .
\]
\end{definition}
In the above definition, $\mathds{1}_{\{ \cdot \} }$ is the indicator function. 

Define the set
\begin{equation}
\label{defcideal}
\mathcal{C}_{nc}(k, l) = \left\{ \vx : \vx = \sum_{G \in \G} \vw_G,  ~\ \| \vx \|_{\G,0} \leq k, ~\ \sum_{G \in \G} \| \vw_G \|_0 \leq kl ~\ \forall G \in \G \right\}.
\end{equation}

We see that   $\mathcal{C}_{nc}(k, l)$ contains $(k,l)$-group sparse signals (Definition \ref{defka}). From the above definitions and our problem setup, our aim is to ideally solve the following optimization problem
\begin{equation}
\label{optideal}
\widehat{\vx} = \arg \min_{\vx}  \sum_{i = 1}^n - \vy_{ti} \left\langle \vphi_{ti}, \vx_t \right\rangle ~\ \textbf{s.t. } ~\ \vx \in \mathcal{C}_{nc}(k, l)
\end{equation}
However, the set $\mathcal{C}_{nc}(k,l)$ is not convex, and hence solving (\ref{optideal}) will be hard in general. We instead consider a convex relaxation of the above problem. The convex relaxation of the (overlapping) group $\ell_0$ pseudo-norm is the (overlapping) group $\ell_1/ \ell_2$ norm. This leads to the following result:

\begin{lemma}
\label{lemconvrelax}
The SOGlasso penalty (\ref{eq:regsimple}) admits a convex relaxation of $\mathcal{C}_{ideal}(k, \alpha)$. Specifically, we can consider the set
\[
\mathcal{C}(k, l)  = \{ \vx : h(\vx) \leq \sqrt{k} (1 + \lambda_1) \| \vx \|_2 \}
\]
as a tight convex relaxation containing the set $\mathcal{C}_{nc}(k, l)$.
\end{lemma}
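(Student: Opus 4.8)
The plan is to separate the statement into its two genuine components and dispatch them in order: first the \emph{containment} $\mathcal{C}_{nc}(k,l)\subseteq\mathcal{C}(k,l)$, which is the analytic core, and then the \emph{convexity} of the relaxation, which is essentially free once we invoke Lemma~\ref{isnorm}. Throughout I would write $\mu=\fll$, so that the key algebraic identity $\mu\sqrt{l}=\lambda_1$ is available. The whole argument rests on the observation that $h(\vx)$ is defined as an infimum over decompositions, so to produce an \emph{upper} bound on $h(\vx)$ it suffices to exhibit one convenient feasible decomposition $\{\vw_G\}\in\W(\vx)$ and evaluate $\sum_{G}(\|\vw_G\|_2+\mu\|\vw_G\|_1)$ on it.

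For the containment, I would fix a $(k,l)$-group sparse $\vx$ (Definition~\ref{defka}), let $S$ be the set of at most $k$ active groups on whose union $\vx$ is supported, and build an explicit decomposition with \emph{disjoint} supports: assign each coordinate in the support of $\vx$ to exactly one group of $S$ containing it, letting $\vw_G$ be the corresponding restriction and $\vw_G=\vct{0}$ for $G\notin S$. By construction $\sum_{G}\vw_G=\vx$, each $\vw_G$ has at most $l$ nonzeros, and the disjointness gives the Pythagorean identity $\sum_{G\in S}\|\vw_G\|_2^2=\|\vx\|_2^2$. Then I would apply Cauchy--Schwarz twice. Inside each group the bound $\|\vw_G\|_1\le\sqrt{l}\,\|\vw_G\|_2$ collapses the per-group objective: $\|\vw_G\|_2+\mu\|\vw_G\|_1\le(1+\mu\sqrt{l})\|\vw_G\|_2=(1+\lambda_1)\|\vw_G\|_2$. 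Summing and applying Cauchy--Schwarz across the $\le k$ active groups gives
\[
h(\vx)\ \le\ (1+\lambda_1)\sum_{G\in S}\|\vw_G\|_2\ \le\ (1+\lambda_1)\sqrt{k}\Big(\sum_{G\in S}\|\vw_G\|_2^2\Big)^{1/2}\ =\ \sqrt{k}\,(1+\lambda_1)\,\|\vx\|_2,
\]
which is exactly membership in $\mathcal{C}(k,l)$.

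The step I expect to need the most care is the construction of this decomposition when the groups \emph{overlap}, since only then is it nontrivial that a single representation simultaneously makes both Cauchy--Schwarz bounds available (disjoint supports for the Pythagorean identity, and at most $l$ nonzeros per group for the within-group bound). The coordinate-assignment construction above resolves this, because each $\vw_G$'s support is contained in the support of $\vx$ restricted to $G$, which has at most $l$ elements by $(k,l)$-group sparsity. Finally, for the remaining two words in the statement: convexity follows because $h$ is a norm (Lemma~\ref{isnorm}), so for the constraint set actually used in~(\ref{constraint_set}) the sublevel set $\{h(\vx)\le\sqrt{k}(1+\lambda_1)\}$ intersected with the Euclidean ball is an intersection of convex sets; and \emph{tightness} I would justify by exhibiting an extremal vector with exactly $k$ disjoint active groups, each carrying exactly $l$ equal-magnitude nonzeros, for which both Cauchy--Schwarz inequalities hold with equality and hence $h(\vx)=\sqrt{k}(1+\lambda_1)\|\vx\|_2$, showing the constant cannot be improved.
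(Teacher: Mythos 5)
Your proposal is correct and follows essentially the same route as the paper: a disjoint-support decomposition of a $(k,l)$-group sparse vector, the within-group bound $\|\vw_G\|_1\le\sqrt{l}\,\|\vw_G\|_2$, Cauchy--Schwarz across the at most $k$ active groups, the Pythagorean identity, convexity via Lemma~\ref{isnorm}, and the same equal-magnitude extremal vector for tightness. You are in fact slightly more careful than the paper in two spots --- you construct the disjoint decomposition explicitly (the paper merely asserts it exists) and you correctly tie the convexity claim to the constraint set~(\ref{constraint_set}), since the cone $\{h(\vx)\le\sqrt{k}(1+\lambda_1)\|\vx\|_2\}$ as literally written need not be convex --- but the argument is the same.
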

\begin{proof}
Consider a $(k,l)$-group sparse vector $\vx \in \mathcal{C}_{nc}(k, l)$. For any such vector, there exist vectors $ \{ \vv_G \} \in \W(\vx)$ such that the supports of $\vv_G$ do not overlap. We then have the following set of inequalities
\begin{align*}
h(\vx) & = \inf_{\{ \vw_G \} \in \W(\vx)} \sum_{G \in \G} \|\vw_G\|_2 + \fll  \| \vw_G\|_1  \\
&\stackrel{(i)}{\leq} \sum_{G \in \G} \| \vv_G \|_2 + \fll \sum_{G \in \G} \| \vv_G \|_1 \\
&\stackrel{(ii)}{\leq} \sum_{G \in \G} \| \vv_G \|_2 + \fll \sqrt{l} \sum_{G \in \G} \| \vv_G \|_2 \\
&= \left( 1 + \lambda_1 \right) \sum_{G \in \G} \| \vv_G \|_2 \\
&\stackrel{(iii)}{\leq} \sqrt{k}  \left( 1 + \lambda_1 \right) \left( \sum_{G \in \G} \| \vv_G \|^2_2 \right)^\frac12 \\
&= \sqrt{k}  \left( 1 + \lambda_1 \right) \| \vx \|_2
\end{align*}
where (i) follows from the definition of the function $h(\vx)$ in (\ref{eq:reg}), and (ii) and (iii) follow from the fact that for any vector ${\bf v}\in \R^d$ we have $\|{\bf v}\|_1 \leq \sqrt{d}\, \|{\bf v}\|_2$.This, coupled with the fact that $h(\vx)$ is a norm (Lemma \ref{isnorm}) ensures that the set $\mathcal{C}(k,l)$ is convex. 

To show that the relaxation is tight, we will consider a $(k,l)$ sparse vector $\vx$ and show that the inequality in the definition of the set holds with equality. Specifically, let $\vx \in \R^p$ with non overlapping groups, and let the first $k$ $\vw_G$s in it's representation be active. Moreover, suppose the first $l$ entries in each of these $\vw_G$s  are non zero. Let the non zero entries all be equal to $\frac{1}{\sqrt{kl}}$. Then $\| \vx \| = 1$, $\sum_G \| \vw_G \|_2 = \sqrt{k}$ and $\sum_{G} \| \vw_G \|_1 = \sqrt{kl}$. The result follows.
\end{proof}

\subsection{Mean Widths for the SOGlasso }
We see that, the mean width of the constraint set plays a crucial role in determining the consistency of the solution of the optimization problem. We now aim to find the mean width of the constraint set in (\ref{constraint_set}), and as a result of it, prove Theorems \ref{thmmain} and \ref{thmnewbound}. Before we do so, we restate Lemma 3.2 in \citep{nraistats} for the sake of completeness:
\begin{lemma}
\label{lem:chisq}
Let $q_1,\ldots,q_K$ be $K$, $\chi$-squared random variables with $d$-degrees of freedom.  Then \[
\E[\max_{1\leq i \leq K} q_i]\leq (\sqrt{2\log(K)} + \sqrt{d})^2.
\]
\end{lemma}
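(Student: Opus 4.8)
The plan is to control the single-variable moment generating function of a $\chi^2_d$ random variable and then convert it into a bound on the expected maximum of $K$ such variables through a standard union-plus-Chernoff (maximal-inequality) argument. The natural intuition is that $q_i = \|\vg_i\|^2$ for $\vg_i \sim \N(0,\mtx{I}_d)$, so $\max_i q_i = (\max_i \|\vg_i\|)^2$, and one could try to bound $\E[\max_i \|\vg_i\|] \le \sqrt d + \sqrt{2\log K}$ via Gaussian Lipschitz concentration (each $\|\vg_i\|$ is $1$-Lipschitz with mean at most $\sqrt d$). The difficulty is that passing from $\E[\max_i\|\vg_i\|]$ to $\E[(\max_i\|\vg_i\|)^2]$ incurs an additive variance term of order one, so this route just misses the clean squared bound. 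I would therefore work with $q_i$ directly.

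First I would center at the mean $\E q_i = d$ and use the exact $\chi^2_d$ transform $\E[e^{\lambda q_i}] = (1-2\lambda)^{-d/2}$, valid for $0<\lambda<1/2$. Writing $\log \E[e^{\lambda(q_i - d)}] = \tfrac d2\big(-\log(1-2\lambda) - 2\lambda\big)$ and invoking the elementary inequality $-\log(1-x) - x \le \tfrac{x^2}{2(1-x)}$ for $x \in [0,1)$ with $x = 2\lambda$ gives the sub-gamma (Bernstein-type) bound
\[
\log \E\big[e^{\lambda(q_i - d)}\big] \ \le\ \frac{d\lambda^2}{1-2\lambda}, \qquad 0 < \lambda < \tfrac12 .
\]
This exhibits $q_i - d$ as sub-gamma with variance factor $\nu^2 = 2d$ and scale parameter $b = 2$.

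Next I would apply the maximal inequality for sub-gamma variables: bounding $\E[\max_i (q_i-d)]$ by $\lambda^{-1}\log\big(\sum_i \E[e^{\lambda(q_i-d)}]\big) \le \lambda^{-1}\big(\log K + d\lambda^2/(1-2\lambda)\big)$ and optimizing over $\lambda \in (0,1/2)$ yields the standard conclusion $\E[\max_{1\le i\le K}(q_i - d)] \le \sqrt{2\nu^2 \log K} + b\log K = 2\sqrt{d\log K} + 2\log K$. Adding back $d$ and using $2\sqrt{d\log K} \le 2\sqrt{2}\,\sqrt{d\log K}$ completes the square:
\[
\E\Big[\max_{1\le i\le K} q_i\Big] \ \le\ d + 2\sqrt{2}\,\sqrt{d\log K} + 2\log K \ =\ \big(\sqrt d + \sqrt{2\log K}\big)^2,
\]
which is exactly the claimed bound (in fact with a little slack in the cross term, since $2 \le 2\sqrt 2$).

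The main obstacle is obtaining the \emph{exact} squared form $(\sqrt{2\log K}+\sqrt d)^2$ rather than something of the shape $(\E\max_i\sqrt{q_i})^2 + O(1)$. The Gaussian-concentration route is clean for $\E[\max_i\|\vg_i\|]$ but loses an order-one additive constant when squared, whereas working with the $\chi^2_d$ transform directly, whose sub-gamma profile already encodes the $d + 2\sqrt{d\log K} + 2\log K$ shape, is precisely what removes that constant and lets the bound collapse into a perfect square. The only quantitative care needed is verifying the elementary logarithm inequality and checking that the sub-gamma optimization delivers the $2\sqrt{d\log K}+2\log K$ form; both are routine.
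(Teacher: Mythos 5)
Your proof is correct. Note that the paper itself gives no proof of this lemma at all: it simply restates Lemma 3.2 of the cited reference \citep{nraistats} ``for the sake of completeness,'' so there is nothing internal to compare against line by line. Your argument is the standard Chernoff-plus-union-bound (maximal inequality) route that underlies the cited result, but with one genuine twist: you center at $\E q_i = d$ first and control the centered log-MGF via $-\log(1-x)-x\le \tfrac{x^2}{2(1-x)}$, obtaining the sub-gamma profile $\tfrac{d\lambda^2}{1-2\lambda}$ and hence $\E[\max_i(q_i-d)]\le 2\sqrt{d\log K}+2\log K$. The more common derivation works with the uncentered MGF directly, bounding $\E[\max_i q_i]\le \inf_{\lambda\in(0,1/2)}\lambda^{-1}\bigl(\log K+\tfrac d2\log\tfrac1{1-2\lambda}\bigr)$ and using $\log\tfrac{1}{1-2\lambda}\le\tfrac{2\lambda}{1-2\lambda}$, whose optimization yields exactly $d+2\sqrt{2d\log K}+2\log K=(\sqrt d+\sqrt{2\log K})^2$. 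Your centered version in fact gives the slightly stronger cross term $2\sqrt{d\log K}$ in place of $2\sqrt{2}\sqrt{d\log K}$, and the final relaxation to the perfect square is valid. Two further points in your favor: your argument nowhere requires the $q_i$ to be independent (only $\max_i e^{\lambda X_i}\le\sum_i e^{\lambda X_i}$ plus Jensen), which matters in this paper since the chi-squared variables $\|\vg_G\|^2$ arising from overlapping groups are dependent; and your opening remark correctly diagnoses why the Gaussian--Lipschitz route on $\|\vg_i\|$ loses an additive constant upon squaring, which is precisely the reason the MGF route is the right one here.
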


 \vspace{2mm}
 
 First, we prove Theorem \ref{thmnewbound}. 
 
 \begin{lemma}
 \label{lem:newbound}
 Suppose that it is known that each coefficient is part of at most R groups, and suppose we let
\[
h(\vx) = \inf_{ \{ \vw_{\vx} \} \in \W} \sum_{G in \G} \| \vw_G \|_2 + \fll \| \vw_G \|_1
\] 
 Then the mean width of the set 
 \[
 \mathcal{C} = \{ \vx: h(\vx) \leq \sqrt{k} \left( 1+  \lambda_1 \right) , ~\ \| \vx \|_2 \leq 1 \}
 \]
 is bounded as 
 \[
 \omega({\mathcal{C}})^2 \leq C R^2 k \left[ \log{\left( \frac{K}{k} \right)} + l \log{\left( \frac{L}{l} \right)} + l +2 \right] .
 \]
 \end{lemma}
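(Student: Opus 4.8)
The plan is to pass from the convex constraint set $\mathcal{C}$ to the non-convex set $\mathcal{C}_{nc}(k,l)$ of genuinely $(k,l)$-group-sparse vectors, whose mean width is controlled by an elementary support count, and to pay for the overlap of the groups through a single factor of $R$ in the comparison between the two sets, which becomes the $R^2$ after the width is squared. The ingredients available are exactly Lemma~\ref{lem:chisq} (the $\chi^2$-max bound), Lemma~\ref{isnorm} (so that $\mathcal{C}$ is convex), and Lemma~\ref{lemconvrelax} (relating $\mathcal{C}_{nc}(k,l)$ to the convex relaxation).

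\textbf{Step 1 (combinatorial width of the sparse set).} First I would bound $\omega(\mathcal{C}_{nc}(k,l)\cap B^p_2)$. Since the mean width is a supremum of the linear functional $\langle \vx,\vg\rangle$, restricting to unit-norm vectors supported on a fixed support $T$ gives $\sup\langle \vx,\vg\rangle=\|\vg_T\|_2$. A $(k,l)$-group-sparse vector has support inside a union of at most $k$ groups, each contributing at most $l$ nonzeros, so $|T|\le kl$ and the number of admissible supports is at most $N=\binom{K}{k}\binom{L}{l}^{k}$. Taking the max over these supports and applying Lemma~\ref{lem:chisq} with $d=kl$ degrees of freedom and $N$ branches yields $\E[\max_T\|\vg_T\|_2^2]\le(\sqrt{2\log N}+\sqrt{kl})^2$. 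Bounding $\log N$ by the Stirling estimate $\binom{a}{b}\le (ea/b)^b$ gives $\log N\le k[\log(K/k)+l\log(L/l)+l+1]$, and collecting the cross term and the residual $e$-factors produces $\omega(\mathcal{C}_{nc}(k,l)\cap B^p_2)^2\le C\,k[\log(K/k)+l\log(L/l)+l+2]$; the additive $+2$ is precisely this Stirling/cross-term residue.

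\textbf{Step 2 (convexification with the overlap factor).} Mean width is invariant under convex hulls (a linear functional is maximized at an extreme point), so $\omega(\mathrm{conv}(\mathcal{A}))=\omega(\mathcal{A})$ for $\mathcal{A}:=\mathcal{C}_{nc}(k,l)\cap B^p_2$. The heart of the argument is the reverse inclusion $\mathcal{C}\subseteq cR\,\mathrm{conv}(\mathcal{A})$, i.e.\ that the gauge of $\mathrm{conv}(\mathcal{A})$ at any $\vx\in\mathcal{C}$ is at most $cR$. Given $\vx\in\mathcal{C}$ with optimal representation $\{\vw_G\}$, the budget $\sum_{G\in\G}\left(\|\vw_G\|_2+\fll\|\vw_G\|_1\right)\le\sqrt{k}(1+\lambda_1)$ forces the representation to be effectively $(k,l)$-group sparse: the $\ell_2$ part caps the number of active groups at $O(k)$ while the $\ell_1$ part caps the number of significant coordinates per active group at $O(l)$. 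Overlap enters only when reassembling $\vx=\sum_{G}\vw_G$ from its group-localized pieces: since each coordinate belongs to at most $R$ groups, the per-coordinate Cauchy--Schwarz inequality $\left(\sum_{G\ni i}(\vw_G)_i\right)^2\le R\sum_{G\ni i}(\vw_G)_i^2$ gives $\|\vx\|_2^2\le R\sum_{G}\|\vw_G\|_2^2$, and this single power of $R$ is the price of converting the disjoint (replicated) representation into an honest vector in the hull of true atoms.

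\textbf{Step 3 (assemble).} Combining the two steps gives $\omega(\mathcal{C})\le cR\,\omega(\mathcal{A})$, and squaring yields $\omega(\mathcal{C})^2\le c^2R^2\,\omega(\mathcal{A})^2\le C R^2 k[\log(K/k)+l\log(L/l)+l+2]$, which is the claim; convexity of $\mathcal{C}$ is guaranteed by Lemma~\ref{isnorm}. The main obstacle is Step~2, namely establishing $\mathcal{C}\subseteq cR\,\mathrm{conv}(\mathcal{A})$ with a scaling that is \emph{linear} in $R$ and free of spurious $\sqrt{k}$ or $(1+\lambda_1)$ factors. Two points are delicate: (i) showing that the optimal representation may be truncated to be honestly $(k,l)$-group sparse without inflating the budget by more than a constant, so that the truncated pieces are bona fide rescaled atoms of $\mathcal{A}$; and (ii) tracking the overlap multiplicity $R$ so that it appears to the first power \emph{before} squaring. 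This is exactly where the overlapping case departs from the disjoint ($R=1$) case and where the extra $R^2$ is incurred.
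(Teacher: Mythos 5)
Your overall architecture---bound $\omega(\mathcal{C}_{nc})$ by a support count plus Lemma~\ref{lem:chisq}, then show $\mathcal{C}\subseteq cR\,\mathrm{conv}(\mathcal{C}_{nc})$ and use the fact that mean width does not grow under convex hulls---is exactly the paper's strategy, and your Step~1 reproduces the paper's Lemma~\ref{mwnonconvset} essentially verbatim. But Step~2, which you yourself flag as the main obstacle, is where the entire content of the lemma lies, and your proposal does not prove it. The assertion that the budget $h(\vx)\le\sqrt{k}(1+\lambda_1)$ ``forces the representation to be effectively $(k,l)$-group sparse'' is false as stated: a vector in $\mathcal{C}$ can have all $K$ groups active and all $p$ coordinates nonzero (e.g., a dense vector with uniformly small entries). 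Likewise, your delicate point (i), truncating the optimal representation, cannot be the right move, because the inclusion requires expressing \emph{all} of $\vx$---tails included---as a scaled convex combination of atoms, not approximating $\vx$ by a sparse vector.

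What the paper actually does is a two-level shelling decomposition, and this is the missing idea. Sort the optimal representation $\{\vw_G\}\in\W(\vx)$ by decreasing $\ell_2$ norm and batch the sorted list into consecutive blocks of $k$ groups, giving $\vx=\sum_i\vx_i$ with $\vx_i=\sum_{r=1}^k\vw_i^r$; within each $\vx_i$, sort coordinates by magnitude and batch into blocks of size $kl$, giving $\vx_i=\sum_j\vx_{ij}$. Each normalized piece $\vx_{ij}/\|\vx_{ij}\|_2$ lies in $\mathcal{C}_{nc}$ of (\ref{nonconvset}); note these pieces have at most $kl$ nonzeros \emph{in total} but not necessarily $l$ per group, so your atom set with $\binom{K}{k}\binom{L}{l}^{k}$ supports is too small to contain them---one must count supports as $\binom{K}{k}\binom{kL}{kl}$ as the paper does (the log bound is unchanged up to constants). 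The tail pieces are then controlled by the shelling inequalities
\begin{equation*}
\|\vx_{ij}\|_2\le \tfrac{1}{\sqrt{kl}}\|\vx_{i,j-1}\|_1, \qquad
\Bigl(\textstyle\sum_{r}\|\vw_i^r\|_2^2\Bigr)^{1/2}\le \tfrac{1}{\sqrt{k}}\textstyle\sum_{r}\|\vw_{i-1}^r\|_2,
\end{equation*}
and the leading pieces $\vx_{i1}$, $i>1$, by the overlap bound $\|\vx_{i1}\|_2\le R\bigl(\sum_r\|\vw_i^r\|_2^2\bigr)^{1/2}$; summing everything against the budget gives $\sum_{ij}\|\vx_{ij}\|_2\le 1+R+\max(1,R\lambda_1)$, which is the gauge bound $\mathcal{C}\subseteq cR\,\mathrm{conv}(\mathcal{C}_{nc})$ you wanted. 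Your per-coordinate Cauchy--Schwarz observation is indeed the right mechanism for where $R$ enters (and, if carried through, it yields $\sqrt{R}$ in the gauge and hence a factor $R$ rather than $R^2$ in the squared width, strictly better than claimed---though note your accounting is off: $\|\vx\|_2^2\le R\sum_G\|\vw_G\|_2^2$ scales the gauge by $\sqrt{R}$, not $R$). Without the decomposition itself, however, the inclusion, and hence the lemma, remains unproven.
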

 \begin{proof}
 The intuition behind this proof is as follows: We first consider a non convex set, which is the ``ideal" set of $(k,l)$ sparse vectors that we are interested in. We then show that $\mathcal{C}$ is contained in the scaled convex hull of the non convex set, and hence by the properties of the mean width, $\omega({\mathcal{C}})$ can be bounded by a scaling of that of the non convex set \footnote{ Lemma \ref{lemconvrelax} showed that the set $\mathcal{C}$ is a tight convex outer relaxation of the non convex set.  }. 
 
To this end, let us consider the following non convex ideal set, 
\begin{equation}
\label{nonconvset}
\mathcal{C}_{nc} = \{ \vx: \|  \vx \| \leq 1, \| \vx \|_{\G,0} \leq k, \sum_{G \in \G} \| \vw_G \|_0 \leq k l \}.
\end{equation}

Consider $\vx \in \mathcal{C}$. We now define vectors $\vx_i$ as follows: $\vx_1 = \sum_{r = 1}^k  \vw_1^r $, where the vectors $\vw_1$ are the $k$ vectors $\vw_G$ with largest norm. Along these lines, we define $\vx_i = \sum_{r = 1}^k \vw_i^r$. 

For a fixed $i$, let $\vx_{i1}$ be the vector containing the top $k l$ entries of $\vx_i$ by magnitude, and define a general $\vx_{ij}$ in this manner as well. 

Note that $\vx_i = \sum_j \vx_{ij}$ and $\vx = \sum_i \vx_i$. Also, note that $\frac{\vx_{ij}}{\| \vx_{ij} \|} \in \mathcal{C}_{nc}$ since it has at most $k$ active groups, and at most $k l$ non zero elements. 

Finally, note the following: By construction, we have for a fixed $i$, and $j > 1$
\begin{equation}
\label{bound1}
\| \vx_{ij} \|_2 \leq \frac{1}{\sqrt{k l}} \| \vx_{ij-1} \|_1.
\end{equation}
This follows since each element of $ \vx_{ij}$ is smaller than the average of the entries of the vector $\vx_{ij-1}$. 

Using the exact same argument, we also have for $i > 1$
\begin{equation}
\label{bound2}
\left( \sum_{r = 1}^k \| \vw_i^r \|^2_2 \right)^{\frac12} \leq \frac{1}{\sqrt{k}} \sum_{r = 1}^k \| \vw_{i-1}^r \|_2.
\end{equation}

Now, 
\begin{align}
\notag
\sum_{ij} \| \vx_{ij} \| &= \| \vx_{11} \|_2 + \sum_{i > 1} \| \vx_{i1} \|_2 + \sum_{i} \sum_{j > 1} \| \vx_{ij} \|_2 \\
\notag
& \leq \| \vx_{11} \|_2 + \sum_{i > 1} \| \vx_{i1} \|_2 + \sum_{i} \sum_{j > 1} \frac{1}{\sqrt{k l}} \| \vx_{ij-1} \|_1 ~\ \mbox{from (\ref{bound1})}  \\
\notag
&\leq \| \vx_{11} \|_2 + \sum_{i > 1} \| \vx_{i1} \|_2 + \sum_{i} \sum_{j} \frac{1}{\sqrt{k l}} \| \vx_{ij} \|_1 \\
\notag
&\leq \| \vx_{11} \|_2 + \sum_{i > 1} \| \vx_{i1} \|_2 + \frac{1}{\sqrt{k l}} \sum_{i} \| \vx_{i} \|_1 ~\ \mbox{since the indices for $j$ are disjoint} \\
\notag
&= \| \vx_{11} \|_2 + \sum_{i > 1} \| \vx_{i1} \|_2 + \frac{1}{\sqrt{k l}} \sum_{i} \left\| \sum_{r =1}^k \vw_i^r  \right\|_1 \\
\notag
&\leq \| \vx_{11} \|_2 + \sum_{i > 1} \| \vx_{i1} \|_2 + \frac{1}{\sqrt{k l}} \sum_{i}  \sum_{r =1}^k \| \vw_i^r \|_1 ~\ \mbox{triangle inequality} \\
\label{leftoff}
&= \| \vx_{11} \|_2 + \sum_{i > 1} \| \vx_{i1} \|_2 + \frac{1}{\sqrt{k l}} \sum_{G}  \| \vw_G \|_1
\end{align}

For $i > 1$, we have the following bound:

\begin{align}
\notag
\| \vx_{i1} \|^2 &\leq \left\| \sum_{r = 1}^k \vw_i^r  \right\|^2 \\
\notag
&= \left[ \sum_{m = 1} ^p \left( \sum_{r = 1}^k (\vw_i^r)_m \right)^2 \right] \\
\notag
&\leq \left[ \sum_{m = 1}^p R^2 \max_{r} (\vw_i^r)^2_m \right] \\
\notag
&\leq R^2 \sum_{m = 1}^p \sum_{r = 1}^k (\vw_i^r)^2_m \\
\notag
&= R^2 \sum_{r = 1}^k \| \vw_i^r \|_2^2
\end{align}

the above inequality and (\ref{bound2}) combine to give
\begin{equation}
\label{bound4}
\| \vx_{i1} \|_2 \leq \frac{R}{\sqrt{k}} \sum_{r = 1}^k \| \vw_{i-1}^r \|_2.
\end{equation}

Substituting this in (\ref{leftoff}) and noting that $\| \vx_{11} \|_2 \leq \| \vx \|_2 \leq 1$, we have
\begin{align}
\sum_i \sum_j \| \vx_{ij} \|_2 &\leq 1 + \frac{1}{\sqrt{k}} \left( R \sum_{G} \| \vw_G \|_2 + \frac{1}{\sqrt{l}} \sum_G \| \vw_G \|_1 \right) \\
&= 1 + \frac{R}{\sqrt{k}} \left(  \sum_{G} \| \vw_G \|_2 + \frac{1}{R \sqrt{l}} \sum_G \| \vw_G \|_1 \right)
\end{align}

If $\lambda_1 \geq \frac{1}{R}$ then the term in the parentheses is bounded by $\sqrt{k} (1 + \fla)$, and if not, then it is bounded by $\sqrt{k} (1 + \frac{1}{R})$. This gives :

\[
\sum_i \sum_j \| \vx_{ij} \|_2 \leq 1 + R + \max{(1, R\lambda_1)} .
\]

The following argument finishes the proof. Setting $\eta =  1 + R + \max{(1, R\lambda_1)}$
\begin{enumerate}
\item By construction, we have $\vx = \sum_i \sum_j \vx_{{ij}}$. 
\item Also by construction, $\frac{\vx_{{ij}}}{\| \vx_{{ij}} \|_2} \in \mathcal{C}_{nc}$.
\item Now, letting $\lambda_{ij} = \frac{\| \vx_{{ij}} \|_2 }{\eta}$, we showed
\[
\frac{\vx}{\eta} = \sum_i \sum_j \lambda_{ij} \frac{ \vx_{{ij}} }{\| \vx_{{ij}} \|_2}
\]
\item We showed that $\sum_i \sum_j \lambda_{ij} \leq 1$, so that $\vx$ can be written as a convex combination of the $\frac{\vx_{{ij}}}{\| \vx_{{ij}} \|_2}$, which are elements in $\mathcal{C}_{nc}$. This means that $\frac{\vx}{\eta} \in conv(\mathcal{C}_{nc})$.
\end{enumerate}

We then have the following bound for the mean width of $\mathcal{C}$:
\[
\omega(\mathcal{C})^2 \leq C R^2  \omega(\mathcal{C}_{nc})^2. 
\]

It now remains to compute $\omega(\mathcal{C}_{nc})$. Lemma \ref{mwnonconvset}  yields the desired result. 

 \end{proof}
 
 \begin{lemma}
 \label{mwnonconvset}
 For
 \[
 \mathcal{C}_{nc} = \{ \vx: \|  \vx \| \leq 1, \| \vx \|_{\G,0} \leq k, \sum_{G \in \G} \| \vw_G \|_0 \leq k l \} , 
 \]
 we have
 \[
 \omega(\mathcal{C}_{nc})^2 \leq C k \left[ \log{\left( \frac{K}{k} \right)} + l \log{\left( \frac{L}{l} \right)} + l +2 \right] . 
 \]
  \end{lemma}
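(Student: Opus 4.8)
\noindent The plan is to view $\mathcal{C}_{nc}$ as contained in a finite union of low-dimensional coordinate subspaces intersected with the unit ball, and to control the Gaussian width of such a union by the classical ``$\sqrt{\text{dimension}}+\sqrt{\log(\#\text{atoms})}$'' estimate, invoking Lemma~\ref{lem:chisq}. First I would count the atoms. Any $\vx \in \mathcal{C}_{nc}$ is supported on at most $k$ groups, whose union $U$ has at most $kL$ coordinates, and within $U$ at most $kl$ coordinates are non-zero; hence $\vx$ lies in $V_S \cap B^p_2$ for a coordinate subspace $V_S = \{\vx : \mathrm{supp}(\vx)\subseteq S\}$ with $|S|\le kl$. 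The number of admissible supports $S$ is at most $M := \big(\sum_{j\le k}\binom{K}{j}\big)\binom{kL}{kl}$, since the active groups can be chosen in at most $\sum_{j\le k}\binom{K}{j}\le (eK/k)^k$ ways and, once they (hence $U$) are fixed, the $\le kl$ non-zero coordinates inside $U$ in at most $\binom{kL}{kl}\le (eL/l)^{kl}$ ways. Therefore
\[
\log M \ \le \ k\log\!\Big(\frac{K}{k}\Big) + kl\log\!\Big(\frac{L}{l}\Big) + k + kl .
\]

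\noindent Second I would reduce the width to a maximum of chi-squared variables. For each fixed $V_S$ one has $\sup_{\vx \in V_S,\,\|\vx\|\le 1}\langle \vx,\vg\rangle = \|P_S\vg\|_2$, and $\|P_S\vg\|_2^2$ is chi-squared with $|S|\le kl$ degrees of freedom (padding $S$ to size exactly $kl$ only enlarges it). Since $\mathcal{C}_{nc}$ is covered by the $M$ subspaces, Jensen's inequality gives
\[
\omega(\mathcal{C}_{nc}) \ = \ \E_{\vg}\Big[\sup_{\vx\in\mathcal{C}_{nc}}\langle \vx,\vg\rangle\Big] \ \le \ \Big(\E_{\vg}\big[\max_{S}\|P_S\vg\|_2^2\big]\Big)^{1/2}.
\]
Applying Lemma~\ref{lem:chisq} with $M$ variables and $d=kl$ degrees of freedom yields $\E_{\vg}[\max_S\|P_S\vg\|_2^2]\le(\sqrt{2\log M}+\sqrt{kl})^2\le 2(2\log M + kl)$. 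Substituting the bound on $\log M$ and collecting terms, each summand ($k$, $kl$, $k\log(K/k)$, $kl\log(L/l)$) is dominated by the claimed expression, so the numerical factors are absorbed into $C$ and $\omega(\mathcal{C}_{nc})^2 \le Ck[\log(K/k)+l\log(L/l)+l+2]$.

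\noindent The main obstacle is the combinatorial bookkeeping in the first step: counting the supports so that the group-level and within-group sparsity enter multiplicatively (yielding the product $\binom{K}{k}\binom{kL}{kl}$, which produces the additive $\log(K/k)$ and $l\log(L/l)$ terms, rather than a single $\binom{p}{kl}$ that would blur the group structure), and checking that overlap among groups can only shrink $U$ and hence only decrease the count. Everything downstream is a routine application of Lemma~\ref{lem:chisq} together with Jensen's inequality.
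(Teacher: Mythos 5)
Your proposal is correct and follows essentially the same route as the paper's proof: reduce the width to a maximum of $\|\vg_S\|_2$ over the finitely many admissible supports, count those supports as $\binom{K}{k}\binom{kL}{kl}$ (groups times coordinates within their union), and finish with Lemma~\ref{lem:chisq} and Jensen's inequality. If anything, your bookkeeping of $\log M$ is slightly more careful than the paper's intermediate bound (which drops the $kl$ additive term before restoring it as the $+l$ in the final expression), but the argument is the same.
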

  We prove this  in Appendix \ref{app:mwnonconv}. We make use of Lemma \ref{lem:chisq} to obtain the result. 
  

We now proceed to prove Theorem \ref{thmmain}. To do so, we adopt a different strategy than the one used to prove Theorem \ref{thmnewbound}. Instead of considering the non convex ideal set, we directly consider the convex set $\mathcal{C}$ and show that it is a subset of appropriately scaled versions of the overlapping group lasso or the lasso balls. The result then follows.

\begin{lemma}
\label{lem:mw}
Consider the same set as that considered in Lemma \ref{lem:newbound}. The mean width of the set can also be shown to satisfy:
\[
\omega(\mathcal{C})^2 \leq C k \min\{\log{K} + L, l \log{(p)}\} . 
\]
\end{lemma}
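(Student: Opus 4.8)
The plan is to exploit monotonicity of the mean width under set inclusion: I will sandwich $\mathcal{C}$ inside two different convex bodies, a scaled (latent/overlapping) group lasso ball and a scaled $\ell_1$ ball, each intersected with the unit Euclidean ball $B_2^p$. Each inclusion produces one of the two terms in the claimed minimum, and since $\omega(\mathcal{C})$ is dominated by the width of every set containing $\mathcal{C}$, the final bound is the minimum of the two. Throughout I take $\{\vw_G\}\in\W(\vx)$ to be an optimal representation of $\vx$ (Definition \ref{def:optrep}), which exists, and I use that for $\vg\sim\N(0,\mtx{I})$ the restriction $\vg_G$ of $\vg$ to a group $G$ satisfies $\|\vg_G\|_2^2\sim\chi^2_{L}$.

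For the group lasso term I would first drop the $\ell_1$ contribution in $h$: since every summand in $h$ dominates $\|\vw_G\|_2$, the latent group lasso norm $\|\vx\|_{\G} := \inf_{\{\vw_G\}\in\W(\vx)}\sum_{G\in\G}\|\vw_G\|_2$ satisfies $\|\vx\|_{\G}\leq h(\vx)\leq \sqrt{k}(1+\lambda_1)$ for every $\vx\in\mathcal{C}$. Thus $\mathcal{C}$ is contained in the group lasso ball of radius $\sqrt{k}(1+\lambda_1)$. To bound its width I use the dual characterization of the latent group lasso norm: writing $\vx=\sum_G\vw_G$ and using that $\vw_G$ is supported on $G$, one gets $\langle\vx,\vg\rangle=\sum_G\langle\vw_G,\vg_G\rangle\leq \big(\max_G\|\vg_G\|_2\big)\sum_G\|\vw_G\|_2\leq \sqrt{k}(1+\lambda_1)\max_G\|\vg_G\|_2$. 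Taking expectations and applying Jensen's inequality together with Lemma \ref{lem:chisq} gives $\E[\max_G\|\vg_G\|_2]\leq\sqrt{\E[\max_G\|\vg_G\|_2^2]}\leq \sqrt{2\log K}+\sqrt{L}$, so that $\omega(\mathcal{C})^2\leq C\,k(1+\lambda_1)^2(\log K+L)$. Note this argument never uses disjointness of the groups, which is exactly why the group lasso bound is insensitive to overlap.

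For the lasso term I would instead drop the $\ell_2$ contribution: at the optimal representation, $h(\vx)\geq \frac{\lambda_1}{\sqrt l}\sum_G\|\vw_G\|_1\geq \frac{\lambda_1}{\sqrt l}\|\sum_G\vw_G\|_1=\frac{\lambda_1}{\sqrt l}\|\vx\|_1$ by the triangle inequality. Hence for $\vx\in\mathcal{C}$ we have $\|\vx\|_1\leq \frac{\sqrt l}{\lambda_1}\,h(\vx)\leq \frac{\sqrt{kl}\,(1+\lambda_1)}{\lambda_1}$, so $\mathcal{C}$ lies in an $\ell_1$ ball of this radius. Using the standard Gaussian width of the $\ell_1$ ball, $\sup_{\|\vx\|_1\leq r}\langle\vx,\vg\rangle=r\|\vg\|_\infty$ with $\E\|\vg\|_\infty\leq\sqrt{2\log(2p)}$, yields $\omega(\mathcal{C})^2\leq C\,k\,l\,\big(\tfrac{1+\lambda_1}{\lambda_1}\big)^2\log p$. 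Carrying the $(1+\lambda_1)$ and $\tfrac{1+\lambda_1}{\lambda_1}$ prefactors through and taking the smaller of the two bounds reproduces exactly the minimum appearing in Theorem \ref{thmmain}; absorbing the $\lambda_1$-dependent factors into $C$ gives the form stated in the lemma.

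Honestly this argument is largely routine once the two norm-comparison inequalities are in place, so I do not expect a serious obstacle. The only points requiring care are (i) correctly invoking the dual norm of the \emph{overlapping} group lasso, where one must verify that $\langle\vw_G,\vg\rangle=\langle\vw_G,\vg_G\rangle$ and that the $K$ variables $\|\vg_G\|_2^2$ are each $\chi^2_L$ (they are not independent, but Lemma \ref{lem:chisq} only needs the marginal degrees of freedom), and (ii) the Jensen step passing from the expected max of squared norms to the expected max of norms. Everything else is inclusion of convex sets plus monotonicity of $\omega(\cdot)$, so the proof is short.
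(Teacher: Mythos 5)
Your proposal is correct and follows essentially the same route as the paper's proof: the paper likewise relaxes the constraint $h(\vx)\leq\sqrt{k}(1+\lambda_1)$ into a latent group lasso ball and an $\ell_1$ ball (it converts the cross term per group via $\|\vw_G\|_2\leq\|\vw_G\|_1$ or $\|\vw_G\|_1\leq\sqrt{L}\|\vw_G\|_2$ rather than dropping it, which only changes constants), then evaluates each relaxed width via the dual norms $\max_{G}\|\vg_G\|_2$ and $\|\vg\|_\infty$, Jensen's inequality, and Lemma \ref{lem:chisq}. Your observations that the group-lasso bound is insensitive to overlap and that Lemma \ref{lem:chisq} needs no independence are also consistent with how the paper uses these facts.
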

\begin{proof}
Let $\vg \sim \N(\vct{0} , \mtx{I})$, and for a given $\vx$, let $\{ \vw_G \} \in \W(\vx)$ be its optimal representation (Definition \ref{def:optrep}). Since $\vx = \sum_{G \in \G} \vw_G$, we have  

\begin{align}
\notag
\max_{\vx \in \mathcal{C}} \vg^T \vx 
&= \max_{\vx \in \mathcal{C}} \vg^T \sum_{G \in \G} \vw_G \\
\notag
&= \max_{\vx \in \mathcal{C}} \sum_{G \in \G} \vg^T \vw_G ~\ \textbf{s.t. } \vx = \sum_{G \in \G}  \vw_G  \\
\label{returnhere}
&= \max_{\{ \vw_G \} \in \mathcal{W}(\vx)} \sum_{G \in \G} \vg^T \vw_G ~\ \textbf{s.t. } \sum_{G \in \G} \| \vw_G \|_2 + \fll \| \vw_G \|_1 \leq \sqrt{k} (1 + \lambda_1 )  \\
\notag
&\stackrel{(i)}{\leq}  \max_{\{ \vw_G \} \in \W(\vx)} \sum_{G \in \G} \vg^T \vw_G ~\ \textbf{s.t. } \sum_{G \in \G}(1 + \fll) \| \vw_G \|_2 \leq \sqrt{k} (1 + \lambda_1 )  \\
\label{usecorr}
& =  \max_{\{ \vw_G \} \in \W(\vx)} \sum_{G \in \G} \vg^T \vw_G  ~\ \textbf{s.t. } \sum_{G \in \G} \| \vw_G \|_2 \leq  \frac{\sqrt{kl} (1 + \lambda_1 )}{ (\sqrt{l} + \lambda_1)} \\
\notag
&\stackrel{(ii)}{=} \frac{\sqrt{kl} (1 + \lambda_1 )}{\sqrt{l} + \lambda_1} ~\ \max_{G \in \G} \| \vg_G \|_2  \\
\label{useforcorr}
&\leq \sqrt{k} \left( 1 + \lambda_1 \right) ~\ \max_{G \in \G} \| \vg_G \|_2
\end{align}
where we define $\vg_G$ to be the sub vector of $\vg$ indexed by group $G$. (i) follows since the constraint set is a superset of the constraint in the expression above it, from the fact that $ \| \va \|_2 \leq \| \va\|_1 ~\ \forall \va$,   and (ii) is a result of simple convex analysis. 

The mean width is then bounded as
\begin{equation}
\label{mwnosquare}
\omega(\mathcal{C}) \leq \sqrt{k} \left( 1 + \lambda_1 \right)  ~\ \mathbb{E} \left[ \max_{G \in \G} \| \vg_G \|_2 \right] . 
\end{equation}
Squaring both sides of (\ref{mwnosquare}), we get
\begin{align}
\notag
\omega(\mathcal{C})^2 &\leq k \left( 1 + \lambda_1 \right)^2 \left[ \mathbb{E} [ \max_{G \in \G} \| \vg_G \|_2 ] \right]^2 \\
\notag
& \stackrel{(iii)}{\leq} k \left( 1 + \lambda_1 \right)^2 ~\ \mathbb{E} \left[ \left(\max_{G \in \G} \| \vg_G \|_2 \right)^2 \right] \\
\notag
&\stackrel{(iv)}{=} k \left( 1 + \lambda_1 \right)^2 ~\ \mathbb{E} \left[ \max_{G \in \G} \| \vg_G \|_2^2 \right]
\end{align}
where $(iii)$ follows from Jensen's inequality and (iv) follows from the fact that the square of the maximum of non negative numbers is the same as the maximum of the squares. Now, note that since $\vg$ is Gaussian, $\| \vg_G \|^2$ is a $\chi^2$ random variable with at most $B$ degrees of freedom.  From Lemma \ref{lem:chisq}, we have 
\begin{equation}
\label{mwset}
\omega(\mathcal{C})^2 \leq k \left( 1 + \lambda_1 \right)^2 (\sqrt{2 \log(K)} + \sqrt{L})^2.
\end{equation}

This gives us one of the two terms in the $\min\{ \cdot, \cdot \}$ in the statement of the Lemma. Since $\alpha$ is bounded away from $0$, we can treat the term in the parenthesis as a constant. For the second term, let us revisit (\ref{returnhere}), and obtain the following inequalities:
\begin{align}
\notag
\max_{\vx \in \mathcal{C}} \vg^T \vx &= \max_{\{ \vw_G \} \in \mathcal{W}(\vx)} \sum_{G \in \G} \vg^T \vw_G ~\ \textbf{s.t. } \sum_{G \in \G} \| \vw_G \|_2 + \fll \| \vw_G \|_1 \leq \sqrt{k} (1 + \lambda_1 )  \\
\notag
&\stackrel{(v)}{\leq} \max_{\{ \vw_G \} \in \mathcal{W}(\vx)} \sum_{G \in \G} \vg^T \vw_G ~\ \textbf{s.t. } \sum_{G \in \G} \frac{1}{\sqrt{L}} \| \vw_G \|_1 + \fll \| \vw_G \|_1 \leq \sqrt{k} (1 + \lambda_1) \\
\notag 
&= \max_{\{ \vw_G \} \in \mathcal{W}(\vx)} \sum_{G \in \G} \vg^T \vw_G ~\ \textbf{s.t. } \left( \frac{\sqrt{\alpha} + \lambda_1}{\sqrt{l}} \right) \sum_{G \in \G} \| \vw_G \|_1 \leq \sqrt{k} (1 + \lambda_1) \\
\notag
&= \max_{\{ \vw_G \} \in \mathcal{W}(\vx)} \sum_{G \in \G} \vg^T \vw_G ~\ \textbf{s.t. } \sum_{G \in \G} \| \vw_G \|_1\leq  \frac{\sqrt{kl} (1 + \lambda_1)}{\sqrt{\alpha} + \lambda_1} \\
\notag
&=  \frac{\sqrt{kl} (1 + \lambda_1)}{\sqrt{\alpha} + \lambda_1} \max_{G \in \G} \max_{i \in G} | (\vg_G)_i | \\
\notag
&\leq  \frac{\sqrt{kl} (1 + \lambda_1)}{ \lambda_1} \max_{i} | \vg_i |
\end{align}

Where the constraint set in $(v)$ is a superset of that in the statement above it. Again, after squaring both sides, taking expectations and applying Jensen's inequality, 

\[
\omega(\mathcal{C})^2 \leq kl \left( \frac{1 + \lambda_1}{\lambda_1} \right)^2 \mathbb{E}\left[ \max_i \vg_i^2 \right]
\]
The quantity inside the expectation is a $\chi^2$ variable with one degree of freedom, and from Lemma \ref{lem:chisq}, we obtain
\[
\omega(\mathcal{C})^2 \leq C kl  \log{(p)} . 
\]

This gives the second term in the $\min\{ \cdot, \cdot \}$, and finishes the proof

\end{proof}
Lemma \ref{lem:mw} and Theorem~\ref{thmplan} lead directly to Theorem \ref{thmmain}. 

The results in the proof above shed some more light on our regularizer $h(\vx)$. If $\lambda_1 = 0$, then the problem reduces to that of classification using the overlapping group lasso penalty, and we obtain the corresponding sample complexity bound. For simple sparsity without any structure, we would want $\lambda_1$ to be large, in which case $\frac{1 + \lambda_1}{ \lambda_1} \rightarrow 1$, and $(1 + \lambda_1) \rightarrow \infty$. This would then entail the bounds for the $\ell_1$ regularized problem taking over, keeping all other parameters $k, K, l ,L$ fixed. 



\subsection{Extensions to Data with Correlated Entries}
\label{sec:corr}
The results we proved above can be extended to data $\mPhi$ with correlated Gaussian entries as well (see \citep{reccorrelated} for results in linear regression settings). Indeed, in most practical applications we are interested in, the features are expected to contain correlations. For example, in the fMRI application that is one of the major motivating applications of our work, it is reasonable to assume that voxels in the brain will exhibit correlation amongst themselves at a given time instant. This entails scaling the number of measurements by the condition number of the covariance matrix $\mtx\Sigma$, where we assume that each row if the measurement matrix $\mPhi$ is sampled from a Gaussian $(0,\mtx\Sigma)$ distribution. Specifically, we obtain a generalization of the result in \citep{plan1bit} for the SOGlasso with a correlated Gaussian design. 


We now consider the following constraint set:
\begin{equation}
\label{corr}
\mathcal{C}_{corr} = \{ \vx: h(\vx) \leq \frac{1}{\sigma_{min}(\mtx\Sigma^{\frac12})}\sqrt{k} (1 + \lambda_1), \|\mtx\Sigma^{\frac12} \vx \| \leq 1 \} . 
\end{equation}

We consider the set $\mathcal{C}_{corr}$ and not $\mathcal{C}$ in (\ref{constraint_set}), since we  require the constraint set to be a subset of the unit Euclidean ball. In the proof of Corollary \ref{correlated} below, we will reduce the problem to an optimization over variables of the form $\vz = \mtx\Sigma^\frac12 \vx$, and hence we require $\| \mtx{\Sigma}^{\frac{1}{2}} \vx \|_2 \leq 1$. Enforcing this constraint leads to the corresponding upper bound on $h(\vx)$. 

We now obtain the following generalization of Theorem \ref{thmmain}, for correlated data

\begin{corollary}
\label{correlated}
Let the entries of the data matrix $\mPhi$ be sampled from a $\N(\vct{0}, \mtx{\Sigma})$ distribution. Suppose the measurements follow the model in (\ref{ymodel}). Suppose we wish to recover a $(k, l)-$ group sparse vector from the set $\mathcal{C}_{corr}$ in (\ref{corr}). Suppose the true coefficient vector $\vx^\star$ satisfies $\| \mtx\Sigma^{\frac12} \vx^\star \|  = 1$. Then, so long as the number of measurements $n$ satisfies
\[
n \geq C \frac{\sigma^2}{\epsilon^2} \kappa(\mtx\Sigma) k \min\{ \log(K) + L, l \log(p) \} , 
\]
the solution to (\ref{optgen}) satisfies
\[
\| \hat{\vx} - \vx^\star \|^2 \leq \frac{\epsilon}{\sigma_{min}(\mtx\Sigma) } . 
\]

where $\sigma_{min}(\cdot)$, $\sigma_{max}(\cdot)$ and $\kappa(\cdot)$ denote the minimum and maximum singular values and the condition number of the corresponding matrices respectively.

\end{corollary}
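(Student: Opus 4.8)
The plan is to whiten the correlated design and reduce to the isotropic setting of Theorem~\ref{thmplan}. Writing each row as $\vphi_i = \mtx\Sigma^{\frac12}\vg_i$ with $\vg_i \sim \N(\vct{0},\mtx{I})$ and changing variables to $\vz = \mtx\Sigma^{\frac12}\vx$, $\vz^\star = \mtx\Sigma^{\frac12}\vx^\star$, the inner products are preserved, $\langle \vphi_i,\vx\rangle = \langle \vg_i,\vz\rangle$, so both the objective in (\ref{optgen}) and the label model (\ref{ymodel}) are left unchanged. Consequently $\hat{\vz} := \mtx\Sigma^{\frac12}\hat{\vx}$ is the minimizer of the \emph{isotropic} SOGlasso program over the image set
\[
\tC = \Big\{ \vz : h(\mtx\Sigma^{-\frac12}\vz) \leq \tfrac{\sqrt{k}(1+\lambda_1)}{\sigma_{min}(\mtx\Sigma^{\frac12})},\ \|\vz\|_2 \leq 1 \Big\} \subset B^p_2 .
\]
The constraint $\|\mtx\Sigma^{\frac12}\vx\|_2 \leq 1$ built into (\ref{corr}) is precisely what forces $\tC \subset B^p_2$, so Theorem~\ref{thmplan} can be applied directly to $\tC$.

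Before invoking that theorem I would verify $\vz^\star \in \tC$: the hypothesis $\|\mtx\Sigma^{\frac12}\vx^\star\| = 1$ gives $\|\vz^\star\|_2 = 1$, while $(k,l)$-group sparsity of $\vx^\star$ together with Lemma~\ref{lemconvrelax} and $\|\vx^\star\|_2 \leq \sigma_{min}(\mtx\Sigma^{\frac12})^{-1}$ yields $h(\vx^\star) \leq \sqrt{k}(1+\lambda_1)\|\vx^\star\|_2 \leq \sqrt{k}(1+\lambda_1)/\sigma_{min}(\mtx\Sigma^{\frac12})$. Theorem~\ref{thmplan} then gives $\|\hat{\vz}-\vz^\star\|^2 \leq \epsilon$ with high probability whenever $n \geq C(\sigma_f^2/\epsilon^2)\,\omega(\tC)^2$. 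Transferring back, $\|\hat{\vz}-\vz^\star\|^2 = \|\mtx\Sigma^{\frac12}(\hat{\vx}-\vx^\star)\|^2 \geq \sigma_{min}(\mtx\Sigma)\,\|\hat{\vx}-\vx^\star\|^2$, so $\|\hat{\vx}-\vx^\star\|^2 \leq \epsilon/\sigma_{min}(\mtx\Sigma)$, which is exactly the claimed error.

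The real work, and the step I expect to be the main obstacle, is bounding $\omega(\tC)^2$ in terms of the already-controlled $\omega(\mathcal{C})^2$ from Lemma~\ref{lem:mw}, because the change of variables pushes the covariance \emph{inside} the Gaussian expectation and Lemma~\ref{lem:mw} was proved for isotropic $\vg$. My approach would be: for $\vz\in\tC$, positive homogeneity of the norm $h$ (Lemma~\ref{isnorm}) shows $\vx' := \sigma_{min}(\mtx\Sigma^{\frac12})\,\mtx\Sigma^{-\frac12}\vz \in \mathcal{C}$, so $\sup_{\vz\in\tC}\langle\vz,\vg\rangle \leq \sigma_{min}(\mtx\Sigma^{\frac12})^{-1}\sup_{\vx'\in\mathcal{C}}\langle\vx',\mtx\Sigma^{\frac12}\vg\rangle$. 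The surviving quantity $\E_{\vg}\big[\sup_{\vx'\in\mathcal{C}}\langle\vx',\mtx\Sigma^{\frac12}\vg\rangle\big]$ is the expected supremum of the Gaussian process $\vx'\mapsto\langle\vx',\mtx\Sigma^{\frac12}\vg\rangle$, whose increments obey $\E[(\langle\vx'-\vy',\mtx\Sigma^{\frac12}\vg\rangle)^2] = (\vx'-\vy')^T\mtx\Sigma(\vx'-\vy') \leq \sigma_{max}(\mtx\Sigma)\|\vx'-\vy'\|^2$ and thus are dominated by those of the isotropic process $\sqrt{\sigma_{max}(\mtx\Sigma)}\,\langle\vx',\vg\rangle$. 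Sudakov--Fernique then bounds the expected supremum by $\sqrt{\sigma_{max}(\mtx\Sigma)}\,\omega(\mathcal{C})$, giving $\omega(\tC) \leq \sqrt{\kappa(\mtx\Sigma)}\,\omega(\mathcal{C})$ and hence $\omega(\tC)^2 \leq \kappa(\mtx\Sigma)\,\omega(\mathcal{C})^2$.

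Substituting the estimate $\omega(\mathcal{C})^2 \leq Ck\min\{\log K + L,\, l\log p\}$ from Lemma~\ref{lem:mw} into the measurement requirement $n \geq C(\sigma_f^2/\epsilon^2)\omega(\tC)^2$ produces exactly $n \geq C(\sigma_f^2/\epsilon^2)\,\kappa(\mtx\Sigma)\,k\min\{\log K + L,\, l\log p\}$, completing the argument. A more computational alternative to the Sudakov--Fernique step would be to estimate $\E[\max_{G}\|(\mtx\Sigma^{\frac12}\vg)_G\|_2]$ directly through concentration of the correlated $\chi^2$-type variables $\|(\mtx\Sigma^{\frac12}\vg)_G\|_2^2$, but this is messier and tends to give a less transparent dependence on $\mtx\Sigma$ than the clean $\kappa(\mtx\Sigma)$ factor obtained above.
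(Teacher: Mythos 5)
Your proposal is correct, and it reaches the stated bound by a genuinely different route for the key step. The skeleton is the same as the paper's: whiten the design, change variables to $\vz = \mtx\Sigma^{\frac12}\vx$, observe that the program becomes an isotropic instance of (\ref{optgen}) over the image set $\tC = \mtx\Sigma^{\frac12}\mathcal{C}_{corr}$, apply Theorem~\ref{thmplan}, and transfer the error back via $\sigma_{min}(\mtx\Sigma)\|\hat{\vx}-\vx^\star\|^2 \leq \|\mtx\Sigma^{\frac12}(\hat{\vx}-\vx^\star)\|^2$. Where you diverge is in bounding $\omega(\tC)$. The paper reruns the duality argument of Lemma~\ref{lem:mw} directly on the transformed set: it reduces the supremum to $\max_{G\in\G}\|\mtx\Sigma_G^{\frac12}\vg\|$, controls each group term through the per-group covariance and the $\chi^2$ bound of Lemma~\ref{lem:chisq}, and then invokes Lemma~\ref{singvalbound} to replace $\max_{G\in\G}\sigma_{max}(\mtx\Sigma_G)$ by $\sigma_{max}(\mtx\Sigma)$. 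You instead prove the set inclusion $\tC \subset \sigma_{min}(\mtx\Sigma^{\frac12})^{-1}\,\mtx\Sigma^{\frac12}\mathcal{C}$ (using homogeneity of $h$, Lemma~\ref{isnorm}) and then apply a Sudakov--Fernique comparison to dominate the correlated process $\langle\vx',\mtx\Sigma^{\frac12}\vg\rangle$ by $\sqrt{\sigma_{max}(\mtx\Sigma)}\,\langle\vx',\vg\rangle$, giving $\omega(\tC)^2 \leq \kappa(\mtx\Sigma)\,\omega(\mathcal{C})^2$ and letting you reuse Lemma~\ref{lem:mw} as a black box. Your route is more modular: it handles both terms of the $\min\{\cdot,\cdot\}$ in one stroke, dispenses with Lemma~\ref{singvalbound}, and (a point the paper glosses over) explicitly verifies feasibility of $\vz^\star = \mtx\Sigma^{\frac12}\vx^\star$ in $\tC$ via Lemma~\ref{lemconvrelax}. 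What the paper's more computational route buys is a finer constant: its intermediate bound depends on $\max_{G\in\G}\sigma_{max}(\mtx\Sigma_G)/\sigma_{min}(\mtx\Sigma)$ rather than the global $\kappa(\mtx\Sigma)$, which can be substantially smaller when the groups are small and the per-group covariance blocks are well conditioned---a refinement the paper notes at the end of its proof and which the comparison-based argument cannot recover, since Sudakov--Fernique sees only the worst-case increment $\sigma_{max}(\mtx\Sigma)$.
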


We prove this result in Appendix \ref{app:correlated}. The proof is a straightforward modification of the proof of Theorem \ref{thmmain}.  A similar result along the lines of Theorem \ref{thmnewbound} can also be proved.



\section{Applications and Experiments}
\label{sec:expts}
In this section, we perform experiments on both real and toy data, and show that the function proposed in (\ref{eq:reg}) indeed recovers the kind of sparsity patterns we are interested in in this paper. First, we experiment with some toy data to understand the properties of the function $h(\vx)$ and in turn, the solutions that are yielded from the optimization problem (\ref{optgen}). Here, we take the opportunity to report results on linear regression problems as well. We then present results using two datasets from cognitive neuroscience and computational biology.

\subsection{The SOGlasso for Multitask Learning}
\label{sec:mtl}
The SOG lasso is motivated in part by multitask learning applications. The group lasso is a commonly used tool in multitask learning, and it encourages the same set of features to be selected across all tasks. As mentioned before, we wish to focus on a less restrictive version of multitask learning, where the main idea is to encourage sparsity patterns that are similar, but not identical, across tasks. Such a restriction corresponds to a scenario where the different tasks are related to each other, in that they use similar features, but are not exactly identical. This is accomplished by defining subsets of similar features and searching for solutions that select only a few subsets (common across tasks) and a sparse number of features within each subset (possibly different across tasks). Figure \ref{fig:spatterns} shows an example of the patterns that typically arise in sparse multitask learning applications, along with the one we are interested in. We see that the SOGlasso, with it's ability to select a few groups and only a few non zero coefficients within those groups lends itself well to the scenario we are interested in.

In the multitask learning setting, suppose the features are give by $\mPhi_t$, for tasks $t = \{ 1,2, \ldots, \T \}$, and corresponding sparse vectors $\vx^\star_t \in \R^p$. These vectors can be arranged as columns of a matrix $\mX^\star$. Suppose we are now given $M$ groups $\tilde{\G} = \{ \tilde{G}_1, \tilde{G}_2, \ldots \}$ with maximum size $\tilde{B}$. Note that the groups will now correspond to sets of rows of $\mX^\star$. 

Let $\vx^\star = [\vx_1^{\star T} ~\ \vx_2^{\star T} ~\ \ldots \vx_\T^{\star T}]^T \in \R^{\T p}$, and $\vy  = [\vy_1^T ~\ \vy_2^T ~\ \ldots \vy_\T^T]^T \in \R^{\T n}$. We also define $ \G = \{G_1, G_2 , \ldots, G_M \}$ to be the set of groups defined on $ \R^{\T p} $ formed by aggregating the rows of $\mX$ that were originally in $\tilde{\G}$, so that $\vx$ is composed of groups $G \in \G$, and let the corresponding maximum group size be $B = \T \tilde{B}$. By organizing the coefficients in this fashion, we can reduce the multitask learning problem into the standard form as considered in (\ref{optgen}). Hence, all the results we obtain in this paper can be extended to the multitask learning setting as well. 

\subsubsection{Results on fMRI dataset}
In this experiment, we compared SOGlasso, lasso, standard  multitask Glasso (with each feature grouped across tasks), the overlapping group lasso \citep{jacob} (with the same groups as in SOGlasso) and the Elastic Net \citep{enet} in analysis of the star-plus dataset \citep{mitchellfmri}. 6 subjects made judgements that involved processing 40 sentences and 40 pictures while their brains were scanned in half second intervals using fMRI\footnote{Data and documentation available at http://www.cs.cmu.edu/afs/cs.cmu.edu/project/theo-81/www/}. We retained the 16 time points following each stimulus, yielding 1280 measurements at each voxel. The task is to distinguish, at each point in time, which kind of stimulus a subject was processing.  \citep{mitchellfmri} showed that there exists cross-subject consistency in the cortical regions useful for prediction in this task. Specifically, experts partitioned each dataset into 24 non overlapping regions of interest (ROIs), then reduced the data by discarding all but 7 ROIs and, for each subject, averaging the BOLD response across voxels within each ROI. With the resulting data, the authors showed that a classifier trained on data from 5 participants generalized above chance when applied to data from a 6th--thus proving some degree of consistency across subjects in how the different kinds of information were encoded.

We assessed whether SOGlasso could leverage this cross-individual consistency to aid in the discovery of predictive voxels without requiring expert pre-selection of ROIs, or data reduction, or any alignment of voxels beyond that existing in the raw data. Note that, unlike \citep{mitchellfmri}, we do not aim to learn a solution that generalizes to a withheld subject. Rather, we aim to discover a group sparsity pattern that suggests a similar set of voxels in all subjects, before optimizing a separate solution for each individual. If SOGlasso can exploit cross-individual anatomical similarity from this raw, coarsely-aligned data, it should show reduced cross-validation error relative to the lasso applied separately to each individual. If the solution is sparse within groups and highly variable across individuals, SOGlasso should show reduced cross-validation error relative to Glasso. Finally, if SOGlasso is finding useful cross-individual structure, the features it selects should align at least somewhat with the expert-identified ROIs shown by \citep{mitchellfmri} to carry consistent information.

We trained the 5 classifiers using 4-fold cross validation to select the regularization parameters, considering all available voxels without preselection.  We group regions of $5\times5\times1$ voxels and considered overlapping groups ``shifted" by 2 voxels in the first 2 dimensions.\footnote{The irregular group size compensates for voxels being larger and scanner coverage being smaller in the z-dimension (only 8 slices relative to 64 in the x- and y-dimensions).}  

Figure \ref{errperperson5} shows the prediction error (misclassification rate) of each classifier for every individual subject. SOGlasso shows the smallest error. The substantial gains over lasso indicate that the algorithm is successfully leveraging cross-subject consistency in the location of the informative features, allowing the model to avoid over-fitting individual subject data. We also note that the SOGlasso classifier, despite being trained without any voxel pre-selection, averaging, or alginment, performed comparably to the best-performing classifier reported by  \cite{mitchellfmri}, which was trained on features average over 7 expert pre-selected ROIs 

To assess how well the clusters selected by SOGlasso align with the anatomical regions thought a-priori to be involved in sentence and picture representation, we calculated the proportion of selected voxels falling within the 7 ROIs identified by \citep{mitchellfmri} as relevant to the classification task (Table \ref{tabroi}). For SOGlasso an average of 61.9\% of identified voxels fell within these ROIs, significantly more than for lasso, group lasso (with or without overlap) and the elastic net. The overlapping group lasso, despite returning a very large number or predictors, hardly overlaps with the regions of interest to cognitive neuroscientists. The lasso and the elastic net make use of the fact that a separate classifier can be trained for each subject, but even in this case, the overlap with the regions of interest is low. The group lasso also fares badly in this regard, since the same voxels are forced to be selected across individuals, and this means that the regions of interest which will be misaligned across subjects will not in general be selected for each subject. All these drawbacks are circumvented by the SOGlasso. This shows that even without expert knowledge about the relevant regions of interest, our method partially succeeds in isolating the voxels that play a part in the classification task.
\begin{table}
\centering
\begin{tabular}{|| c |  c | c |c ||}
\hline
\textbf{Method} & \textbf{Avg. Overlap with ROI $\%$} \\
\hline
OGlasso &  27.18 \\
\hline
ENet &  43.46 \\
\hline
Lasso &  41.51\\
\hline
Glasso &  47.43 \\
\hline
SOGlasso &  61.90 \\
\hline
\end{tabular}  
\caption{Mean Sparsity levels of the methods considered, and the average overlap with the precomputed ROIs in \citep{mitchellfmri}}
\label{tabroi}
\end{table}

\begin{figure}[!t]
\centering
\includegraphics[width = 80mm, height = 60mm]{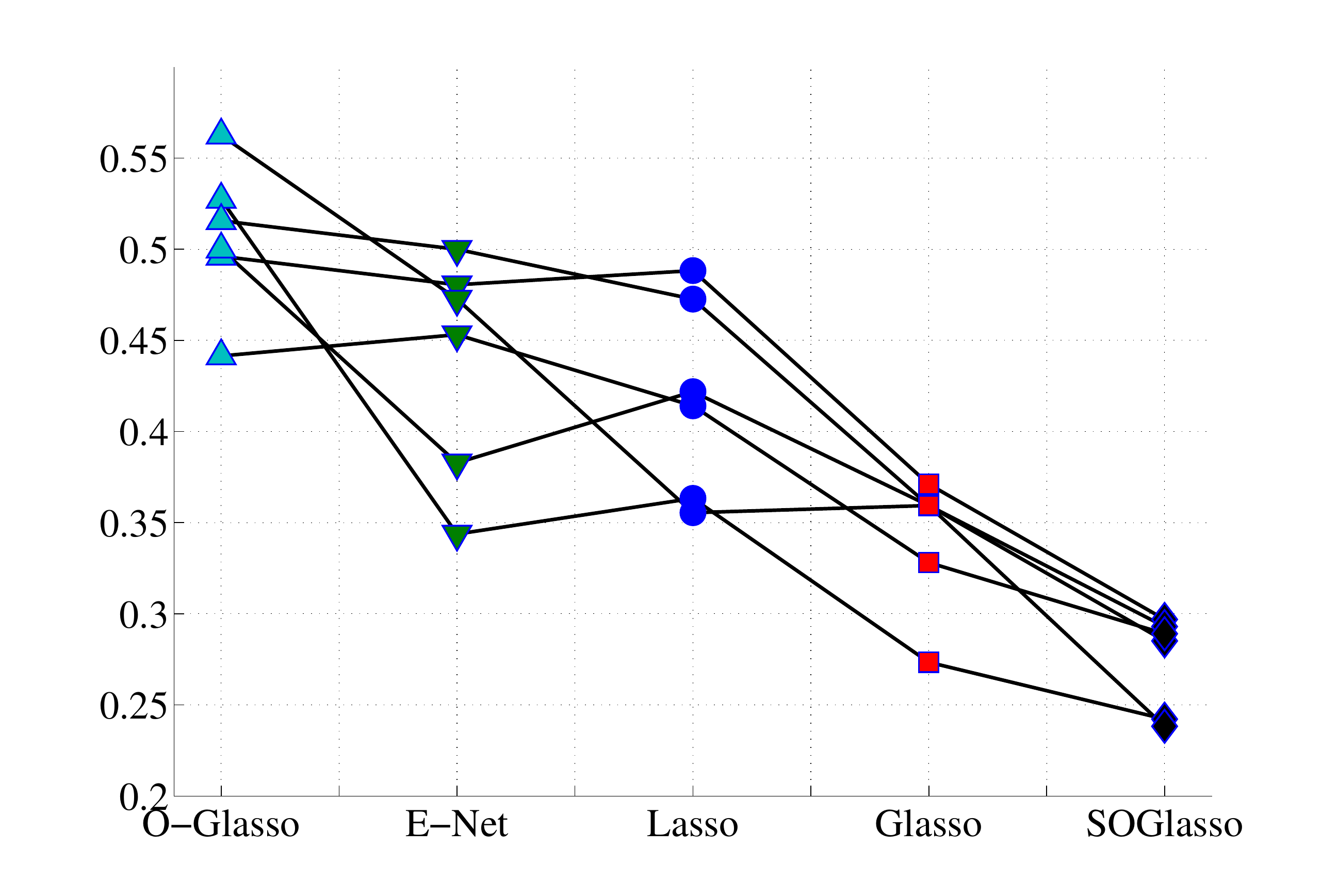}
\caption{Misclassification error on a hold out set for different methods, on a per subject basis. Each solid line connects the different errors obtained for a particular subject in the dataset. }
\label{errperperson5}
 \end{figure}

We make the following observations from Figure \ref{errperperson5} and Figure \ref{sparsity_patterns}
\begin{itemize}
\item The overlapping group lasso \citep{jacob} is ill suited for this problem. This is natural, since the premise is that the brains of different subjects can only be crudely aligned, and the overlapping group lasso will force the same voxel to be selected across all individuals. It will also force all the voxels in a group to be selected, which is again undesirable from our perspective. This leads to a high number of voxels selected, and a high error. 
\item The elastic net \citep{enet} treats each subject independently, and hence does not leverage the inter-subject similarity that we know exists across brains. The fact that all correlated voxels are also picked, coupled with a highly noisy signal means that a large number of voxels are selected, and this not only makes the result hard to interpret, but also leads to a large generalization error. 
\item The lasso \citep{tibshirani} is similar to the elastic net in that it does not leverage the inter subject similarities. At the same time, it enforces sparsity in the solutions, and hence a fewer number of voxels are selected across individuals. It allows any task correlated voxel to be selected, regardless of its spatial location, and that leads to a highly distributed sparsity pattern (Figure \ref{lassomontage}). It leads to a higher cross-validation error, indicating that the ungrouped voxels are inferior predictors. Like the elastic net, this leads to a  poor generalization error (Figure \ref{errperperson5}). The distributed sparsity pattern, low overlap with predetermined Regions of Interest, and the high error on the hold out set is what we believe makes the lasso a suboptimal procedure to use.
\item The group lasso \citep{lounicimtl} groups a single voxel across individuals. This allows for taking into account the similarities between subjects, but not the minor differences across subjects. Like the overlapping group lasso, if a voxel is selected for one person, the same voxel is forced to be selected for all people. This means, if a voxel encodes picture or sentence in a particular subject, then the same voxel is forced to be selected across subjects, and can arbitrarily encode picture or sentence. This gives rise to a purple haze in Figure \ref{glassomontage}, and makes the result hard to interpret. The purple haze manifests itself due to the large number of ambiguous voxels in Figure \ref{percentages}.
\item Finally, the SOGlasso as we have argued helps in accounting for both the similarities and the differences across subjects. This leads to the learning of a code that is at the same time very sparse and hence interpretable, and leads to an error on the test set that is the best among the different methods considered. The SOGlasso (Figure \ref{soslassomontage}) overcomes the drawbacks of lasso and Glasso by allowing different voxels to be selected per group. This gives rise to a spatially clustered sparsity pattern, while at the same time selecting a negligible amount of voxels that encode both picture and sentences (Figure \ref{percentages}). Also, the resulting sparsity pattern has a larger overlap with the ROI's than other methods considered. 
\end{itemize}

\begin{figure}
\subfigure[Lasso]{
\includegraphics[width = 70mm, height = 50mm]{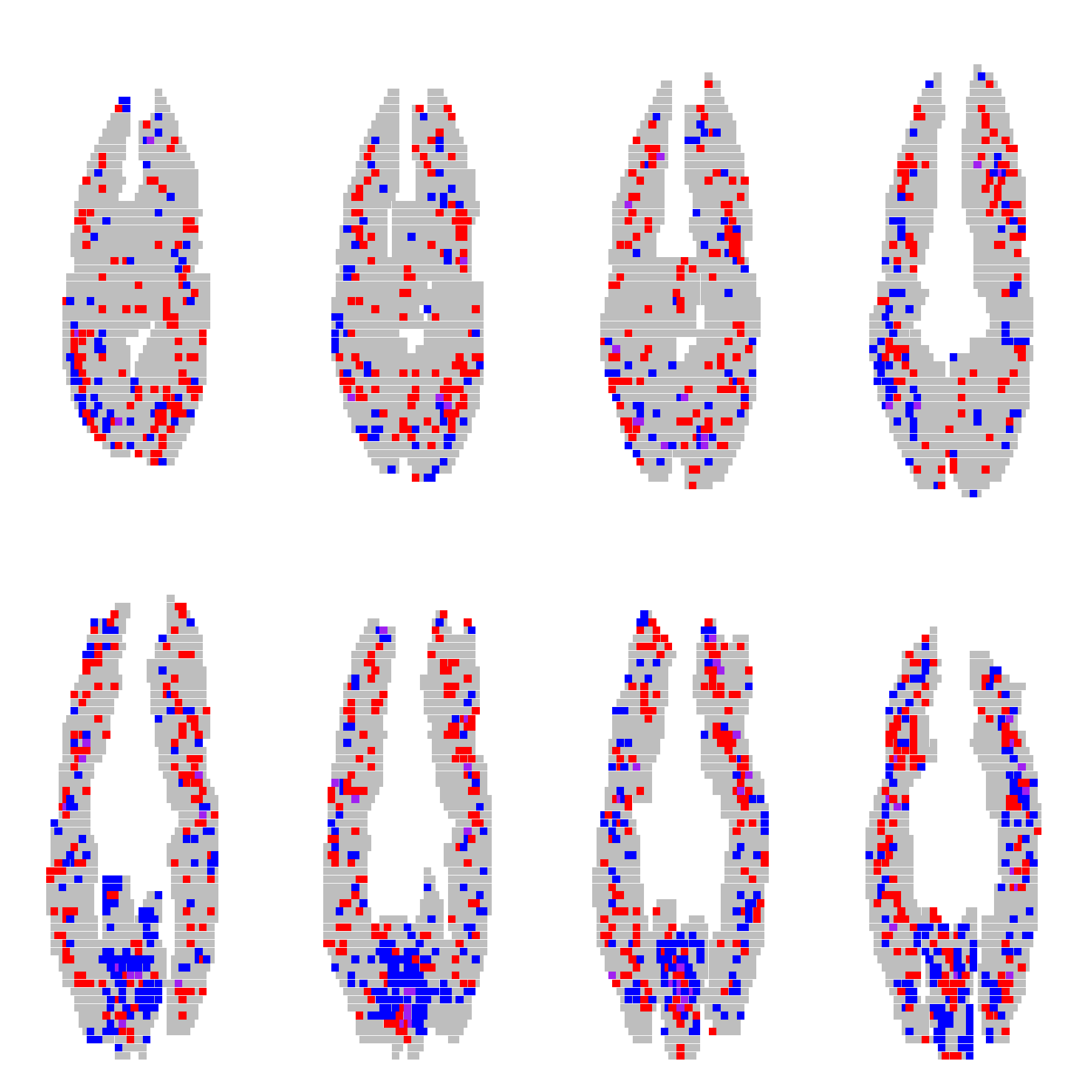}
\label{lassomontage}
}
\subfigure[Group Lasso]{
\includegraphics[width = 70mm, height = 50mm]{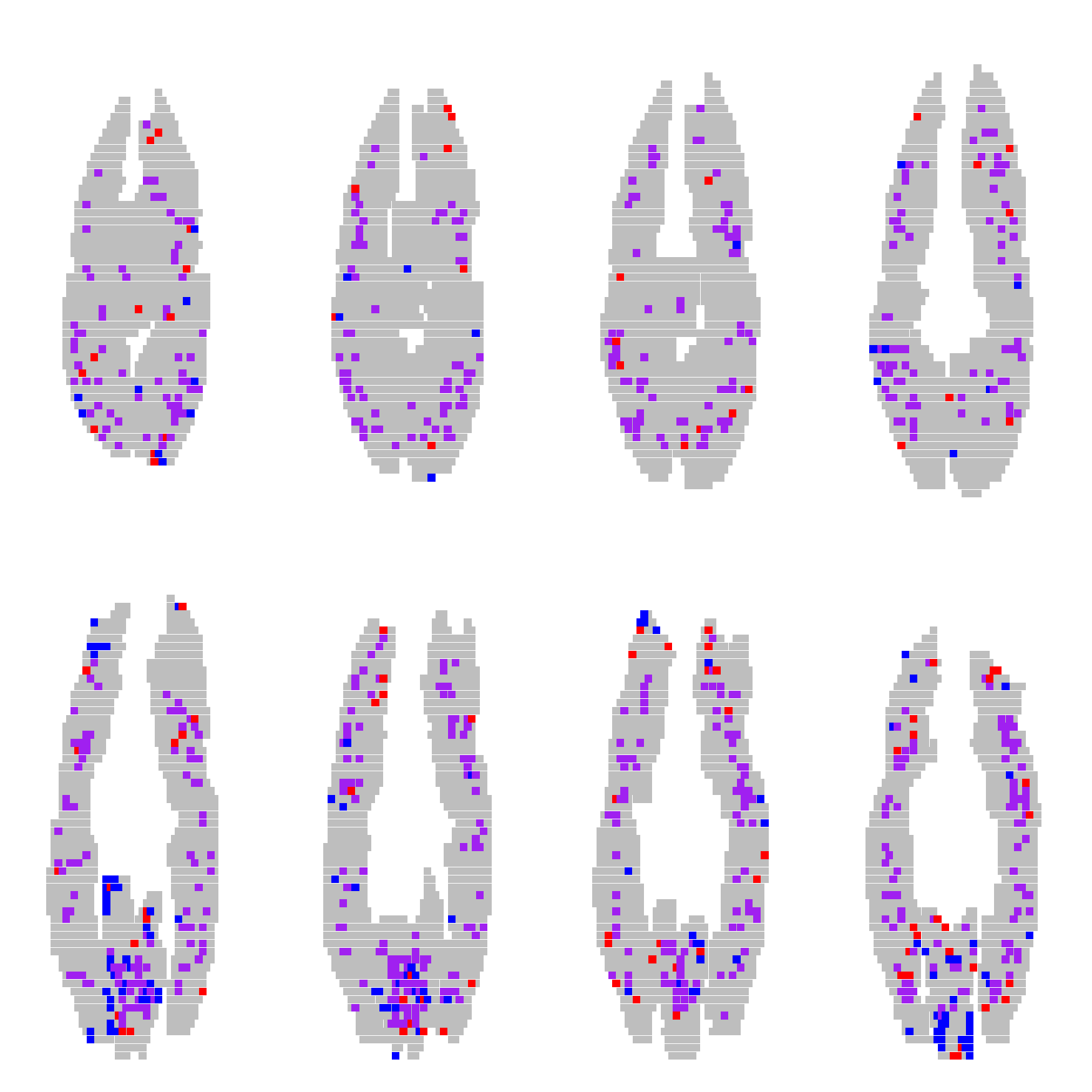}
\label{glassomontage}
}
\subfigure[SOG Lasso]{
\includegraphics[width = 70mm, height = 50mm]{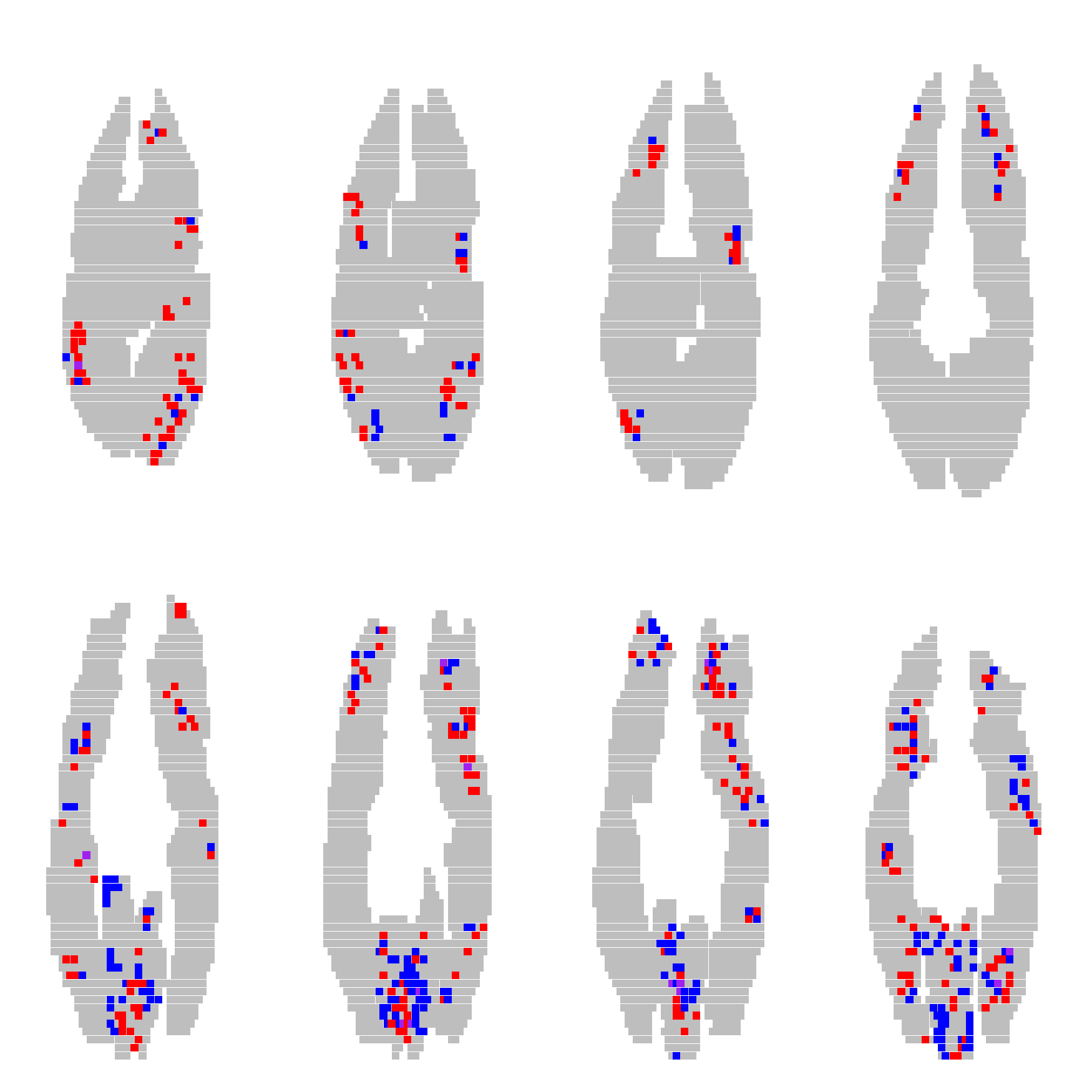}
\label{soslassomontage}
}
\subfigure[Voxel Encodings ($\%$)]{
\includegraphics[width = 80mm, height = 45mm]{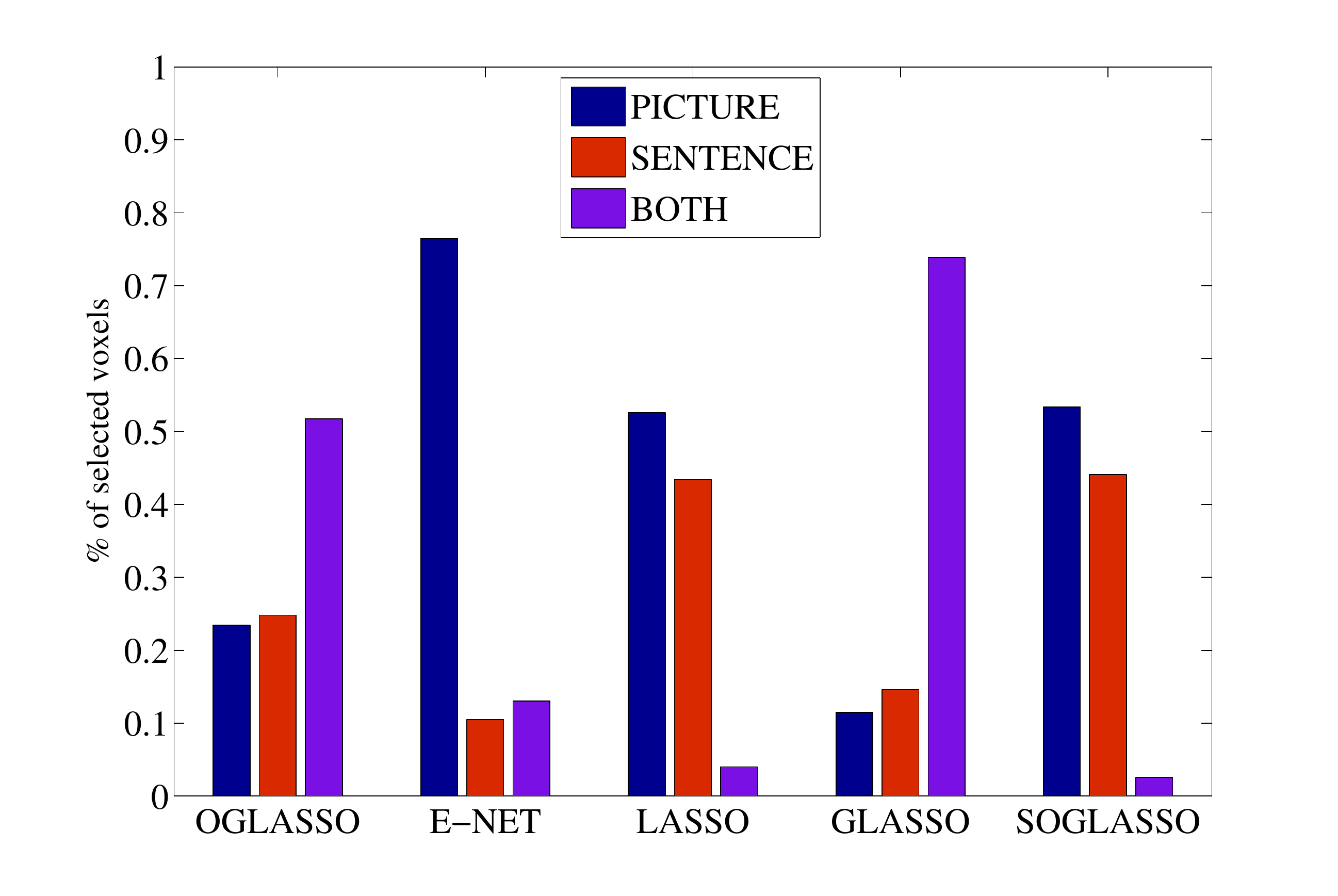}
\label{percentages}
}
\caption{[Best seen in color]. Aggregated sparsity patterns across subjects per brain slice. All the voxels selected across subjects in each slice are colored in red, blue or purple. Red indicates voxels that exhibit a picture response in at least one subject and never exhibit a sentence response. Blue indicates the opposite. Purple indicates voxel that exhibited a a picture response in at least one subject and a sentence response in at least one more subject. \subref{percentages} shows the percentage of selected voxels that encode picture, sentence or both. }
\label{sparsity_patterns}
\end{figure}

\subsection{Toy Data, Linear Regression}
Although not the primary focus of this paper, we show that the method we propose can also be applied to the linear regression setting. To this end, we consider simulated data and a multitask linear regression setting, and look to recover the coefficient matrix. We also use the simulated data to study the properties of the function we propose in (\ref{eq:reg}). 

The toy data is generated as follows: we consider $\T = 20$ tasks, and consider overlapping groups of size $B = 6$. The groups are defined so that neighboring groups overlap ($G_1 = \{1,2, \ldots, 6 \}$, $G_2 = \{ 5,6,\ldots,10 \}$, $G_3 = \{9,10,\ldots,14  \}$, $\dots$). We consider a case with $M = 100$ groups, 
We set $k = 10$ groups to be active. We vary the sparsity level of the active groups $\alpha$ and obtain $m = 100$ Gaussian linear measurements corrupted with Additive White Gaussian Noise of standard deviation $\sigma = 0.1$. We repeat this procedure 100 times and average the results. To generate the coefficient matrices $\mX^\star$, we select $k$ groups at random, and within the active groups, only retain  fraction $\alpha$ of the coefficients, again at random. The retained locations are then populated with uniform $[-1 , 1]$ random variables. 

The regularization parameters were clairvoyantly picked to minimize the Mean Squared Error (MSE) over a range of parameter values. The results of applying lasso, standard latent group lasso \citep{jacob}, Group lasso where each group corresponds to a row of the sparse matrix, \citep{lounicimtl} and our SOGlasso to these data are plotted in Figures \ref{varyalphagauss}, varying $\alpha$. 
\begin{figure}[!h]
\centering
\subfigure[Varying $\alpha$]{
\includegraphics[width = 65mm, height = 50mm]{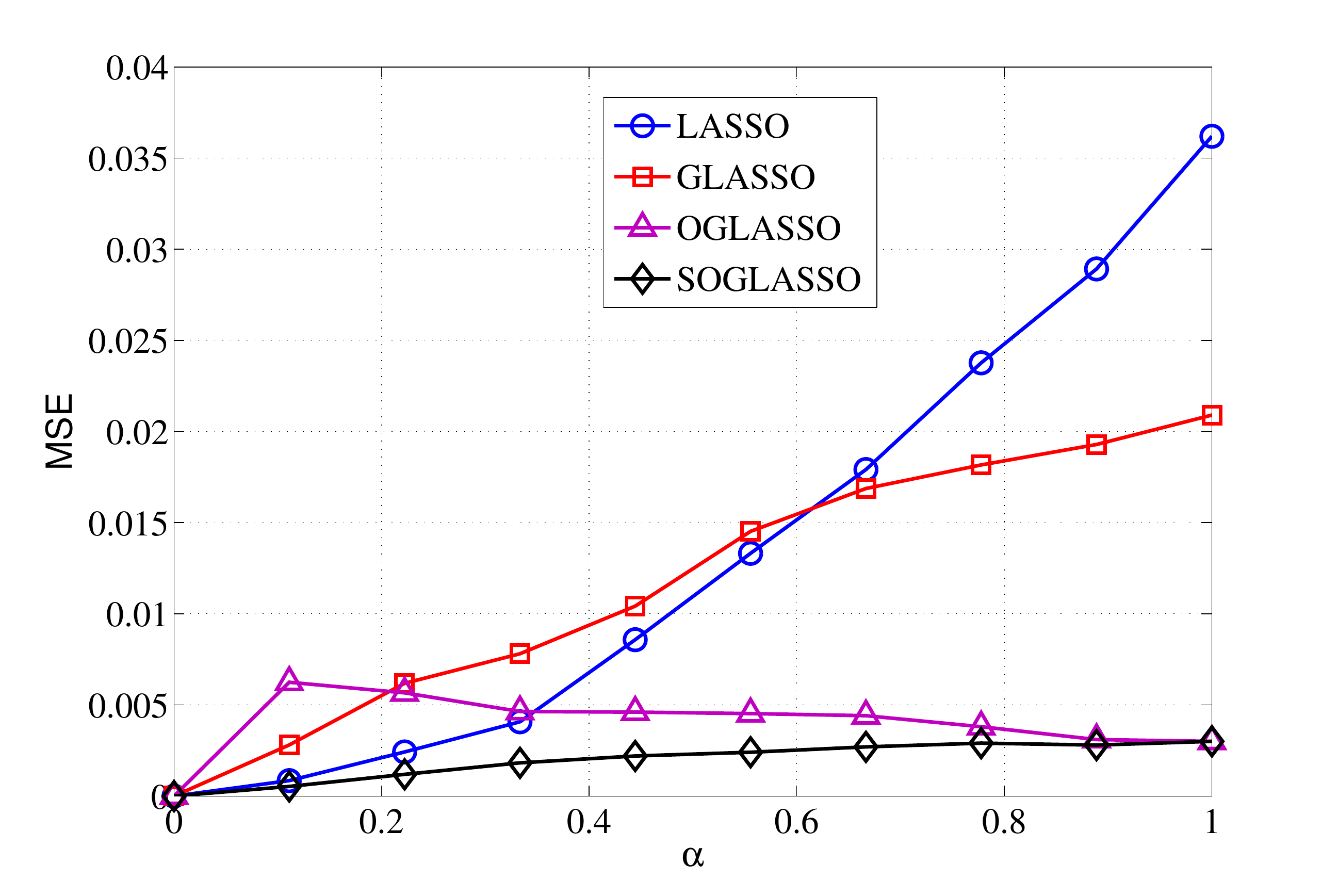}
\label{varyalphagauss}}
\subfigure[Sample pattern]{
\includegraphics[width =27mm, height = 44mm]{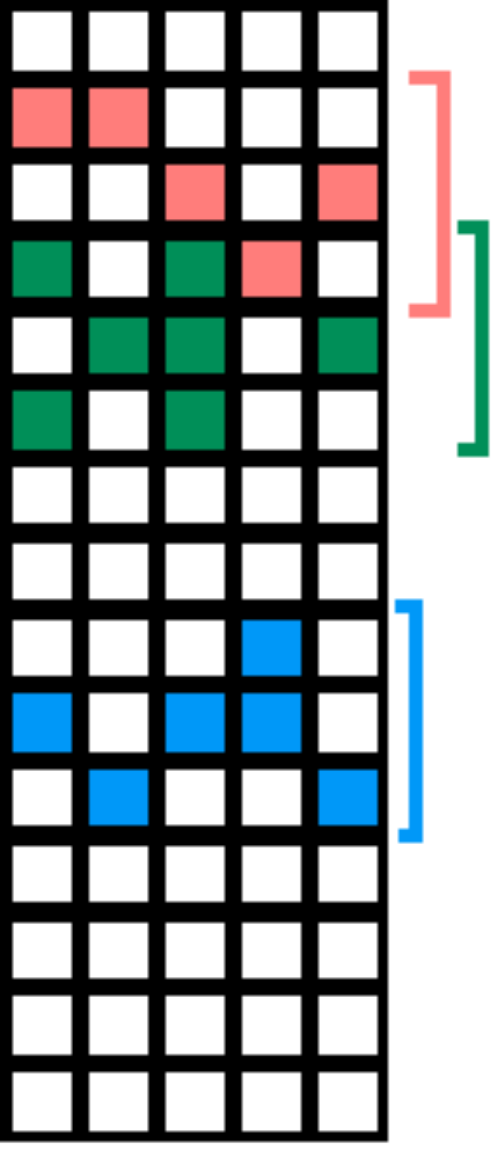}
\label{toy}}
\caption{Figure \subref{varyalphagauss} shows the result of varying $\alpha$. The SOGlasso accounts for both inter and intra group sparsity, and hence performs the best. The Glasso achieves good performance only when the active groups are non sparse. Figure \subref{toy} shows a toy sparsity pattern, with different colors and brackets denoting different overlapping groups}
\label{fig:toy}
\end{figure}

Figure \ref{varyalphagauss} shows that, as the sparsity within the active group reduces (i.e. the active groups become more dense), the overlapping group lasso performs progressively better. This is because the overlapping group lasso does not account for sparsity within groups, and hence the resulting solutions are far from the true solutions for small values of $\alpha$. The SOGlasso however does take this into account, and hence has a lower error when the active groups are sparse. Note that as $\alpha \rightarrow 1$, the SOGlasso approaches O-Glasso \citep{jacob}. The Lasso \citep{tibshirani} does not account for group structure at all and performs poorly when $\alpha$ is large, whereas the Group lasso \citep{lounicimtl} does not account for overlapping groups, and hence performs worse than O-Glasso and SOGlasso.

\subsection{SOGlasso for Gene Selection}
As explained in the introduction, another motivating application for the SOGlasso arises in computational biology, where one needs to predict whether a particular breast cancer tumor will lead to metastasis or not, from gene expression profiles. We used the breast cancer dataset compiled by \citep{breast_data} and grouped the genes into pathways as in \citep{pathway}. To make the dataset balanced, we perform a 3-way replication of one of the classes as in \citep{jacob}, and also restrict our analysis to genes that are atleast in one pathway. Again as in \citep{jacob}, we ensure that all the replicates are in the same fold for cross validation. We do not perform any preprocessing of the data, other than the replication to balance the dataset. We compared our method to the standard lasso, and the overlapping group lasso. The standard group lasso \citep{yuanlin} is ill-suited for this experiment, since the groups overlap and the sparsity pattern we expect is a union of groups, and it has been shown that the group lasso method will not recover the signal in such cases. 

\begin{table}[!h]
\centering
 \begin{tabular}{ || c |c || }
 \hline
 \textbf{Method} & \textbf{Misclassification Rate } \\
  \hline                       
  lasso & 0.42   \\
  \hline
  OGlasso \citep{jacob} & 0.39 \\
  \hline  
  SOGlasso &  0.33 \\
  \hline
\end{tabular}
 \captionof{table}{Misclassification Rate on the test set for the different methods considered. The SOGlasso obtained better error rates as compared to the other methods.}
   \label{bcancer}
\end{table} 

We trained a model using 4-fold cross validation on 80$\%$ of the data, and used the remaining 20$\%$ as a final test set. Table \ref{bcancer} shows the results obtained. We see that the SOGlasso penalty leads to lower classification errors as compared to the lasso or the latent group lasso. The errors reported are the ones obtained on the final (held out) test set.  We refrain from performing simple ridge regression \citep{ridge} since the data is high dimensional, and by not enforcing sparsity in the solution, the result will be un-interpretable. 


\section{Conclusions}
\label{sec:conc}
In this paper, we introduced a function that can be used to constrain solutions of high dimensional feature selection problems so that they display both within and across group sparsity. We generalized the sparse group lasso to cases with arbitrary overlap, and proved consistency results in a classification setting. Our results unify the results between the lasso and the group lasso (with or without overlap), and reduce to those cases as special cases. We also outlined the use of the function in multitask fMRI and computational biology problems. Moreover, we make minimal assumptions on the model the generates data, and hence our results can be seen in a very general light.

From an algorithmic standpoint, when the groups overlap a lot, the replication procedure used in this paper might not be memory efficient. Future work involves designing algorithms that preclude replication, while at the same time allowing for the SOG- sparsity patterns to be generated.

From a cognitive neuroscience point of view, future work involves grouping the voxels in more intelligent ways. Our method to group spatially co-located voxels yields results that are significantly better than traditional lasso-based methods, but it remains to be seen whether there are better motivated ways to group them. For example, one might consider grouping voxels based on functional connectivities, or take into account the geodesic distance on the brain surface. 


\appendix

\section{}
\label{app:mwnonconv}
We bound the mean width of the set in (\ref{nonconvset}),

\begin{proof}[Proof of Lemma \ref{mwnonconvset}]
Since $\| \vx \|_2 \leq 1$, 
\begin{equation}
\label{max1}
\max \vg^T\vx = \max_{S \in \mathcal{S}} \| \vg_S \|_2, 
\end{equation}
where $\mathcal{S}$ is the set of indices given by
\[
\mathcal{S} = \{ S_i \subset \{1,2, \cdots, p \} : ~\ j \in S_i ~\textbf{if} ~\vx \in \mathcal{{C}}_{nc}, ~\ \vx_j \neq 0  \} . 
\]
The cardinality of $\mathcal{S}$ is bounded as
\begin{align}
\notag
| \mathcal{S} | &= \binom{K}{k} \binom{kL}{k l} \\
\notag
&\leq \left( \frac{eK}{k} \right)^k \left( \frac{eL}{l} \right)^{kl} \\
\label{sbound}
\Rightarrow \log(|\mathcal{S}|)&\leq k \left[ \log{\left(\frac{K}{k}\right)} + l \log{\left(\frac{L}{l}\right)} +2 \right] . 
\end{align}
From (\ref{max1}) we have 
\begin{align}
\notag
\mathbb{E} [\max \vg^T \vx]^2 &= \mathbb{E} \max_{S \in \mathcal{S}} \| \vg_S \|^2 \\
\notag
&\leq \left( \sqrt{\log{| \mathcal{S} |}} + \sqrt{|S|} \right)^2  \\
\label{mwnonconv}
&\leq C k \left[ \log{\left( \frac{K}{k} \right)} + l \log{\left( \frac{L}{l} \right)} + l + 2 \right] , 
\end{align}
where the first inequality follows from Lemma 3.2 in \citep{nraistats} and the last inequality follows from (\ref{sbound}). 
\end{proof}


\section{}
\label{app:correlated}
We prove the result in Corollary \ref{correlated}. Before we do so, we state and prove a Lemma

\begin{lemma}
\label{singvalbound}
Suppose $\mtx{A} \in \R^{s \times t}$, and let $\mtx{A}_G \in \R^{|G| \times t}$ be the sub matrix of $\mtx{A}$ formed by retaining the rows indexed by group $G \in \G$. Suppose $\sigma_{max}(\mtx{A})$ is the maximum singular value of $\mtx{A}$, and similarly for $\mtx{A}_G$. Then
\[
\sigma_{max}(\mtx{A}) \geq \sigma_{max}(\mtx{A}_G) ~\ \forall G \in \G . 
\]
\end{lemma}
\begin{proof}
Consider an arbitrary vector $\vx \in \R^p$, and let $\bar{G}$ be the indices that are to indexed by $G$. We then have the following:
\begin{align}
\notag
\| \mtx{A}\vx \|^2 &= \left\| \left[ \begin{array} {c} \mtx{A}_G \vx \\ \mtx{A}_{\bar{G}} \vx \end{array} \right] \right\|^2 \\
\notag
&= \| \mtx{A}_G \vx \|^2 + \| \mtx{A}_{\bar{G}} \vx \|^2 \\
\label{lemint}
\Rightarrow \| \mtx{A} \vx\|^2 &\geq \| \mtx{A}_G \vx \|^2
\end{align}
We therefore have
\begin{align}
\notag
\sigma_{max}(\mtx{A}) &= \sup_{\| \vx \| = 1} \| \mtx{A} \vx \| \\
\notag
&\geq \sup_{\| \vx \| = 1} \| \mtx{A}_G \vx \| \\
\notag
&= \sigma_{max}(A_G)
\end{align}
where the inequality follows from (\ref{lemint}).
\end{proof}

We now proceed to prove Corollary \ref{correlated}. 
\begin{proof}
Since the entries of the data matrices are correlated Gaussians, the inner products in the objective function of the optimization problem (\ref{optgen}) can be written as 
\[
\langle \mPhi_i , \vx \rangle =  \langle \mtx\Sigma^{\frac12} \mPhi'_i , \vx \rangle = \langle  \mPhi'_i , \mtx\Sigma^{\frac12} \vx \rangle, 
\]
where $\mPhi'_i \sim \N(\vct{0}, \mtx{I})$. Hence, we can replace $\vx$ in our result in Theorem \ref{thmmain} by $\mtx\Sigma^{\frac12} \vx$, and make appropriate changes to the constraint set. 

We then see that the optimization problem we need to solve is
\[
\hat{\vx} = \arg \min_{\vx} - \sum_{i=1}^n \vy_i \langle \mPhi'_i, \mtx\Sigma^{\frac12} \vx \rangle ~\textbf{ s.t. } ~\vx \in \mathcal{C}_{corr} . 
\]
Defining $\vz  = \mtx\Sigma^{\frac12} \vx$, we can equivalently write the above optimization as
\begin{equation}
\label{repz}
\hat{\vz} = \arg \min - \sum_{i =1}^n \vy_i \langle \mPhi'_i, \vz \rangle ~\textbf{ s.t. } ~\vz \in \mtx\Sigma^{\frac12} \mathcal{C}_{corr}, 
\end{equation}
where we define $\Sigma^{\frac12} \mathcal{C}$ to be the set $\mathcal{C}v$,  with each element multiplied by $\mtx\Sigma^{\frac12}$.  We see that (\ref{repz}) is of the same form as (\ref{optgen}), with the constraint set ``scaled" by the matrix $\mtx\Sigma^{\frac12}$. We now need to bound the mean width of the set $\mtx\Sigma^{\frac12}\mathcal{C}_{corr}$.

We then have
\begin{align*}
\max_{\vz \in \mtx\Sigma^{\frac12}\mathcal{C}_{corr}} \vg^T\vz &= \max_{\vx \in \mathcal{C}} \vg^T\mtx\Sigma^{\frac12}\vx  \\
&= \max_{\vx \in \mathcal{C}_{corr}} (\mtx\Sigma^{\frac12} \vg)^T\vx \\
&\leq \frac{\sqrt{k} (1 + \lambda_1 )}{\sigma_{min}(\mtx\Sigma^{\frac12}) } \max_{G \in \G} \| \mtx\Sigma_G^{\frac12}\vg \|
\end{align*}

where the final inequality follows from the exact same arguments used to obtain (\ref{useforcorr}). By $\mtx\Sigma_G^{\frac12}$, we mean the $|G| \times p$ sub matrix of $\mtx\Sigma^{\frac12}$ obtained by retaining rows indexed by group $G$. 

To compute the mean width, we need to find $\E[\max_{G \in G} \| \mtx\Sigma_G^{\frac12}\vg  \|^2 ]$. Now, since $\vg \sim \N(0, \mtx{I})$, $\mtx\Sigma_G^{\frac12}\vg \sim \N(0, \mtx\Sigma_G^{\frac12} (\mtx\Sigma_G^{\frac12})^T)$. Hence, $ \| \mtx\Sigma_G^{\frac12}\vg \|^2 \leq  \sigma_{max}(\mtx\Sigma_G^{\frac12}) \| \vct{c} \|^2 $ where $\vct{c} \sim \N(0,\mtx{I}_{|G|})$. $\| \vct{c} \|^2 \sim \chi^2_{|G|}$, and we can again use Lemma \ref{lem:chisq} to obtain the following bound for the mean width:

\begin{align}
\notag
\omega(\mtx\Sigma^{\frac12} \mathcal{C})^2 &\leq \frac{k (1 + \lambda_1 )^2}{\sigma_{min}(\mtx\Sigma^{\frac12})^2 } (\sqrt{2 \log(K)} + \sqrt{L})^2  \left[ \max_{G \in \G} \sigma_{\max}(\mtx\Sigma_G) \right] \\
\notag
&\leq \sigma_{max}(\mtx\Sigma) \frac{k (1 + \lambda_1 )^2}{\sigma_{min}(\mtx\Sigma) } (\sqrt{2 \log(K)} + \sqrt{L})^2 \\
\label{corrmw1}
&\leq C \kappa(\mtx\Sigma) k (\log(K) + L)
\end{align}

Similarly, following a procedure similar to that used to prove Theorem \ref{thmmain}, we obtain
\beq
\label{corrmw}
\omega(\mtx\Sigma^{\frac12} \mathcal{C})^2 \leq C \kappa(\mtx\Sigma)k \min\{\log(K) + L, l \log(p) \}, 
\eeq

where the last inequality follows from Lemma \ref{singvalbound}.

We then have that so long as the number of measurements $n$ is larger than $C \frac{\sigma^2}{\epsilon^2}$ times the quantity in (\ref{corrmw}), 
\[
\left\| \hat{\vz} - \vz^\star \right\|^2 = \left\| \mtx\Sigma^{\frac12}\hat{\vx} - \mtx\Sigma^{\frac12}\vx^\star \right\|^2 \leq \epsilon . 
\]

However, note that
\begin{equation}
\label{sigbound}
\sigma_{min}(\mtx\Sigma) \| \hat{\vx} - \vx^\star \|^2 \leq \left\| \mtx\Sigma^{\frac12}\hat{\vx} - \mtx\Sigma^{\frac12}\vx^\star \right\|^2 . 
\end{equation}

(\ref{corrmw}) and (\ref{sigbound}) combine to give the final result. Note that for the sake of keeping the exposition simple, we have used Lemma \ref{singvalbound} and bounded the number of sufffieicnt measurements  as a function of the maximum singular value of $\mtx\Sigma$. However, the number of measurements  only depends on $\max_{G \in \G} \sigma_{max}(\mtx\Sigma_G)$, which is typically much lesser. 

\end{proof}

\bibliographystyle{plainnat}
\bibliography{SOSLASSO_NR3}

\end{document}